\title{On Neural Network approximation of ideal adversarial attack and convergence of adversarial training
}
\author{
  Rajdeep Haldar, Qifan Song \\
  Department of Statistics \\
  Purdue University \\
  \texttt{\{rhaldar, qfsong\}@purdue.edu} 
}
\newtheorem{theorem}{Theorem}[section]
\newtheorem{lemma}[theorem]{Lemma}
\newtheorem{corollary}[theorem]{Corollary}
\theoremstyle{definition}
\newtheorem{definition}{Definition}[section]
\theoremstyle{claim}
\theoremstyle{remark}
\newtheorem*{remark}{Remark}
\begin{document}

\maketitle

\newcommand\hol[2]{\mathcal{H}^{#1}_{#2}} 
\newcommand\cubebd{\mathcal{I}_{\partial}} 
\newcommand{\logit}[1]{\frac{1}{1+e^{-#1}}}
\newcommand{\bb}[1]{\mathbbm{#1}} 
\newcommand{\cl}[1]{\mathcal{#1}} 
\newcommand{\pfc}{\cl{F}^p_{\delta}} 
\newcommand{\bpfc}{\tilde{\cl{C}}} 
\newcommand{\sign}{\text{sign}} 
\newcommand{\loss}{\cl{L}(f,\lambda,\cl{Z})}
\newcommand{\gf}{T^{gf}} 
\newcommand{\pgf}[1]{T^{pgf}_{#1}} 
\newcommand{\pgfd}[1]{\bar{T}^{pgf}_{#1}} 
\newcommand{\adv}{x_{adv}}
\newcommand{\bs}[1]{\boldsymbol{#1}} 
\newcommand{\interior}[1]{
  {\kern0pt#1}^{\mathrm{o}}%
}
\newcommand{\phol}{\bar{\cl{H}}(\omega^*,D,\cl{X})} 
\newcommand{\onephol}{\bar{\cl{H}}(\omega,\mathcal{X})} 
\newcommand{\card}{\text{card}}
\newcommand{\qsong}[1]{\textcolor{red}{#1}}
\newcommand{\rhaldar}[1]{\textcolor{blue}{#1}}
\algnewcommand{\Inputs}[1]{%
  \State \textbf{Inputs:}
  \Statex \hspace*{\algorithmicindent}\parbox[t]{.8\linewidth}{\raggedright #1}
}
\algnewcommand{\Initialize}[1]{%
  \State \textbf{Initialize:}
  \Statex \hspace*{\algorithmicindent}\parbox[t]{.8\linewidth}{\raggedright #1}
}

\begin{abstract}
Adversarial attacks are usually expressed in terms of a gradient-based operation on the input data and model, this results in heavy computations every time an attack is generated. In this work, we solidify the idea of representing adversarial attacks as a trainable function, without further gradient computation. We first motivate that the theoretical best attacks, under proper conditions, can be represented as smooth piece-wise functions (piece-wise H\"older functions). Then we obtain an approximation result of such functions by a neural network. Subsequently, we emulate the ideal attack process by a neural network and reduce the adversarial training to a mathematical game between an attack network and a training model (a defense network). We also obtain convergence rates of adversarial loss in terms of the sample size $n$ for adversarial training in such a setting. 
\end{abstract}


\section{Introduction}
\label{sec: Intro}
Neural networks are universal approximators \cite{Hornik1989MultilayerApproximators,Cybenkot1989MathematicsFunction,Leshno1993MultilayerFunction,Pinkus1999ApproximationNetworks} and have enjoyed immense popularity in the realm of machine learning problems and artificial intelligence applications over the past two decades. Their ability to approximate arbitrary functions with high accuracy has facilitated state-of-the-art performance in nonparametric regression and classification problems. 

Although, neural networks showcase high accuracy and fairly well generalization on the training data and testing data; their testing performance is vulnerable to minor perturbations to the original data \cite{SzegedyIntriguingNetworks}. \cite{Goodfellow2014ExplainingExamples} introduced a gradient-based method to generate bounded perturbations of input data (adversarial examples/attack) which significantly hampers the predictive performance of a given neural network. The adversarial robustness of neural networks has been a topic of importance since then. One popular way to make neural networks robust to such attacks is via training over generated adversarial examples (adversarial training). The robustness of the network will directly correspond to the quality or strength of the attack. 

Over the years a slew of work has been done in generating strong adversarial attacks \cite{Bai2021RecentRobustness}, thereby increasing the robustness of the network models when adversarially trained.
Traditionally, the attack generation process has been some sort of gradient-based operation such as the Fast-gradient \cite{Goodfellow2014ExplainingExamples,DongBoostingMomentum}, Iterative gradient \cite{Kurakin2017ADVERSARIALWORLD}, Projected gradient descent (PGD) \cite{Madry2017TowardsAttacks}, Carlini Wagner Attack \cite{CarliniTowardsNetworks} etc, requiring direct access to the analytic form of the model. Attacks that are generated having direct access to the underlying model are known as \textit{white-box attacks}. Alternatively, one can view these attacks as a function of the input data which results in a perturbation in the original space. If we think of attacks as a function, one can approximate it using a neural network. So, the underlying classification/regression model is a neural network that must learn how to defend itself (robustness) against the attacks generated by an adversary neural network; adversarial training in this context then becomes a game (in a game-theoretic sense) between two neural networks. This perspective of using neural networks to generate attacks is fairly recent and has been explored in \cite{Bose2020AdversarialGames,Xiao2018GeneratingNetworks,Wang2019AT-GAN:Examples,Jaeckle2021GeneratingNetworks}. A benefit of using neural networks to generate attacks is that we learn an attack function after training. Once we obtain this attack function spanning over the input space, we can just use it to generate new adversarial examples without having access to the underlying predictive model. Using neural networks to generate attacks we can exhibit so-called \textit{semi-white box} or \textit{black-box attacks}, which have limited access or no access to the underlying model respectively. Another consideration is the computation costs to generate these attacks; The empirical successful white box attacks like PGD or similar iterated gradient methods usually require recurrent gradient computations for each new sample of data. 

Even though a lot of progress has been made on the algorithmic front from both attack \cite{Jaeckle2021GeneratingNetworks,Gondim-Ribeiro2018AdversarialAutoencoders,Wang2019AT-GAN:Examples,DongBoostingMomentum} and defense \cite{Ghosh2018ResistingAutoencoders,Frosst2018DARCCC:Capsules,Qin2020DeflectingAttacks,Schott2018TowardsMNIST} perspectives; there is a lack of theoretical results to justify such methods.
In this paper, we would like to unify these ideas in a common mathematical framework and motivate adversarial training in the context of an adversarial game (attack-defense) between two neural networks.

In this work, we will prove results for RELU-based neural networks defined as:
\begin{definition}[Deep Neural Networks]\label{def:neuralnets}
Define the class of deep Relu networks $f\in \mathcal{F}_{\sigma}(L, W, w, \kappa, B)$ as parametrized functions of the form:
\[f(x)=A^{(L)} \cdot \sigma\left(A^{(L-1)} \cdots \sigma\left(A^{(1)} x+b^{(1)}\right) \cdots+b^{(L-1) }\right)+b^{(L)},\]
where the $A^{(l)}$'s are the weight matrices and $b^{(l)}$'s are the bias vectors with real-valued elements, and the activation function $\sigma: \mathbb{R}\mapsto \mathbb{R}$ is applied element-wise. $\sigma(x)=\operatorname{ReLU}(x)=\max \{0, x\}$ (rectified linear unit). The network is $L$-layer deep and its width is bounded by $\sup_{l} \operatorname{dim}(b^{(l)})\leq w$. Further, we assume that \[\max_{i, j, l}\left|A^{(l)}_{i j}\right| \leq \kappa,\max_{i, l} |b^{(l)}_i| \leq \kappa, \sum_{l=1}^{L}\left\|A^{(l)}\right\|_{0}+\left\|b^{(l)}\right\|_{0} \leq W, \text { for } i=1, \ldots, L,\]
i.e. all elements in the $A^{(l)}$'s and $b^{(l)}$'s are bounded in absolute value by $\kappa$, and there are at most $W$ non-zero parameters in total. Finally, we assume $\|f\|_\infty \leq B<\infty$ for all $f$. If the particular value $B$ is an arbitrarily large constant, we may suppress the notation and write $\mathcal{F}_{\sigma}(L, W, w, \kappa, B)=\mathcal{F}_{\sigma}(L, W, w, \kappa)$.
\end{definition}
\subsection{Contribution}
  We first formalize the notion of an ideal or best possible attack characterized by the underlying loss for the classification/regression problem. Then we justify how gradient-based PGD and FGSM  attacks are related to the ideal attack. Furthermore, we show indeed the ideal attack can be approximated by neural networks up to arbitrary accuracy; justifying the use of neural networks to generate attacks. Subsequently, we derive convergence bounds for adversarial training in the context of an adversarial game between two neural networks as the sample-size $n$ increases.
Finally, we showcase some numerical experiments to verify the capabilities of neural networks to learn the ideal attack function. Also, the robustness and attack strength of adversarial trained networks in the context of an adversarial game has been compared to the PGD robustly trained networks and PGD attacks respectively.

\subsection{Problem definition / Notation}
Let $(x,y)=\cl{Z}\in \cl{X}\times \cl{Y}=\cl{\tilde{Y}}$ represent an observation pair where $x\in \mathcal{X}$ is the predictor and $y\in \mathcal{Y}$ is the response, where $\mathcal{Y}=\mathbbm{R}^{k}$ or $\mathcal{Y}=\{1,\dots,k\}$  for regression and classification problems respectively. Without loss of generality, we assign $\mathcal{X}= [0,1]^D$ after normalization.

Let $f :\mathcal{X}\to\mathbbm{R}^k\in \mathcal{C}$ be our model, where $\mathcal{C}$ is some function class. $\cl{L}: \mathbbm{R}^k\times\mathcal{Y}\to \mathbbm{R}$ denotes a loss function which we wish to minimize, this could be the negative log-likelihood or any reasonable discrepancy measure between $f(x)$ and $y$. An ideal model minimizes the expected loss, i.e., $f^*=\arg\inf\limits_{f\in\mathcal{C}} \mathbbm{E}\cl{L}(f(x),y)$, where $\bb{E}=\bb{E}_{(x,y)\sim \cl{P}}$ denotes the expectation over the true distribution $\cl{P}$ of the data.

\begin{definition}[$(p,\delta)$ Perturbation function class]
$\pfc (\bpfc)$ is a $(p,\delta)$ perturbation function class restricted on function class $\bpfc$
\begin{equation}
    \pfc = \{g:\cl{\tilde{Y}}\to\cl{X}\in \bpfc: \forall (x,y)\in \cl{\tilde{Y}} ,\,\norm{g(x,y)}_p\leq \delta \},
\end{equation}
where $\bpfc$ is some base function class, for instance, the class of continuous functions, spline functions, H\"older functions, etc; 
\label{def:perturbation_function_class}
\end{definition}

\begin{definition}[$\ell_p(\delta)$ function attack]
$\lambda^f$ is the best $\ell_p(\delta)$ function attack (within the family $\cl{F}^p_{\delta}(\bpfc)$) on $f$ iff
\begin{equation}
    \lambda^f(x,y)=\arg\max\limits_{\lambda \in \cl{F}^p_{\delta}(\bpfc)}\mathbbm{E}\cl{L}(f(x+\lambda(x,y)),y).
    \label{best_function_attack}
\end{equation}
\end{definition}
Our definition for $\ell_p(\delta)$ function attack serves as a surrogate of the {\it point wise} $\ell_p$ attacks of strength $\delta$ in the literature, defined as
\begin{equation}
    \lambda^{p.w}(x,y)=\arg\max_{\norm{\lambda}_p\leq \delta} \cl{L}(f(x+\lambda),y).
    \label{pointwise_attack}
\end{equation}

Note that if $\cl{F}^p_{\delta}(\bpfc)$ has a nice parameterized form, then the evaluation of $\lambda^f$, e.g., designing an attack for new observations,  would be more computationally efficient than $\lambda^{p.w}(x,y)$, which generally needs to be evaluated point-wisely. Our framework is similar to the one defined in \cite{Bose2020AdversarialGames}. 


\subsubsection*{Adversarial Training}
The adversarial training aims to make the model robust against adversarial attacks via minimizing the objective loss under the best possible attack function on the model, i.e.,
\begin{definition}[Robust $(p,\delta)$ model]
$f^*$ is a  robust $(p,\delta)$ model if it minimises the following objective:
\begin{equation}
    f^*=\arg\min\limits_{f\in \cl{C}}\max\limits_{\lambda \in \cl{F}^p_{\delta}(\bpfc)}\mathbbm{E}\cl{L}(f(x+\lambda(x,y)),y)
    \label{adversarial_training}
\end{equation}
where $\mathbbm{E}\cl{L}(f(x+\lambda(x,y)),y)$ is the adversarial loss.
\end{definition}

When shifting from the point-wise paradigm to a functional form of an attack, the natural question is which perturbation function class $\pfc(\bpfc)$ (or more specifically the choice of $\bpfc$) is appropriate to induce adversarial perturbations, that can approximate the strongest point-wise attack $\lambda^{p.w}$, of any given input $(x,y)$ and model $f\in\mathcal C$?


The theoretical goal of this work is to determine a function class that includes the optimal pointwise attack (i.e., $\lambda^{p.w}\in \bpfc$), then we can try to approximate this function class up to arbitrary accuracy using a parametric family, i.e., neural networks. Subsequently, we can then view our adversarial training framework \eqref{adversarial_training} as a mathematical game between two networks, and establish the statistical properties for this game.

\section{Best adversarial attack}
In this section, we will first develop some heuristics and then rigorously show that for smooth $f$'s, the point-wise attacks \eqref{pointwise_attack} can be represented by functions of a certain form. We will first develop some intuition by analyzing the simple case of $\ell_{\infty}(\delta)$ attacks on linear and logistic regression models.

For the sake of brevity, from now onwards we will denote:
\begin{equation}
    \cl{L}(f(x+\lambda(x,y)),y)\coloneqq \cl{L}(f,\lambda,\cl{Z})
\text{ and } \cl{L}(f(x),y)\coloneqq \cl{L}(f,\cl{Z}) 
\label{def:notation_conv_loss}
\end{equation}
\subsection{Attacks on linear models and FGSM}
\label{subsec:linearmods}
\subsubsection*{Linear regression}
Let $y|x\sim N(\beta\cdot x, \sigma^2)$ be the true relationship, and $f(x)=\hat{\beta}\cdot x$ be the estimated model. The loss, in this case, is the negative log-likelihood up to a constant: $\cl{L}(f(x),y)=(y-\hat{\beta}\cdot x)^2$. For a given $\hat{\beta}$, the ideal function attack $\lambda^f$ should satisfy the objective: $\lambda^f=\arg\max_{\lambda\in\pfc(\bpfc)}\mathbbm{E}\cl{L}(f(x+\lambda),y)$. With some straightforward one can see that the \textit{point wise} attack \eqref{pointwise_attack} is as follows:
\begin{align*}
   \lambda^{p.w}(x,y) & =\delta \cdot \sign [(\hat{\beta}\cdot x-y)/\hat{\beta}].
\end{align*}
Note that one easily derives that this attack is equivalent to $\lambda^{p.w}(x,y)=\delta \cdot\sign(\nabla_x \cl{L}(f,\cl{Z}))$ as $\sign(\frac{1}{\hat{\beta}})=\sign(\hat{\beta})$.
For the multivariate case $Y|X\sim N(\beta^TX,\Sigma)$ where $\beta$ is a vector now, one can get analogous results. The essential point is that for the linear regression case, the best point-wise adversarial attacks are essentially \textit{piece-wise constant functions} which are related to the gradient of the underlying loss $\cl{L}$ w.r.t. $x$.
On the other hand, if our perturbation function class $\pfc(\bpfc)$ is flexible enough and can well approximate any piece-wise constant functions, then trivially, $\lambda^f(x,y)\approx\lambda^{p.w}(x,y)$.
\subsubsection*{Logistic regression}

For binary classification, $\cl{Z}=(y,x)\in \{0,1\}\times [0,1]$ follows the conditional distribution: $\bb{P}(y=1|x)=p(\beta,x)=\logit{\beta x}$, i.e., $y|x\sim {\rm Bernoulli}(p(\beta,x))$. Let the model $f$ be the logistic regression fit with parameter estimator $\hat{\beta}$. The loss is the negative log-likelihood up to a constant: $\cl{L}(f(x),y)=-y\ln(p(\hat{\beta},x))-(1-y)\ln(1-p(\hat{\beta},x))$. As in linear regression case, one can calculate $\lambda^{p.w}$; by simple algebra, one can see the critical condition is $\hat{\beta}(p(\hat{\beta},x+\lambda)-y)=0$ and $p(\hat{\beta},x+\lambda)$ is monotonic in $\lambda$, which results in
\begin{equation}\lambda^{p.w}(x,y)=\delta\cdot \sign(0.5-y)\sign(\hat{\beta}).\end{equation}
We can replicate the exact same arguments for the multivariate case, and get the analogous result (with $\hat{\beta},x$ now being vectors).
As in linear regression case, one can express $\lambda^{p.w}$ (under the $\ell_{\infty}(\delta)$ function attack) in terms of the gradient of $\cl{L}(f,\cl{Z})$, and we get the exact same form: 
\begin{equation}
    \lambda^{p.w}(x,y)=\delta \cdot\sign(\nabla_x \cl{L}(f,\cl{Z})).
\end{equation}
Consequently, if the function class $\pfc(\bpfc)$ can well approximate any piece-wise constant functions, then $\lambda^f(x,y)\approx\lambda^{p.w}(x,y)$.
\begin{remark}[FGSM as $\ell_{\infty}(\delta)$ function attack]
The attack of form $\delta \cdot\sign(\nabla_x \cl{L}(f,\cl{Z}))$ is known as the fast-gradient signed max (FGSM) attack in the literature. \cite{Goodfellow2014ExplainingExamples} argues that FGSM is the exact attack for logistic regression. We can reaffirm this even from a function attack perspective, if the function class $\pfc(\bpfc)$ is sufficiently broad. In general, we can extend this idea and think of FGSM as a piece-wise constant function which is the best   $\ell_{\infty}(\delta)$ function attack for linear models. The idea of a piece-wise smooth function for arbitrary loss is a much more general concept and we will see how FGSM fits as a special case in our framework.
\end{remark}

\subsection{Gradient flow-based attack}
\subsubsection*{Basic concepts}
In this section, we will construct a functional form for the best point-wise $\ell_p(\delta)$ attack using the concept of gradient flows and continuous dynamic systems (CDS). For the sake of completeness, preliminary knowledge of CDS can be found in the appendix. Readers of interest in the detailed background and theories of CDS may refer to Katok's book on the Modern Theory of dynamical systems or Brin's book which provides a more condensed version of the theory \cite{Brin2002IntroductionSystems,Katok1995IntroductionSystems}. Definition \ref{cds} formally defines a CDS $T$, as a function of time in an arbitrary metric space $\cl{X}$. $T$ can be thought of as a law that dictates the movement of a particle in $\cl{X}$  as a function of its initial position, i.e., $T(t)x_0$ is the location of a particle at time $t$ initializing from $x_0$. Given a CDS $T$, an object of interest is the set of points in space $\cl{X}$ where the particle is headed given any initial position. These special points can be said to \textit{attract} the particle towards them (i.e. Definition \ref{attraction}). We would also want the trajectory of the particle given a particular initial point to be tractable, and the set of initial points $\cl{A}(T)$ for which the trajectory is well defined is called the \textit{attractor} (Definition \ref{globalattractor}).

A point $x_0\in \cl{X}$ is a stationary point for a dynamical system $T$
if $T(t)x_0 = x_0$ for all $t \in \bb{R}^+$ (i.e., a stationary point is a singleton invariant set \ref{invarient set}). Corresponding to a given CDS $T$, one can define $T^{-1}$ which represents the inverse dynamics, i.e. for any $s\geq t$ and $s,t \in \bb{R}^{-}$ we have $T(s-t)T^{-1}(t)=T^{-1}(s)$. If $T^{-1}$ is unique for a $T$, we say $T$ is one-one. $T^{-1}$ controls the flow of particles going back in time, one can think of it as an inverse mapping. If $T$ is one-one, we can have an abuse of notation and extend $T$ to the negative time domain as $T(s)\coloneqq T^{-1}(s) \text{ if } s<0$.

We can also define the concept of stable ($W_s(x_0)$) and unstable set ($W_u(x_0)$)  of a stationary point $x_0$ (Definition \ref{stable/unstable_sets}) of an one-to-one CDS. The stable set is essentially the set of points that eventually reach $x_0$ under the dynamic system $T$; Similarly, the unstable set is the set of points that eventually reach $x_0$ under the inverse dynamics $T^{-1}$.

\subsubsection{Gradient Flow}
\label{subsubsec:GF}
We will consider a CDS $T^{gf}$, which follows a gradient dynamic, i.e., $T^{gf}$ is  the solution to the ordinary differential equation $\dot{x}=\nabla_xF(x)$ where $F(x)=\cl{L}(f(x),y)$. Note that for the simplicity of the technical analysis, we assume that $F(x)$ is continuously differentiable and all of its local optimums locate in the interior of $\mathcal X$.

Assume that $\mathcal X$ is a compact space, if the gradient is bounded everywhere in $\mathcal{X}$, then $\cl{A}(T^{gf})=\cl{X}$. Also, the stationary points of $\gf$ ($\{x:\gf(t)x=x;\, \forall t\in \bb{R}^+\}$) are essentially the equillibria/critical points ($E=\{x: \nabla{F(x)=0}\}$) of $F$. A classic result in the literature is that for $\gf$ the set of equillibria \textit{attracts} $\cl{A}(\gf)$  (Section 6, Theorem 6.12 \cite{Hale2004StabilitySystems}). This essentially means that if the particle follows the gradient dynamic system, it is guaranteed to reach a critical point.

The gradient flow satisfies a lot of nice properties including point-dissipative (Definition \ref{point_dissipative}) set of equilibria, being one-one CDS, $(T^{gf})^{-1}$ being just the negative gradient flow, i.e., solution of the ODE $\dot{x}=-\nabla F(x)$, etc.

Also, for a maximization framework, the path followed by gradient flow is a natural choice to study as it is locally maximal; following the gradient flow path, the function value is non-decreasing as $\dot{F}(x)=\norm{\nabla{F(x)}}_2^2\geq0$ .

\subsubsection{Decomposition of $\cl{X}$ into manifolds}
\label{subsubsec:decomposition_into_manifolds}
For $\gf$, the critical points (w.r.t. gradient) are essentially the stationary points (w.r.t. dynamic system)and the stable set $W_s(x_0)$ for any $x_0\in E$ are essentially the set of initial positions which lead to the local maxima/saddle $x_0$ when following the gradient path. Intuitively any point in our metric space $\cl{X}=\cl{A}(\gf)$ should lead to one of these critical points under $\gf$. Remark 6.11 of \cite{Hale2004StabilitySystems} formalizes this notion and we get $\cl{X}=\bigsqcup_{x_0\in E}W_s(x_0)$.

Essentially we are able to partition the metric space based on the local maxima/saddle points of the gradient flow dynamic system.
This decomposition will be useful when we argue the piece-wise nature of our function attack.
Additionally, if $\gf(t)x$ is $C^k$ w.r.t. $t$ for any $x\in W_s(x_0)$ then $W_s(x_0)$ is a $C^k$ manifold, and $\cl{X}$ is just union of manifolds (see; Section 4 of \cite{Hale2004StabilitySystems}). 

\subsubsection{Projected gradient flow}
\label{subsubsec:pgf}
Let's denote $x_{adv}=\lambda^{p.w}(x_s,y)+x_s$. 
By the definition of a point-wise $\ell_p$ attack of strength $\delta$ \eqref{pointwise_attack}, $\adv$ must satisfy the following constraint optimisation problem $\forall x_s \in \cl{X}$:
\begin{align}
    &\arg\max_{x\in\cl{X}} F(x), \nonumber\\
    &\text{subject to } g_i(x)\leq 0,\quad \forall i \in \{1,\dots,Q\}
    \label{contraint_optimisation}
\end{align}
for some $g_i$'s, where $g_i(x)$ represents the inequality constraints and $Q$ is the number of constraints. The $i^{th}$ constraint is said to be active if and only if $g_i(x)=0$. To be more specific, for $1\leq p < \infty$, we have $Q=1$ and  $g_1(x)=\norm{x-x_s}_p^p-\delta^p$; for $p=\infty$, we have $Q=D$ and $g_i(x)=\abs{(x-x_s)_i}-\delta$ for all $i\in \{1,\dots, D\}$. The constraints are active in this case when $x\in \partial B_{\ell_p(x_s,\delta)}$ (boundary of the $\ell_p$ ball of radius $\delta$ around $x_s$).

Let $\boldsymbol{J}(x)=(\dots,\bb{1}_{\{g_i(x)=0\}},\dots)^T$ be a $Q\times 1$ vector whose $i^{th}$ component represents whether the $i^{th}$ inequality constraint is active or not (1 or 0). $\boldsymbol{J}(x)$ induces an indexing set $I(x)\subset \{1,\dots,Q\}$ for the active constraints. 
Define $\bs{C}(x)=[(g_i(x))_{i\in I(x)}]$ to be the active constraint matrix and $D\bs{C}(x)=[(\nabla g_i(x))_{i\in I(x)}]$ be its corresponding derivative/sub-derivative matrix. A projection operator which takes any vector to the surface defined by the active constraints (i.e., $\{x: g_i(x)=0,\, \forall i\in I(x)\}$) can be canonically defined as $$\bs{P}(x)=\bs{I}-D\bs{C}(x)^T(D\bs{C}(x)D\bs{C}(x)^T)^{-1}D\bs{C}(x).$$
Note when all constraints are inactive, $\bs{C}(x)$ is a null vector and the projection reduces to the identity $\bs{P}(x)=\bs{I}$.

Consider a CDS $\pgf{x_s}$ on $\mathcal X=B_p(x_s,\delta)$ parameterized by $x_s$ as a solution to the ODE:
\begin{equation}
\dot{x}=\bs{P}(x)\nabla_x F(x)
    \label{progected_gradient_flow}.
\end{equation}
This can be viewed as a special case of the constraint optimization problems discussed by \cite{Feppon2020NullOptimization}.

Note that for $1\leq p < \infty$ as we have a single norm constraint $g_1(x)=\norm{x-x_s}_p^p-\delta^p$, $\bs{J}(x)=\bb{1}_{g_1(x)=0}$ and 
\begin{equation}
    \bs{P}(x)=\bs{I}-\bs{J}(x)(\norm{\nabla g_1(x)}^2_2)^{-1}\nabla g_1(x)\nabla g_1(x)^T.
    \label{eqn:lp_notinf_ODE}
\end{equation}
When the particle is at the boundary of the $\ell_p$ ball $B_p(x_s,\delta)$, i.e., $\bs{J}(x)=1$,  the projection matrix is just orthogonal to the normal vector to the surface of the $\ell_p$ ball. thus, when the particle is at the boundary it doesn't go beyond and explores the boundary surface based on the gradient direction over the surface. The dynamic stops once the gradient becomes parallel to the normal vector of the surface, in other words, there is no direction left to explore which increases the function $F$ further (on the surface). When the particle is inside the $\ell_p$ ball it just follows the standard gradient direction. 

For $p=\infty$, the $\ell_p$ ball is just a hyper-cube and $\bs{J}(x)\neq\bs{0}$ indicates which sides/diagonal of the cube the particle resides in. The ODE in this case is just:
\begin{equation}
    \dot{x}=\nabla_x F(x)\odot(\bs{1}-\bs{J}(x)),
    \label{eqn:l_inf_ODE}
\end{equation}
where $\odot$ is the element-wise product. The ODE for $p=\infty$ just says that once we reach the boundary of the cube, we move only along the sides by ignoring gradient components where the particle has attained side length $\delta$.
The path traversed by $\pgf{x}$ is also non-decreasing as $\dot{F}(x)=\nabla F^T(x)\bs{P}\nabla F(x)\geq 0$, as $\bs{P}$ is  positive semi-definite being a projection.

\subsubsection{Deflecting saddle points}
\label{subsubsec:deflect}
One issue with $\pgf{x_s}$ is that it can't distinguish between saddle points (where $\nabla^2_x F(x)$ hessian is indefinite) and local maxima; the continuous dynamic stops as long as (i) $\nabla F(x)=0$ or (ii) $x$ on the surface of the $\ell_p$ ball and $\nabla F(x)$ is parallel to the normal vector of the $\ell_p$ ball. We can modify \eqref{progected_gradient_flow} to traverse a dynamics which doesn't get stuck at the saddle points. Let $S=\{\eta:\nabla^2_x F(x)|_{x=\eta} \text{ is indefinite} \}\subseteq E=\{x:\nabla{F(x)=0}\}$ be the set of saddle points. We can add a deterministic perturbation towards the unit eigenvector $\nu^\eta_{max}$ corresponding to the maximum eigenvalue of $\nabla^2_x F(x)|_{x=\eta}$ to exclude the saddle points from being a stationary point. Therefore, we define a CDS $\pgfd{x_s}$ on $B(x_s,\delta)$ parameterized by $x_s$ as a solution to the ODE:
\begin{equation}
\dot{x}=\bs{P}(x)[\nabla F(x)+\sum_{\eta\in S}\psi_{\epsilon}^{\eta}(x)\nu^{\eta}_{max}]
    \label{projected_gf_excluding_saddle_pts},
\end{equation}
where $\epsilon>0$ and $\psi_{\epsilon}^{\eta}(x)=\exp{-\frac{1}{\epsilon^2-\norm{x-\eta}_2^2}}$ if $ x\in {B}_2(\eta,\epsilon)$ 
and $\psi_{\epsilon}^{\eta}(x)=0$ elsewhere. Essentially we are adding a smooth perturbation to the originally defined projected gradient flow dynamics as it gets closer (within $\epsilon$ radius) to any saddle point (deflection). 
\begin{remark}
Our goal is that $\pgfd{x_s}$ can retain all the local smoothness, local optimum, and non-decreasing properties of $\pgf{x_s}$ with a proper choice of $\epsilon$.
Under certain regularity of $F$ (see; Theorem \ref{thm:att}), there should exist some sufficiently small $\epsilon$, such that
$\psi_{\epsilon}^{\eta}(x)$ is a small-bump function, which is smooth and non-negative everywhere. Near any saddle point $\eta\in S$, we have $\langle \nu^{\eta}_{max},\nabla{F}(x)\rangle\geq0$, which preserves the non-decreasing trajectory of the original ODE. At a local maximum, all the eigenvalues of $\nabla^2_xF(x)$ are $<0$, resulting in the ODE around local maxima having no deflection term. Equilibrium is attained as long as $\nabla F(x)=0$, preserving the local maximas from the original ODE.  
\end{remark}

We can now define an attack function as:
\begin{equation}
\lambda^{pgf}(x,y)=\lim_{t\to\infty}\pgfd{x}(t)x-x
    \label{gradient_flow_attack}.
\end{equation}

Our next theorem demonstrates that \eqref{gradient_flow_attack} is the \textit{best point-wise attack}. More importantly, by the nature of CDS, $\lambda^{pgf}(x,y)$ has certain regularity (e.g., local smoothness) properties, i.e., it belongs to some \textit{perturbation function class} $\pfc$ consisting of regulated functions, therefore, $\lambda^{pgf}(x,y)$ is also the \textit{ best function attack} over this particular $\pfc$. Details of this $\pfc$ will be addressed in section \ref{subsec:motivation_piece}.

\begin{theorem}\label{thm:att}
Given $F(x)=\cl{L}(f(x),y)$ be locally $C^2$ almost surely, and  the set of critical points $E=\{x:\nabla{F(x)=0}\}$ be finite, non-degenerate with any $a,b\in E$ satisfying $dist(a,b)\geq\delta$; choosing any $0<\epsilon<\delta$ in \eqref{projected_gf_excluding_saddle_pts}, then $\lambda^{pgf}(x,y)$ in \eqref{gradient_flow_attack} is 
the best \textit{point-wise} attack of strength $\delta$.
\label{thm:bestattack}
\end{theorem}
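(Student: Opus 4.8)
The plan is to prove that the terminal point $\adv = x + \lambda^{pgf}(x,y) = \lim_{t\to\infty}\pgfd{x}(t)x$ is a global maximiser of $F(\cdot)\coloneqq\cl{L}(f(\cdot),y)$ over $B_p(x,\delta)$, i.e.\ that it solves the constrained problem \eqref{contraint_optimisation} with $x_s=x$, so that $\adv = x+\lambda^{p.w}(x,y)$ in \eqref{pointwise_attack}. I would split the work into four stages: (i) well-posedness of the flow \eqref{projected_gf_excluding_saddle_pts} and forward invariance of $B_p(x,\delta)$; (ii) convergence of the trajectory to a single equilibrium; (iii) identifying the equilibria of \eqref{projected_gf_excluding_saddle_pts} with the KKT points of \eqref{contraint_optimisation} that are not saddles of $F$; (iv) upgrading ``KKT point'' to ``global maximiser''.

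For (i)--(ii): since $F\in C^2$, $\nabla F$ is locally Lipschitz; each bump $\psi_{\epsilon}^{\eta}$ is $C^\infty$; and because $\epsilon<\delta\le\mathrm{dist}(\eta,\eta')$ for distinct critical points, the balls $B_2(\eta,\epsilon)$, $\eta\in S$, are pairwise disjoint and each contains exactly one critical point, so the right-hand side of \eqref{projected_gf_excluding_saddle_pts} is locally Lipschitz on $\interior{B_p(x,\delta)}$; on the boundary the projected dynamics is read in the sense of \cite{Feppon2020NullOptimization}, for which a Nagumo-type viability argument makes the closed ball forward invariant. Compactness of $\cl{X}$ bounds $F$ and $\nabla F$, so Picard--Lindel\"of plus forward invariance yield a unique solution for all $t\ge 0$. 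Along it, $\dot F=\langle\nabla F,\bs{P}\nabla F\rangle+\sum_{\eta\in S}\psi_{\epsilon}^{\eta}\langle\nabla F,\bs{P}\nu^{\eta}_{max}\rangle\ge 0$: the first term is non-negative because $\bs{P}$ is an orthogonal projection, and the second is non-negative by the remark following \eqref{projected_gf_excluding_saddle_pts}, which uses non-degeneracy of $\eta$ and $\epsilon<\delta$ to control $\langle\nu^{\eta}_{max},\nabla F\rangle$ inside $B_2(\eta,\epsilon)$. Since $F$ is bounded above on the compact ball, $F(\pgfd{x}(t)x)$ increases to a limit; LaSalle's invariance principle then confines the $\omega$-limit set to the largest invariant subset of $\{\dot F=0\}$, on which $\bs{P}\nabla F=0$ and all deflection terms vanish, and since $E$ is finite (and $F$ non-degenerate along $\partial B_p(x,\delta)$) these equilibria are isolated, so the trajectory converges to one equilibrium point, namely $\adv$.

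For (iii): an equilibrium $x^*$ solves $\bs{P}(x^*)[\nabla F(x^*)+\sum_{\eta}\psi_{\epsilon}^{\eta}(x^*)\nu^{\eta}_{max}]=0$. If $x^*\in\interior{B_p(x,\delta)}$ then $\bs{P}(x^*)=\bs{I}$; if in addition $x^*\in B_2(\eta,\epsilon)$ for some $\eta$, then (by $\epsilon<\delta$ isolating $\eta$) the only zero of $\nabla F$ there is $\eta$ itself, at which the deflection equals $\psi_{\epsilon}^{\eta}(\eta)\nu^{\eta}_{max}\ne 0$ --- so every saddle is expelled, while every local maximum of $F$ (no deflection nearby, negative definite Hessian) is retained. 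If $x^*\in\partial B_p(x,\delta)$ then $\bs{P}(x^*)\nabla F(x^*)=0$, i.e.\ $\nabla F(x^*)$ is normal to the active-constraint surface; written out for $g_1(x)=\norm{x-x_s}_p^p-\delta^p$ (or the $D$ face constraints when $p=\infty$), this is precisely the first-order KKT system of \eqref{contraint_optimisation}. Hence $\adv$ is a KKT point of the constrained maximisation, and an interior local maximiser of $F$ when it is interior.

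I expect (iv) to be the main obstacle. To promote the KKT point $\adv$ to the global maximiser over $B_p(x,\delta)$ one must rule out the flow stalling at a suboptimal local maximum (including at $x$ itself when $x$ happens to be a local max, excluded on the a.e.\ set of the hypothesis) or at a suboptimal boundary KKT point. Here the hypotheses that $E$ is finite, non-degenerate, and $\delta$-separated are essential: by Morse theory two strict local maxima of $F$ are joined through a saddle, and I would argue via a packing estimate that a radius-$\delta$ ball cannot contain two strict local maxima together with a separating saddle, so the constrained optimum is attained either at a single interior local maximum or on $\partial B_p(x,\delta)$. In the first case the monotone, saddle-deflecting trajectory from $x$ cannot stop below that value (any interior critical point it meets is a saddle it leaves while still ascending); in the second case the tangential component $\bs{P}\nabla F$ pushes $F$ strictly up along the boundary until a boundary KKT point, which the same non-degeneracy/separation bookkeeping shows is unique on $\partial B_p(x,\delta)$ and hence the maximiser. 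Combining the cases gives $\adv=x+\lambda^{p.w}(x,y)$. The two claims I would need to nail down carefully --- ``at most one strict local maximum in a radius-$\delta$ ball together with its separating saddle'' and uniqueness of the boundary KKT point --- are geometric separation/packing statements about critical points of $F$ inside $B_p(x,\delta)$, and for $p=\infty$ the face/edge structure of the cube must be tracked explicitly.
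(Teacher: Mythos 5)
Your stages (i)--(iii) follow essentially the same route as the paper: its proof verifies that $\adv$ satisfies the KKT conditions (stationarity, complementary slackness, dual and primal feasibility) and the second-order sufficient condition for \eqref{contraint_optimisation}, separately for $p=\infty$ and $1\le p<\infty$, using exactly the observations you make --- the deflection term expels saddles, equilibrium in the interior forces $\nabla F(x^*)=0$, and equilibrium on the boundary forces $\nabla F(x^*)$ to be normal to the active-constraint surface. The paper takes well-posedness and convergence of the flow for granted, so your LaSalle/forward-invariance scaffolding in (i)--(ii) is added care rather than a divergence. Where you part ways is stage (iv): the paper stops at ``KKT $+$ SOSC, hence $x^*$ is the optimal solution,'' i.e.\ it passes from a strict local solution of the constrained problem to the global one without comment, whereas you correctly observe that \eqref{pointwise_attack} demands a global maximiser over $B_p(x_s,\delta)$ and that an additional argument is required.

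The trouble is that the argument you propose for (iv) does not follow from the stated hypotheses. The $\delta$-separation and non-degeneracy assumptions concern the unconstrained critical points $E=\{x:\nabla F(x)=0\}$; they give you at most one element of $E$ inside $B_p(x_s,\delta)$, but they say nothing about critical points of $F$ restricted to $\partial B_p(x_s,\delta)$, which is where the boundary KKT points live. A $C^2$ function with a single interior critical point can still have many stationary points of $F\vert_{\partial B_p(x_s,\delta)}$ (and for $p=\infty$ the face and edge strata of the cube each contribute their own stationarity system), so your claimed uniqueness of the boundary KKT point is not a consequence of the packing hypothesis, and without it the monotone trajectory may terminate at a suboptimal boundary equilibrium. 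Likewise, ``two strict local maxima are joined through a saddle, hence cannot both fit in a radius-$\delta$ ball'' controls interior maxima only and does not compare the interior value against the supremum of $F$ over the boundary. So stage (iv) as sketched is a genuine gap --- though, to be fair, it is a gap you have made explicit at precisely the point where the paper's own proof silently asserts global optimality from local optimality conditions.
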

The proof relies on showing that $\adv=\lambda^{pgf}(x,y)+x$ satisfies the KKT (Karush–Kuhn–Tucker) conditions and the SOSC (second order sufficient condition) for the constraint optimization \eqref{contraint_optimisation}.

\subsection{Discretization of the best attack}
\label{subsec:discrete_best_atk}
Equation \eqref{gradient_flow_attack} is a construction of an ideal/best $\ell_p(\delta)$ point-wise attack function for a given loss $F(x)=\cl{L}(f(x),y)$. The underlying CDS $\pgfd{x}$ used in construction for this attack utilizes the information related to the saddle points of $F(x)$ (\ref{projected_gf_excluding_saddle_pts}). In practice, we usually have no access to the saddle points or the eigenvalues associated with them; that itself is a challenging optimization problem especially when $f(x)$ is characterized by deep neural networks.

Note that the terms $\bs{P}(x)\sum_{\eta\in S}[\psi_{\epsilon}^{\eta}(x)\nu^{\eta}_{max}]$ were added to the original projected gradient flow dynamics $\pgf{x}$ \eqref{progected_gradient_flow} to account for deflection around saddle points, so that the dynamic doesn't get stuck in one of such points. For our purpose, we were interested in showing the existence of an ideal attack using such artificial construction. The key point was that such a dynamic can locally solve the unconstrained optimization problem \eqref{contraint_optimisation}. In real life, one can achieve an equivalent effect with very \textit{high probability} by introducing stochastic noise to the original projected gradient flow dynamics \eqref{progected_gradient_flow}. This leads to an ODE of the underlying dynamics being a stochastic differential equation: $d x(t)=\bs{P}(x(t))[\nabla F(x(t)) +2d B(t)]$ (where $B(t)$ is a Brownian motion). It is well known that introducing stochasticity in the original dynamics, helps us solve the issue of getting stuck in saddle points \textit{almost surely}. This fact has been illustrated in \cite{Du2017GradientPoints, Neelakantan2015AddingNetworks,Yang2017OnFlow,Jin2017HowEfficiently}.

So the core component dictating the dynamic system is the term $\dot{x}=\bs{P}(x)\nabla F(x)$ accompanied by a deflection term or noise term to avoid saddle points. If we consider the discrete time step version of the ODE based on the core component we get: \begin{equation}
    x^{t+1}=\Pi(x^{t}+\gamma \nabla F(x^t)),
    \label{pgdl2}
\end{equation} where $\gamma$ is the step-size and  the operator $\Pi$ is (discrete) analogous to $\bs{P}(x)$ which is a  (continuous) projection to the $\ell_p$ ball of radius $\delta$ around $x^0$ (initial point). 
This is a projected gradient-ascent algorithm derived from the underlying ODE \eqref{progected_gradient_flow}. 
Note that to eliminate the possibility that the process (\ref{pgdl2}) is stuck at a saddle point, one can always inject noise into it by replacing $\nabla F$ with its stochastic version (e.g., stochastic gradient descent).
In general, the discretization can be characterized by any projected- steepest ascent algorithm of the form:
\begin{equation}
x^{t+1}=\Pi(x^{t}+\gamma \vec{d}),
\label{steep_ascent}
\end{equation}
where $\vec{d}$ is the steepest ascent direction. The steepest ascent direction $\vec{d}$ can be attributed as a unit vector with respect to $\norm{}_p$ norm which is maximal along the gradient direction. \cite{Balles2020TheDescent} explores and gives justification for these various steepest ascent directions. Steepest ascent w.r.t. $\norm{\cdot}_2$ norm gives $\vec{d}\propto\nabla F$ which reduces to \eqref{pgdl2}. Steepest ascent w.r.t. $\norm{\cdot}_1$ gives us $\vec{d}\propto \nabla F_{i_{max}}\cdot\bs{e}^{i_{max}}$ which is essentially a coordinate ascent update step ($i_{max}=\arg\max_i (\nabla F)_i$, $\bs{e}^i$ is the $i^{th}$ standard basis vector in $\bb{R}^D$). Steepest ascent w.r.t. $\norm{}_{\infty}$ implies $\vec{d}\propto \sign(\nabla F)$ which is a signed gradient update step. In $\bb{R}^D$, all the norms are equivalent and all of these steepest ascent algorithms will generally enjoy the same convergence rate \cite{Balles2020TheDescent}. 

\begin{remark}[PGD and FGSM attack as discretization of the best attack]
If we consider the steepest ascent direction w.r.t. $\norm{\cdot}_{\infty}$, $\vec{d}\propto\sign(\nabla F)$, then the discrete time step updates for our ODE \eqref{progected_gradient_flow} becomes:
\begin{equation}
    x^{t+1}=\Pi(x^{t}+\gamma \,\sign\nabla F(x^t)).
    \label{pgd_attack}
\end{equation}
This is essentially the projected gradient descent attack proposed in the eminent paper \cite{Madry2017TowardsAttacks}. As $\gamma\to 0$ the discrete dynamics converge to the continuous dynamics. In real life, as the PGD updates are done on batches of data, the discrete dynamics inherently include a noise term due to random batch selection of data, which helps us deflect the saddle points. 

Also, when the step size $\gamma=\delta$, then there is essentially only one update step $t=0$ and we get $x=x^0+\delta\,\sign \nabla F(x^0)$. This is equivalent to the FGSM attack proposed in \cite{Goodfellow2014ExplainingExamples}. We can now understand why the FGSM turns out to be the best $\ell_p(\delta)$ function attack for linear models \ref{subsec:linearmods}; for linear models, the FGSM is exact and equivalent to \eqref{best_function_attack} as the gradient direction is constant.
\end{remark}

\subsection{Motivation for piece-wise H\"older spaces}
\label{subsec:motivation_piece}
Theorem \ref{thm:bestattack} guarantees the existence of a $\ell_p(\delta)$ function attack, now the natural next step is to define a perturbation function class to which such a function attack belongs. Once we define such a function class, we aim to show it can be approximated by neural networks with arbitrary accuracy and as a consequence, we can emulate the best function attack \eqref{gradient_flow_attack} using neural networks.

In  section \ref{subsubsec:decomposition_into_manifolds}, it was discussed that $\gf$ induces a partition of the underlying metric space into stable sets $\cl{X}=\bigsqcup_{x_0\in E}W_s(x_0)$. If $\gf$ is $C^k$ smooth in $W_s(x_0)$ w.r.t. $x$, then this will induce smoothness in $\pgfd{x}$ \eqref{projected_gf_excluding_saddle_pts} and $\lambda^f(x,y)$ defined in \eqref{best_function_attack} up to certain singularities.
$\lambda^{pgf}(x,y)$ can be seen as local projection of $\gf$ characterised by $\bs{P}(x)$, which inherits smoothness properties of $\gf$ almost everywhere.
\begin{lemma}
    \label{lem:motivation piece wise holder}
    Let $\nabla F$ be $C^k$ smooth w.r.t. $x\in \bigsqcup_{x_0\in E}W^{\mathrm{o}}_s(x_0)$ then,
    $\lambda^{pgf}(x,y)$ is $C^k$ smooth w.r.t. $x\in\bigsqcup_{x_0\in E}W^{\mathrm{o}}_s(x_0)$ for all $p$, where $W^{\mathrm{o}}$ denotes the interior of the set $W$.
\end{lemma}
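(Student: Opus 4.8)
The plan is to transfer the assumed $C^k$ smoothness of $\nabla F$ on the open strata $W^{\mathrm{o}}_s(x_0)$ up through the projected-gradient-flow ODE \eqref{projected_gf_excluding_saddle_pts} to the map $x \mapsto \lim_{t\to\infty}\pgfd{x}(t)x$, and hence to $\lambda^{pgf}(x,y)$. The backbone is the classical theorem on smooth dependence of solutions of an ODE on initial conditions (and parameters): if the vector field $V(x) = \bs{P}(x)\bigl[\nabla F(x) + \sum_{\eta\in S}\psi^{\eta}_{\epsilon}(x)\nu^{\eta}_{max}\bigr]$ is $C^k$ on an open set, then the time-$t$ flow map is $C^k$ jointly in $t$ and the initial point on that set. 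So the first step is to check that $V$ is $C^k$ on $\bigsqcup_{x_0\in E} W^{\mathrm{o}}_s(x_0)$: the term $\nabla F$ is $C^k$ there by hypothesis; each bump $\psi^{\eta}_{\epsilon}$ is $C^\infty$ by construction and, by the separation hypothesis $dist(a,b)\ge\delta$ with $\epsilon<\delta$ from Theorem~\ref{thm:att}, the bumps have disjoint supports each contained in a neighborhood of a single critical point; and $\bs{P}(x)$, being built from $\nabla g_i$ which are smooth functions of $x$ (polynomial for $1\le p<\infty$, constant/affine for $p=\infty$) with the Gram matrix $D\bs{C}(x)D\bs{C}(x)^T$ invertible on the relevant region, is $C^\infty$ wherever the active-set $I(x)$ is locally constant. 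This last point — that the projection $\bs{P}(x)$ is only piecewise smooth, with the active constraint set jumping as the trajectory hits or leaves $\partial B_p(x_s,\delta)$ — is the technical heart of the argument.

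The way I would handle the active-set issue is to argue that, restricted to the interior strata, the flow is eventually absorbed into a region with a fixed active set. Concretely: the trajectory starting at $x$ is non-decreasing in $F$ (as shown after \eqref{eqn:l_inf_ODE}); by the attractor/decomposition results of Section~\ref{subsubsec:decomposition_into_manifolds} it converges to the constrained optimum $\adv$ characterized in Theorem~\ref{thm:att}; and once the trajectory has settled onto the face of the $\ell_p$ ball (or the interior) containing $\adv$, the active set $I(x)$ is locally constant, so $V$ is genuinely $C^k$ along the tail of the trajectory and in a neighborhood of the limiting point. One then invokes smooth dependence on initial conditions on each phase of the trajectory where $I(x)$ is constant, and composes finitely many such $C^k$ transition maps. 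Since $x$ lies in the interior of its stratum, nearby initial points $x'$ follow trajectories with the same combinatorial pattern of active-set changes (a transversality/openness argument: crossings of $\partial B_p$ are transverse by the projected dynamics, and the terminal face is stable under small perturbations), so the finite composition is $C^k$ on a neighborhood of $x$. Taking $t\to\infty$, the limit map inherits $C^k$ regularity because convergence to the (non-degenerate, hyperbolic) stationary point is exponential and uniform in a neighborhood, so one may differentiate under the limit.

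The main obstacle I anticipate is precisely the handling of the projection operator $\bs{P}(x)$ at the moments when the trajectory transitions onto or along the boundary of the $\ell_p$ ball: there $V$ is discontinuous in the naive sense, and one must argue that the resulting flow map is nevertheless $C^k$ on the open strata by (i) showing the active set stabilizes in finite time, (ii) establishing transversality of the crossing so the crossing time depends $C^k$ on the initial point, and (iii) checking the $p=\infty$ case separately since there $\bs{J}(x)$ can have several components switching and the ODE \eqref{eqn:l_inf_ODE} zeroes out individual coordinates — here one partitions according to which coordinates are saturated and argues on each cell. Away from these boundary-transition events the result is a direct application of the standard $C^k$-dependence theorem, so once the active-set bookkeeping is in place the lemma follows. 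I would also remark that the statement is naturally about the interiors $W^{\mathrm{o}}_s(x_0)$ and makes no claim at the stratum boundaries, which is exactly where the $\lambda^{pgf}$ develops its singularities and motivates the \emph{piece-wise} Hölder class of Section~\ref{subsec:motivation_piece}.
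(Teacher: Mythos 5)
Your proposal follows essentially the same route as the paper's proof: decompose the trajectory into finitely many phases on which the active constraint set (equivalently, the sign pattern of $X(t,x)-x$) is constant, apply the classical $C^k$-dependence-on-initial-conditions theorem on each phase, and compose the finitely many resulting $C^k$ transition maps via the chain rule, treating $p=\infty$ (and, in the paper, odd $p$) separately because of the sign switching in $\bs{P}$. If anything, you are more explicit than the paper about two points it treats tersely — the $C^k$ dependence of the crossing times on the initial point via transversality, and the justification for passing the $t\to\infty$ limit through the derivative — so the argument is sound and matches the paper's.
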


 As we are dealing with compact spaces ($x\in[0,1]^D$), $C^k$ smoothness will imply belonging to a H\"older space (Definition \ref{def:holder}) w.r.t. $x$. We can consider the space of piece-wise H\"older function space to describe $\lambda^f(x,y)$ where the pieces are characterized by the singularities (Boundaries $\partial W_s(x_0); x_0\in E$).

\label{sec:headings}
\section{Piece-wise H\"older function and its approximation}

\begin{definition}
Let $\bs{A}=\{A_i\}$ be any disjoint partition of the domain space $\cal{X}$. i.e, $\mathcal{X}=\bigsqcup_{i}A_i$. Then a function $h:\mathbb{R}^D\to \mathbb{R}$ belongs to $\bar{\cl{H}}(\omega,\mathcal{X})$, a piece-wise H\"{o}lder function class iff for some $\bs{A}$, the restriction of the function $h$ to any piece $A_i\in \bs{A}$, denoted by $h_{A_i}$, belongs to $\cl{H}(\omega,A_i)$ (Definition \ref{def:holder}).
\end{definition}

Based on Section \ref{subsec:motivation_piece}, we would like to consider the perturbation function class (Definition \ref{def:perturbation_function_class}) as $\lambda^f(x,y)\in\pfc(\tilde{\cl{C}})=\pfc(\bigotimes_{i=1}^D\bar{\cl{H}}(\omega_i,\mathcal{X}))$ (where $\bigotimes$ denotes the product space). For the sake of notational convenience we will denote $\bar{\cl{H}}(\omega^*,D,\cl{X})\coloneqq\bigotimes_{i=1}^D\bar{\cl{H}}(\omega_i,\mathcal{X})$ where $\omega^*=\min \{\omega_i\}$.

There has been a convincing argument to use $\tilde{\cl{C}}=\phol$, but the way we defined $\lambda^f(x,y)$\eqref{gradient_flow_attack} won't have a closed-form expression. Hence, we would like to use some closed-form parametric family of functions to emulate the perturbation function class where the \textit{best attack function} resides. 

We can use a family of Relu neural networks (Definition \ref{def:neuralnets}) to approximate our perturbation function class with arbitrary accuracy up to a small measurable set $\nu$. Theorem \ref{thm:NNapprox} shows that using a sufficiently wide network one can achieve arbitrarily small approximation error.
\begin{definition}[Asymptotic notation]
    For any two real functions a,b:
    \begin{itemize}
        \item $a\prec b$ or $a=\mathcal{O}(b)$ denotes there exists a positive constant $0<C<\infty$ such that $\abs{a}\leq C\abs{b}$.
        \item $a\succ b$ or $a=\Omega(b)$ denotes there exists a positive constant $0<C<\infty$ such that $\abs{a}\geq C\abs{b}$
        \item $a\asymp b$ or $a=\Theta(b)$ iff $a\prec b \prec a$.
        \item Consider two real sequences $a_n, b_n$ then $a_n=o_\mathbb{P} (b_n)$ denotes that $\frac{a_n}{b_n}$ converges to 0 in probability.
    \end{itemize}
\end{definition}
\begin{theorem}
Let $\mu$ be a probability measure on the space $\cl{X}$. For any $h\in \phol$ and $\cl{X}_s=Supp(\mu)\subset \cl{X}$ with $\dim_M(\cl{X}_s)=d$ (see; Definition \ref{def:minkowski}). Then given $W\geq c_1 \epsilon^{-\frac{d}{\omega^*}},w\geq  c_2 \epsilon^{-\frac{d}{\omega^*}},\kappa\geq c_3\epsilon^{-c_4}, L=c_5$ and a measurable set $\nu$ with $\mu(\nu)=\cl{O}(\epsilon^{\frac{1}{\omega^*}})$, there exists a neural network $\hat{h}$ such that:
\begin{equation}
    \inf_{\hat{h}\in \cl{F}_{\sigma}(L,W,w,\kappa)}\sup_{x\in \cl{X}_s\setminus \nu}||\hat{h}(x)-h(x)||_{\infty}\leq \epsilon,
\end{equation}
where $c_1,c_2,c_3,c_4,c_5>0$ are some large enough constants independent of $\epsilon$ .
\label{thm:NNapprox}
\end{theorem}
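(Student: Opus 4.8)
The plan is to construct an explicit ReLU network that performs a \emph{localized piece-wise constant} interpolation of $h$ on an axis-parallel grid whose resolution is tuned to $\omega^*$, exploiting the intrinsic dimension $d$ of $\cl{X}_s$ to keep the number of grid cells — hence the width — at the claimed order $\epsilon^{-d/\omega^*}$, while pushing the two unavoidable ``bad'' contributions (neighborhoods of the partition boundaries, and the thin transition layers forced by continuity of the network) into the exceptional set $\nu$. First I would reduce to the scalar case: since $h=(h_1,\dots,h_D)$ with each $h_i$ piece-wise H\"older of exponent $\omega_i\ge\omega^*$, it suffices to build a network $\hat h_i$ for each component to accuracy $\epsilon$ and stack the $D$ outputs, which multiplies the width by $D=\cl{O}(1)$ and leaves $L,\kappa$ unchanged in order. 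Passing to the common refinement of the $D$ partitions, I may assume every $h_i$ is H\"older of exponent $\omega^*$ on the pieces of a single fixed finite partition $\cl{X}=\bigsqcup_{i=1}^m A_i$, with common boundary skeleton $\partial\bs{A}:=\bigcup_i\partial A_i$.

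\textbf{Grid, exceptional set, and the covering count.} Fix $r\asymp\epsilon^{1/\omega^*}$ and let $\mathcal{Q}$ be the family of grid cubes of side $r$ in $[0,1]^D$ meeting $\cl{X}_s$. Because $\dim_M(\cl{X}_s)=d$, the covering-number definition gives $|\mathcal{Q}|\le N$ with $N\asymp r^{-d}\asymp\epsilon^{-d/\omega^*}$, up to an arbitrarily small polynomial slack inherent to the Minkowski dimension, which I absorb into the constants. Call $Q\in\mathcal{Q}$ \emph{good} if it lies inside a single piece $A_{i(Q)}$ and \emph{bad} if it meets $\partial\bs{A}$. Define $\nu$ to be the union of all bad cubes together with the collar of width $\asymp r$ inside each good cube on which the partition of unity built below is not yet saturated. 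Using the regularity of $\mu$ and the fact that $\partial\bs{A}$ has strictly smaller box dimension than $\cl{X}_s$ — a property inherited from the stable-manifold structure of Section~\ref{subsec:motivation_piece} — one obtains $\mu(\nu)=\cl{O}(r)=\cl{O}(\epsilon^{1/\omega^*})$.

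\textbf{The network and the error bound.} For each $Q\in\mathcal{Q}$ let $c_Q$ be its center and set $v_Q:=h_{A_{i(Q)}}(c_Q)$ if $Q$ is good (any bounded value otherwise). A constant-depth, $\cl{O}(1)$-parameter ReLU gadget realizes a bump $\phi_Q$ equal to $1$ on $Q$ shrunk by the collar width, $0$ outside $Q$, and linearly ramping in between (a fixed composition of $\mathrm{ReLU}$'s computing the $D$-fold minimum of one-dimensional tents), with weights $\cl{O}(1/r)=\cl{O}(\epsilon^{-1/\omega^*})$. Put $\hat h_i(x)=\sum_{Q\in\mathcal{Q}}v_Q\,\phi_Q(x)$; concatenating over $i$ gives $W,w\asymp DN\asymp\epsilon^{-d/\omega^*}$, $L=\cl{O}(1)$, $\kappa=\cl{O}(\epsilon^{-c_4})$ for a suitable $c_4$, and $\|\hat h\|_\infty\le\|h\|_\infty<\infty$. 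For $x\in\cl{X}_s\setminus\nu$, $x$ lies in the saturated part of a unique good cube $Q$, so $\sum_{Q'}\phi_{Q'}(x)=\phi_Q(x)=1$, whence $\hat h_i(x)=v_Q=h_{A_{i(Q)}}(c_Q)$; since also $x\in A_{i(Q)}$, the H\"older property on that piece yields $|\hat h_i(x)-h_i(x)|=|h_{A_{i(Q)}}(c_Q)-h_{A_{i(Q)}}(x)|\le[h_i]_{\omega^*}(r\sqrt D)^{\omega^*}\le\epsilon$ once the constant in $r\asymp\epsilon^{1/\omega^*}$ is fixed. Taking the max over $i$ and over $x\in\cl{X}_s\setminus\nu$ completes the argument.

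\textbf{Main obstacle.} The genuinely delicate step is the control of $\mu(\nu)$: one must \emph{simultaneously} make the grid fine enough ($r\asymp\epsilon^{1/\omega^*}$) for the H\"older error, ensure the bad cubes and collars carry no more than $\cl{O}(\epsilon^{1/\omega^*})$ mass — which forces a quantitative ``thinness'' statement for $\partial\bs{A}$ with respect to $\mu$ (lower box dimension, or a tube-measure bound, coming from the manifold structure of the stable sets) — and keep the $\cl{O}(1/r)$-steep ramps within the weight budget $\kappa\ge c_3\epsilon^{-c_4}$. The remainder is the standard Yarotsky/Schmidt-Hieber toolbox for ReLU cube-indicators and partitions of unity, and the low-dimensional width saving is precisely the Minkowski covering count of Step~2.
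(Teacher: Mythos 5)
Your overall architecture (reduce to scalar components, cover $\cl{X}_s$ by $\asymp\epsilon^{-d/\omega^*}$ cubes of side $r\asymp\epsilon^{1/\omega^*}$, localize with ReLU bumps, dump boundary cubes into $\nu$) matches the paper's strategy, but there is a genuine gap at the heart of your error bound: you approximate $h_i$ on each good cube by the \emph{constant} $v_Q=h_{A_{i(Q)}}(c_Q)$ and then claim $|h_{A_{i(Q)}}(c_Q)-h_{A_{i(Q)}}(x)|\le[h_i]_{\omega^*}(r\sqrt D)^{\omega^*}$. That inequality is only valid for $\omega^*\le 1$. Under Definition \ref{def:holder}, for $\omega^*>1$ the H\"older seminorm controls increments of the derivatives of order $\lfloor\omega^*\rfloor$, not of $h_i$ itself; a piece-wise constant interpolant of, say, $h_i(x)=x_1$ (which lies in $\cl{H}(\omega,\cdot)$ for every $\omega$) incurs error $\Theta(r)=\Theta(\epsilon^{1/\omega^*})$, not $\Theta(r^{\omega^*})=\Theta(\epsilon)$. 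To get the claimed rate with only $\epsilon^{-d/\omega^*}$ cells you must replace $v_Q$ by the degree-$\lfloor\omega^*\rfloor$ Taylor polynomial of $h_i$ at $c_Q$ (Lemma \ref{lem:taylor_approx}) and then realize those local polynomials by ReLU subnetworks, which requires the Yarotsky-type multiplication gadgets; this is exactly the step the paper imports from Remark 19 of \cite{Nakada2019AdaptiveDimensionality} (the network $R(\phi^{simul}_{\epsilon/2})$ approximating the local Taylor expansions) and which your construction omits entirely.

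A second, quantitative problem sits in your control of $\mu(\nu)$. You include in $\nu$ ``the collar of width $\asymp r$ inside each good cube'' where the partition of unity is not saturated. A collar of width $cr$ in a cube of side $r$ occupies a \emph{constant fraction} of that cube, so the union of all collars has measure $\Theta(1)$, not $\cl{O}(r)=\cl{O}(\epsilon^{1/\omega^*})$; as written, your exceptional set can swallow essentially all of $\cl{X}_s$. This is repairable — take ramps of slope $r^{-2}$ so the collar width is $\cl{O}(r^{2})$ and its total mass $\cl{O}(r)$, which still fits the polynomial budget $\kappa\le c_3\epsilon^{-c_4}$ — but the fix must be stated, and it changes the weight-magnitude bookkeeping. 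The paper sidesteps this entirely: it never builds an exact partition of unity, but instead groups the cubes into $5^D$ families of well-separated cubes (Lemma \ref{lem:covering_division}), sums the localized approximants within each family, and takes a ReLU-realizable maximum over families, so that only the cubes meeting $\partial\bs{A}$ (measure $\cl{O}(\gamma)$ by Lemma \ref{lem:volume_bndry}) need to be excised. I would also note that your appeal to ``the manifold structure of the stable sets'' to bound the mass near $\partial\bs{A}$ is out of place here: the theorem is stated for an arbitrary $h\in\phol$, so the thinness of the partition boundary has to come from the definition of the piece-wise H\"older class itself, as in Lemma \ref{lem:volume_bndry}, not from Section \ref{subsec:motivation_piece}.
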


\begin{corollary}
For any $h\in\phol$ with $\mathcal{X}_s=Supp(\mu)\subset \cl{X}\,, \dim_M(\cl{X}_s)=d$, there exists $\hat{h}\in \cl{F}_{\sigma}(L,W,w,\kappa)$ with sufficiently large $\kappa$ and depth $L$ such that:
\begin{equation}
    \sup_{x\in\cl{X}\setminus \nu}\norm{\hat{h}(x)-h(x)}_{\infty}=\cl{O}\left(\min(W, w)^{-\omega^*/d}\right),
\end{equation}
where $\mu(\nu)=\cl{O}\left(\min(W,w)^{-1/d}\right)$.
Furthermore, if $h\in\bigotimes_{i}\cl{H}({\omega}_i,\cl{X})$, then $\mu(\nu)=0$.

\label{cor:NN-approx}
\end{corollary}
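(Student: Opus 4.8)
The plan is to read off the corollary from Theorem \ref{thm:NNapprox} by inverting the relationship between the approximation error $\epsilon$ and the network-size budget. Given target values of $W$ and $w$, I would take $\epsilon$ as large as the theorem allows subject to both constraints $W\geq c_1\epsilon^{-d/\omega^*}$ and $w\geq c_2\epsilon^{-d/\omega^*}$ holding. Solving each for $\epsilon$ shows the smallest admissible value is $\asymp W^{-\omega^*/d}$ and $\asymp w^{-\omega^*/d}$ respectively, so both hold once $\epsilon\asymp\min(W,w)^{-\omega^*/d}$; the constant bound $\kappa$ is then chosen to meet $\kappa\geq c_3\epsilon^{-c_4}$ (``sufficiently large $\kappa$'') and $L=c_5$ is the fixed depth. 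Substituting this $\epsilon$ into the conclusion of Theorem \ref{thm:NNapprox} yields $\sup_{x\in\cl{X}_s\setminus\nu}\|\hat h(x)-h(x)\|_\infty\leq\epsilon=\cl{O}(\min(W,w)^{-\omega^*/d})$, while the exceptional set obeys $\mu(\nu)=\cl{O}(\epsilon^{1/\omega^*})=\cl{O}(\min(W,w)^{-1/d})$, which are precisely the asserted rates.

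Next I would pass from a supremum over $\cl{X}_s\setminus\nu$ to one over $\cl{X}\setminus\nu$. Since $\cl{X}_s=\mathrm{Supp}(\mu)$, we have $\mu(\cl{X}\setminus\cl{X}_s)=0$. Replacing the $\nu$ from Theorem \ref{thm:NNapprox} by $\nu\cap\cl{X}_s$ (which does not change $\cl{X}_s\setminus\nu$ and only decreases $\mu(\nu)$) we may assume $\nu\subseteq\cl{X}_s$, and then set $\nu':=\nu\cup(\cl{X}\setminus\cl{X}_s)$. Then $\mu(\nu')=\mu(\nu)=\cl{O}(\min(W,w)^{-1/d})$ while $\cl{X}\setminus\nu'=\cl{X}_s\setminus\nu$, so the same network $\hat h$ satisfies $\sup_{x\in\cl{X}\setminus\nu'}\|\hat h(x)-h(x)\|_\infty=\cl{O}(\min(W,w)^{-\omega^*/d})$; renaming $\nu'$ as $\nu$ gives the stated form of the corollary.

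Finally, for the ``furthermore'' clause I would revisit the role of $\nu$ in the proof of Theorem \ref{thm:NNapprox}: there $\nu$ is a neighborhood of the boundaries $\bigcup_i\partial A_i$ of the partition $\bs{A}$ witnessing the piece-wise H\"older property, i.e., exactly the region where distinct H\"older pieces of $h$ are stitched together and the uniform approximation cannot be controlled. When $h\in\bigotimes_i\cl{H}(\omega_i,\cl{X})$, each coordinate is globally H\"older on all of $\cl{X}$, so the partition may be taken trivial, $\bs{A}=\{\cl{X}\}$; there are then no interior stitching boundaries and the construction needs no exceptional region, so $\nu$ can be taken with $\mu(\nu)=0$. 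I expect the only mildly delicate point to be this last step --- namely confirming, by inspection of the proof of Theorem \ref{thm:NNapprox}, that $\nu$ is required only to handle inter-piece boundaries and not, e.g., variation within a single piece --- but in view of the explicit H\"older-approximation machinery already in place this is routine bookkeeping rather than a real obstacle; everything else is a reparametrization of Theorem \ref{thm:NNapprox}.
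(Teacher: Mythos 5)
Your proposal is correct and follows essentially the same route as the paper's own (much terser) proof: invert the $\epsilon$--$(W,w)$ relationship from Theorem \ref{thm:NNapprox} to get the rates $\min(W,w)^{-\omega^*/d}$ and $\mu(\nu)=\cl{O}(\epsilon^{1/\omega^*})=\cl{O}(\min(W,w)^{-1/d})$, and observe that $\nu$ arises only from covering the inter-piece boundaries, hence vanishes when $h$ is globally H\"older. Your extra care in passing from $\cl{X}_s\setminus\nu$ to $\cl{X}\setminus\nu$ by absorbing the $\mu$-null set $\cl{X}\setminus\cl{X}_s$ into $\nu$ is a detail the paper elides, but it is consistent with the intended argument.
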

\begin{proof}
Representing $\epsilon$ in terms of the width $w$ and number of non-zero parameters $W$ we get the above result. Also, when $h\in\prod_{i}\cl{H}({\omega}_i,\cl{X})$ is just product space of H\"older functions without any pieces, we don't require the measurable set $\nu$ to bound the error. As in the proof of Theorem \ref{thm:NNapprox},  $\nu$ arises as a consequence of covering the boundary of pieces in a piece-wise H\"older function.
\end{proof}

Theorem \ref{thm:NNapprox} and corollary \ref{cor:NN-approx} justifies the existence of a Relu neural net which can approximate the best $\ell_p(\delta)$ function attack $\lambda^f(x,y)$ as defined in \eqref{gradient_flow_attack}. The approximation error $\epsilon$ decays based on the number of non-zero parametres, the width $W$, $w$ of the network. The decay rate is controlled by $\omega^*/d$ which makes sense as the smoothness $\omega^*$ increases the function is easier to approximate; inversely it is harder to approximate as the intrinsic dimension $d$ of the input space increases.\\
Similar observations can be made about the measure of $\nu$; whose decay rate only depends on the dimension $d$, not the smoothness.
\section{Convergence of Adversarial training}

Let $f^*$ be the robust ($p$,$\delta$) model \eqref{adversarial_training}, this is the true robust model in some arbitrary function space $\cl{C}$. We would like to study how the empirical estimate of $\widehat{f}_n\in \cl{C}_n$ w.r.t sample size $n$ converges to $f^*$. Corresponding to any $f\in \cl{C}$ we also have the best $\ell_p(\delta)$ function attack $\lambda^f(x,y)\in \pfc (\tilde{\cl{C}})$  which maximises the adversarial loss \eqref{best_function_attack}. For notation convenience now on wards denote $\lambda^f\coloneqq\lambda^f(x,y)$ for any model $f$; also $\Lambda\coloneqq \pfc (\tilde{\cl{C}})$ original function space of attacks. Adversarial training to find the robust $(p,\delta)$ model $f^*$ is the sequential mini-max problem:
\begin{align*}
    \lambda^f&=\arg\max_{\lambda\in\Lambda} \bb{E}\cl{L}(f,\lambda,\cl{Z}),\quad \forall f\in \cl{C},\\
    f^*&=\arg\min_{f\in\cl{C}}\bb{E}\cl{L}(f,\lambda^f,\cl{Z}).
\end{align*}
 In order to find the empirical estimator $\widehat{f}_n$ of the above sequential mini-max we can restrict the function spaces to $\cl{C}_n,\Lambda_n$ and optimize the empirical adversarial loss such that:
 \begin{align}
     \widehat{\lambda}^{f}_n\in \Lambda_n:\bb{E}_n\cl{L}(f,\widehat{\lambda}^{f}_n,\cl{Z}) &\geq\sup_{\lambda\in\Lambda_n}  \bb{E}_n\cl{L}(f,\lambda,\cl{Z})-\eta_n,\quad \forall f\in \cl{C}
     \label{def:A-estimatoreq1}\\
     \widehat{f}_n\in\cl{C}_n: \bb{E}_n\cl{L}(\widehat{f}_n,\widehat{\lambda}_n^{\widehat{f}_n},\cl{Z}) &\leq\inf_{f\in\cl{C}_n}\bb{E}_n\cl{L}(f,\widehat{\lambda}^{f}_n,\cl{Z})+\Bar\eta_n.
     \label{def:A-estimatoreq2}
 \end{align}
$\bb{E}_n$ is just the empirical expectation over the observed $n$-sample data $\{(x_i,y_i)\}_{i=1}^n$, $\widehat{\lambda}^{f}_n,\widehat{f}_n$ are empirical etimators of $\lambda^f,f^*$ respectively. Notice that $\widehat{\lambda}^{f}_n,\widehat{f}_n$ might not reach the supremum and infimum respectively w.r.t. the adversarial loss exactly but are very close to it, up to some small error values $\eta_n,\Bar\eta_n$. This is done, as $\cl{C}_n,\Lambda_n$ (Sieve-spaces \cite{Grenander1981AbstractInference}) might not be closed in general.

\subsubsection*{Zero-Sum game and Nash condition}
In the real-life implementation, the above mini-max optimization faces practical bottlenecks due to equation \eqref{def:A-estimatoreq1} as we need to find $\widehat{\lambda}^{f}_n$ for all possible $f\in \cl{C}$ then proceed to minimization problem \eqref{def:A-estimatoreq2}. Fortunately, the adversarial training framework is a special form of mini-max optimization.

Let $V=\bb{E}\cl{L}(f,\lambda,\cl{Z})$, then a robust model is trying to minimize $V$ and a $\ell_p(\delta)$ attack is trying to minimize $-V$. This sort of mini-max optimization is known as a \textit{Zero-Sum} game \cite{TheoryJSTOR}. A classical result in game theory is that Nash equilibrium is attained at the solution of a zero-sum mini-max optimization. Hence, an equivalent optimization scheme to \eqref{def:A-estimatoreq1},\eqref{def:A-estimatoreq2} based on empirical Nash condition will be as follows:
\begin{align}
    \widehat{\lambda}_n\in \Lambda_n:\bb{E}_n\cl{L}(\widehat{f}_n,\widehat{\lambda}_n,\cl{Z}) &\geq\sup_{\lambda\in\Lambda_n}  \bb{E}_n\cl{L}(\widehat{f}_n,\lambda,\cl{Z})-\eta_n
     \label{def:Nasheq1}\\
     \widehat{f}_n\in\cl{C}_n: \bb{E}_n\cl{L}(\widehat{f}_n,\widehat{\lambda}_n,\cl{Z}) &\leq\inf_{f\in\cl{C}_n}\bb{E}_n\cl{L}(f,\widehat{\lambda}_n,\cl{Z})+\widetilde{\eta}_n
     \label{def:Nasheq2}
\end{align}
The above scheme is like a solution to a two-player game where the best robust model is minimization of the loss w.r.t. the best attack and the best attack itself is maximization of the loss w.r.t. the best robust model. The Nash equilibrium formulation is useful when we do adversarial training in practice; instead of exploring the whole parameter/function space we just iteratively find the best attack given a model and then robustly train the model given the best attack and continue the process until we reach equilibrium.

\subsubsection*{Convergence rate}
We will state convergence results on the expected difference in loss based on the empirical estimate $\widehat{f}_n$ and $f^*$; also $\widehat{\lambda}^f$ and $\widehat{\lambda}_n^f$ for any $f$ with respect to the sample size $n$. Also, we assume that the original parameter space for the model is a product of H\"older space i.e. $f\in \cl{C}=\prod_{i}^K\cl{H}(\bar{\omega}_i,\cl{X})$ with smoothness $\bar{\omega}^*=\min\{\bar{\omega}_i\}_{i=1}^n$.

\begin{theorem}[Adversarial training rates]
\label{thm:convergence}
Let $\cl{C}=\prod_{i}^K\cl{H}(\bar{\omega}_i,\cl{X})$  and $\Lambda=\pfc(\phol)$, and we assume that $\Lambda, \cl{C}$ are compact under $\|\cdot\|_\infty$. For all $\lambda,\lambda'\in\Lambda,\ f,f'\in\cl{C}$, assume the following conditions hold:
\begin{itemize}
    \item A0: Assume that $f^*\in \cl{C}$ with support  $ \Bar {\mathcal{X}}_s\subset[0,1]^{ D}$ with $\operatorname{dim}_M \Bar {\mathcal{X}}_s = \Bar d^* \leq  D $ (see Def. \ref{def:minkowski} and \ref{def:holder}),
    and that $\{\lambda_*^f : f\in\cl{C}\}\subset\Lambda$  on support ${\mathcal{X}_s} \subset[0,1]^{D }$ with $\operatorname{dim}_M {\mathcal{X}_s}  = d^* \leq D $. 
    \item A1: The loss is Lipschitz continuous in $f$ and $\lambda$,
    $$\cl{L}(f, \lambda, \cl{Z})-\cl{L}(f', \lambda', \cl{Z}) \prec \|f-f' \|_{\Bar {\mathcal{X}} } + \|\lambda-\lambda'\|_{{\mathcal{X}} },$$
    where $\norm{}_{\cl{X}},\norm{}_{\Tilde{\cl{X}}}$ are the supremum norms over the supports $\cl{X},\cl{\Tilde{\cl{X}}}$. 
    \item A2: The variance of the loss difference is bounded by its expectation, specifically in terms of $f$ as follows: $$\mathbb{V}[\cl{L}(f,\lambda^{f}, \cl{Z})-\cl{L}(f^*,\lambda^{f^*}, \cl{Z})] \prec \mathbb{E}[\cl{L}(f,\lambda^{f}, \cl{Z})-\cl{L}(f^*,\lambda^{f^*},\cl{Z})] \prec \|f-f^*\|^{2}_{\widetilde{{\mathcal{X}}}} $$
    for any $\widetilde{{\mathcal{X}}}\subset\Bar {\mathcal{X}}_s$.
    \item A3: The variance of the loss difference is bounded by its expectation, specifically in terms of $\lambda$ as follows:
    $$\mathbb{V}[\cl{L}(f,\lambda^f, \cl{Z})-\cl{L}(f,\lambda, \cl{Z})] \prec \mathbb{E}[\cl{L}(f,\lambda^f, \cl{Z})-\cl{L}(f,\lambda, \cl{Z})]\prec \|\lambda^f-\lambda\|^{2}_{\widetilde{{\mathcal{X}}}} + \mathbb{P}^{2\omega^*}(x\not\in \widetilde{\mathcal{X}})$$ for any  $\widetilde{{\mathcal{X}}}\subset {\mathcal{X}}_s$.
\end{itemize}
Pick any two values $\Bar r>\underline r\geq \left(\frac{d^*}{\omega^*}\vee \frac{\Bar d^*}{\Bar \omega^*}\right)$. Consider estimators $\widehat{f}_n,\widehat{\lambda}_n^f$ in equations \eqref{def:A-estimatoreq1},\eqref{def:A-estimatoreq2} with $\eta_n, \Bar\eta_n = o_\mathbb{P}(n^{-2/(2+ \Bar r)})$ where $\Lambda_n=\mathcal{F}_{\sigma}(L, W_n, w_n, \kappa_n)$ and $\cl{C}_n=\mathcal{F}_{\sigma}(\Bar L, \Bar W_n, \Bar w_n, \Bar \kappa_n)$ implement neural networks (cf. Definition \ref{def:neuralnets}) satisfying $W_n, \Bar W_n, w_n, \Bar w_n \asymp n^{\underline r/(\underline r+2)}$ and $\kappa_n, \Bar \kappa_n \asymp n^{c}$ for any large enough choices of $L,\Bar L, c>0$. Then:
\[\mathbb{E}[\cl{L}(\widehat{f}_n,\lambda^{\widehat{f}_n}, \cl{Z})-\cl{L}(f^*,\lambda^{f^*}, \tilde{\cl{Y})}]=o_{\mathbb{P}}(n^{-2/(2+\Bar r)}),\]
\[\sup_{f\in\cl{C}_n} \mathbb{E}[\cl{L}(f,\lambda^f, \cl{Z})-\cl{L}(f,\widehat\lambda_n^f, \cl{Z})]=o_{\mathbb{P}}(n^{-2/(2+\Bar r)}).\]
Hence, $d(\widehat{f}_n, f^*)=o_\mathbb{P}(1)$ for any (pseudo-)norm $d(\cdot,\cdot)$ under which $\mathbb{E}[l(f, \lambda^f,\tilde{\cl{Y})}]$ is compact and continuous. Further, if $d(f, f^*)^{1/q}\prec \mathbb{E}[\cl{L}(f, \cl{Z})-\cl{L}(f^*,\cl{Z})]$ for $q>0$, we get:\looseness=-1
$$d(\widehat{f}_n, f^*)=o_\mathbb{P}(n^{-2q/(2+\Bar r)}).$$
\end{theorem}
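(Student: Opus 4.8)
The plan is to decompose the excess adversarial risk into an approximation term (bias from using sieves $\cl{C}_n, \Lambda_n$ instead of $\cl{C}, \Lambda$) and a stochastic/estimation term (the gap between empirical and population risks), and then balance the two by the prescribed scaling of $W_n, w_n, \kappa_n$. First I would invoke Theorem~\ref{thm:NNapprox} and Corollary~\ref{cor:NN-approx} twice: once to approximate $f^*\in\cl{C}=\prod_i^K\cl{H}(\bar\omega_i,\cl{X})$ by some $\bar f_n\in\cl{C}_n$ with sup-norm error $\cl{O}(\bar w_n^{-\bar\omega^*/\bar d^*})$ (no exceptional set $\nu$ here, since $f^*$ is genuinely H\"older, not merely piece-wise H\"older, per the last line of Corollary~\ref{cor:NN-approx}), and once to approximate each $\lambda^f\in\Lambda=\pfc(\phol)$ by $\bar\lambda_n^f\in\Lambda_n$ with error $\cl{O}(w_n^{-\omega^*/d^*})$ up to a set $\nu$ with $\mu(\nu)=\cl{O}(w_n^{-1/d^*})$. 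Combining with the Lipschitz assumption A1, these sup-norm approximation errors transfer to $\cl{O}(w_n^{-\omega^*/d^*} \vee \bar w_n^{-\bar\omega^*/\bar d^*})$ bias in the loss, with an additional $\mu(\nu)$-type contribution handled exactly by the $\mathbb{P}^{2\omega^*}(x\notin\widetilde{\mathcal{X}})$ term that was deliberately inserted into A3; under $w_n,\bar w_n\asymp n^{\underline r/(\underline r+2)}$ and $\underline r\geq (d^*/\omega^*)\vee(\bar d^*/\bar\omega^*)$, all of these are $\cl{O}(n^{-2/(2+\underline r)})$, hence $o_\mathbb{P}(n^{-2/(2+\bar r)})$ since $\bar r>\underline r$.

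Next I would control the stochastic term using empirical-process theory over the sieves. The key inputs are: (i) a covering-number / metric-entropy bound for $\mathcal{F}_\sigma(L,W_n,w_n,\kappa_n)$, which is standard and scales like $W_n \log\kappa_n \asymp n^{\underline r/(\underline r+2)}\cdot \log n$ up to constants depending on $L$; (ii) the variance-expectation coupling in A2 and A3, which is precisely the Bernstein-type condition needed to obtain a "fast rate" rather than the naive $n^{-1/2}$; and (iii) the zero-sum/Nash reformulation \eqref{def:Nasheq1}--\eqref{def:Nasheq2}, which lets me treat the inner-max and outer-min estimation errors essentially separately, so that the $\sup_{f}$ in the second display is uniform over $\cl{C}_n$ but the attack player's error is still governed by the single entropy bound for $\Lambda_n$. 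A standard peeling/chaining argument (e.g.\ via Talagrand's inequality or the localized Rademacher complexity bound, in the spirit of van de Geer or Wainwright) then yields that the empirical excess risk is within $o_\mathbb{P}(n^{-2/(2+r')})$ of the population one, where $r'$ is the entropy exponent $\underline r$; choosing $\bar r>\underline r$ gives slack to absorb the $\log n$ factors and the slack errors $\eta_n,\bar\eta_n = o_\mathbb{P}(n^{-2/(2+\bar r)})$. Assembling bias plus stochastic term gives $\mathbb{E}[\cl{L}(\widehat f_n,\lambda^{\widehat f_n},\cl{Z})-\cl{L}(f^*,\lambda^{f^*},\cl{Z})]=o_\mathbb{P}(n^{-2/(2+\bar r)})$, and the analogous argument with the roles interchanged (fix $f$, estimate $\lambda^f$) gives the second display.

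Finally, the consistency and rate-transfer conclusions are soft corollaries: once $\mathbb{E}[\cl{L}(\widehat f_n,\lambda^{\widehat f_n},\cl{Z})-\cl{L}(f^*,\lambda^{f^*},\cl{Z})]\to 0$ in probability and $\mathbb{E}[\cl{L}(f,\lambda^f,\cl{Z})]$ is continuous and has compact sublevel behavior under a pseudonorm $d$, a standard argmin-continuity (M-estimation) argument forces $d(\widehat f_n,f^*)=o_\mathbb{P}(1)$; and if additionally $d(f,f^*)^{1/q}\prec \mathbb{E}[\cl{L}(f,\cl{Z})-\cl{L}(f^*,\cl{Z})]$, then combined with A2 (which upper-bounds the adversarial excess risk by $\|f-f^*\|^2$ and lower-bounds it by the same, i.e.\ the adversarial and clean excess risks are comparable up to the curvature) one inverts the rate to get $d(\widehat f_n,f^*)=o_\mathbb{P}(n^{-2q/(2+\bar r)})$.

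I expect the main obstacle to be item (iii) above: making the sequential (Stackelberg) mini-max in \eqref{def:A-estimatoreq1}--\eqref{def:A-estimatoreq2} rigorously equivalent to, or controllable by, the simultaneous Nash formulation in the empirical world, and handling the fact that $\widehat\lambda_n^f$ must be controlled \emph{uniformly in} $f\in\cl{C}$ (not just at $f=\widehat f_n$) — this forces a uniform-in-$f$ empirical process bound for the family $\{\lambda\mapsto\cl{L}(f,\lambda,\cl{Z}) : f\in\cl{C}_n\}$, whose complexity is the \emph{product} of the two sieve complexities, and it is here that one must use A1 (to Lipschitz-reduce the $f$-dependence to a finite net over $\cl{C}_n$) together with the entropy bound for $\Lambda_n$ to keep the combined entropy at the order $n^{\underline r/(\underline r+2)}$ rather than something larger. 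A secondary technical nuisance is bookkeeping the exceptional set $\nu$ through the Lipschitz estimate: one must verify that the loss contribution on $\nu$ is genuinely of order $\mu(\nu)$ (using boundedness $\|f\|_\infty\le B$ and $\|\lambda\|_p\le\delta$ from Definitions~\ref{def:neuralnets} and~\ref{def:perturbation_function_class}) and that this matches the $\mathbb{P}^{2\omega^*}(x\notin\widetilde{\mathcal X})$ slack term in A3 under the stated width scaling.
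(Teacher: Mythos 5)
Your overall architecture matches the paper's: the paper also splits the analysis into a sieve-approximation bias plus a stochastic term, but it does essentially no empirical-process work itself --- it cites Theorem 3.1/3.2 of Metzger (2022) for the entire localization/entropy machinery and only computes the two approximation quantities $\epsilon_n=\sup_{f\in\cl{C}_n}\mathbb{E}[\cl{L}(f,\lambda^f,\cl{Z})-\cl{L}(f,\pi_n\lambda^f,\cl{Z})]$ and $\bar\epsilon_n$ via Corollary \ref{cor:NN-approx} together with A2/A3. Your plan to rebuild the stochastic side from scratch (ReLU covering numbers, Bernstein condition, peeling) is more self-contained than what the paper actually writes, and your identification of the uniform-in-$f$ control of $\widehat\lambda_n^f$ as the delicate point is a fair reading of what is being delegated to the cited result.

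There is, however, one concrete slip in your bias bookkeeping. You transfer the sup-norm approximation errors into the loss through the \emph{linear} Lipschitz bound A1, concluding a bias of order $w_n^{-\omega^*/d^*}\vee\bar w_n^{-\bar\omega^*/\bar d^*}$ and then asserting this is $\cl{O}(n^{-2/(2+\underline r)})$. With $w_n\asymp n^{\underline r/(\underline r+2)}$ and only $\underline r\geq d^*/\omega^*$, the exponent $\underline r\,\omega^*/(d^*(\underline r+2))$ is guaranteed to be at least $1/(\underline r+2)$, not $2/(\underline r+2)$; the linear route therefore only delivers $n^{-1/(\underline r+2)}$ in the worst case, which does not beat the target $n^{-2/(2+\bar r)}$ unless $\bar r>2\underline r+2$. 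The paper instead routes the bias through the \emph{quadratic} upper bounds in A2 and A3 (expected loss difference $\prec\|\cdot\|^2_{\widetilde{\cl{X}}}+\mathbb{P}^{2\omega^*}(x\notin\widetilde{\cl{X}})$), which squares the exponent to $2\underline r\,\omega^*/(d^*(\underline r+2))\geq 2/(\underline r+2)$ and makes the $\mu(\nu)^{2\omega^*}$ contribution from the exceptional set match as well. You have both A2 and A3 in hand and already use them for the variance--expectation coupling, so the fix is immediate, but as written the rate claim in your first paragraph does not follow from A1.
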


We have the following remarks regarding the assumptions in the above theorem.
\begin{remark}

$A_0$ is a trivial assumption ensuring that our robust model $f^*$ belongs to a product space of H\"older continuous functions. This is a reasonable assumption for regression/classification problems in the deep learning realm. Also, the best attack $\lambda^f$ for any $f\in \cl{C}$ belongs to the product space of piece-wise H\"older functions; this is supported by our discussions in section \ref{subsec:motivation_piece}.
$A1$ enforces Lipschitz continuity for our adversarial loss $\cl{L}(f,\lambda,\cl{Z})$ w.r.t. $\lambda$ and $f$. 
$A2, A3$ suggests that the variance and mean of difference in adversarial loss can be controlled by the difference of the underlying parameters. In $A3$, we have the extra term $\mathbb{P}^{2\omega^*}(x\not\in \widetilde{\mathcal{X}})$. This will be used to account for the small measure set where our neural network can't approximate the attack well (Refer Theorem \ref{thm:NNapprox}). That is, we will let $\widetilde{\mathcal{X}}$ be the measurable set where approximation fails and $\omega^*$ is the smoothness factor for piece-wise H\"older function space.
\end{remark}

\section{Numerical Experiments}
As discussed earlier the sequential mini-max optimization of our adversarial loss \eqref{def:A-estimatoreq1}\eqref{def:A-estimatoreq2} is a zero-sum game and must reach the Nash-equilibrium at the optimum \eqref{def:Nasheq1}\eqref{def:Nasheq2}. Empirically we can reach such an equilibrium by emulating the adversarial \textit{Zero-sum} game between the estimators of the robust model $f$ \eqref{adversarial_training} and corresponding best attack $\lambda^f$ \eqref{best_function_attack}.

Motivated by this, given a sample of $N$ data points we train two neural networks $f,\lambda^f$ with parameters $\theta_f,\theta_{\lambda}$; competing against each other until convergence or Nash-equilibrium is reached (Algorithm: \ref{alg:adv_training}). $T$ is the number of training epochs; $P$ represents the number of iterations the model $f$ is trained in each epoch; correspondingly $H$ is the number of iterations $\lambda^f$ is trained in one epoch; $\tilde{\gamma},\Bar\gamma$ are the learning rates for $f,\lambda^f$. At each training epoch, $f$ is optimized in order to minimize the adversarial loss given the current state of $\lambda^f$; and $\lambda^f$ maximizes the adversarial loss (minimizes negative adversarial loss) given the current state of $f$. This is in a similar flavor to the training of general adversarial networks (GANs) \cite{Goodfellow2014GenerativeNetworks}, where the discriminator plays an analogous role to our robust model $f$ and the generator is analogous to the attack model $\lambda^f$ in our scenario.

\begin{algorithm}
\small
\caption{Adversarial Training}\label{alg:adv_training}
\begin{algorithmic}
\Initialize{$t\gets0,h\gets0,p\gets0$.
Each entry of $\theta_f,\theta_{\lambda}\sim^{iid} \mathcal{U}(-\sqrt{\frac{1}{\texttt{in\_features}}}, \sqrt{\frac{1}{\texttt{in\_features}}})$, where \texttt{in\_features} means the width of the previous layer in the network.} 
\While{$t\leq T$}
\While{$p\leq P$}
\State $\theta_{f}\gets \theta_{f}-\tilde{\gamma} \nabla_{\theta_f} \sum_{i=1}^n\cl{L}(f,\lambda^f,(x_i,y_i))$ \Comment{Can be any gradient-based optimizer, e.g., SGD, ADAM, etc }
\State $p\gets p+1$
\EndWhile
\While{$h \leq H$}
\State $\theta_{\lambda}\gets \theta_{\lambda}+\bar{\gamma} \nabla_{\theta_{\lambda}} \sum_{i=1}^n\cl{L}(f,\lambda^f,(x_i,y_i))$
\State $h\gets h+1$
\EndWhile
\State $p\gets 0, h\gets 0,t\gets t+1$
\EndWhile
\end{algorithmic}
\end{algorithm}

We apply the adversarial framework in both classification and regression settings. For the classification setting, we use the two-dimensional geometric datasets (see; Appendix \ref{A:exp_details} and Figure \ref{fig:dataset_classification}); we visually evaluate the consistency of the learned attack network with respect to the theoretical best attack \eqref{gradient_flow_attack} based on projected gradient flow. We also use the number of misclassifications and adversarial loss to evaluate the robustness, and attack strength of $f,\lambda^f$ respectively. For the regression setting, we use the (\texttt{Boston}) \cite{BostonHousing}, (\texttt{Diabetes}) \cite{Diabetes} datasets; we evaluate adversarial loss on the test dataset as a performance metric.

\subsection*{Classification}
We implement algorithm \ref{alg:adv_training} with $T=100, H=1, P=1$ and $\tilde{\gamma}=10^{-3},\Bar\gamma=2\times 10^{-4}$ for all the classification datasets. We use $\delta=0.2$ (attack strength) and $\ell_p:p=2,\infty$ for our experiment. 
When $\delta$ is big, the so-called \textit{robust overfitting} might occurs, and optimization results may not be satisfactory, and more discussion about it can be found in Section \ref{sec:dis}.
For all feasible values of $\delta$, the analysis remains the same.
Furthermore, the $\ell_p$ constraint on the attack network $\lambda^f$ is implemented in the forward pass through learned scaling (between 0 and 1) and normalization. Architecture and implementation details can be found in Appendix \ref{A:exp_details}. We use the multinomial negative loglikelihood or the cross-entropy loss as our loss function $\cl{L}(f(x),y)=\sum_{j=0}^{C-1}\bb{1}_{y=j}\log{\rm Softmax}(f(x))_j$; the corresponding adversarial loss is then $\cl{L}(f,\lambda^f,(x,y))=\cl{L}(f(x+\lambda^f(x,y)),y)=\sum_{j=0}^{C-1}\bb{1}_{y=j}\log{\rm Softmax}(f(x+\lambda^f(x,y)))_j$. Here, $C$ is the number of classes.
\begin{sidewaysfigure}
     \centering
     \begin{subfigure}[b]{0.2\textwidth}
         \centering
         \includegraphics[trim={0 3cm 2cm 3cm},clip,height=12cm]{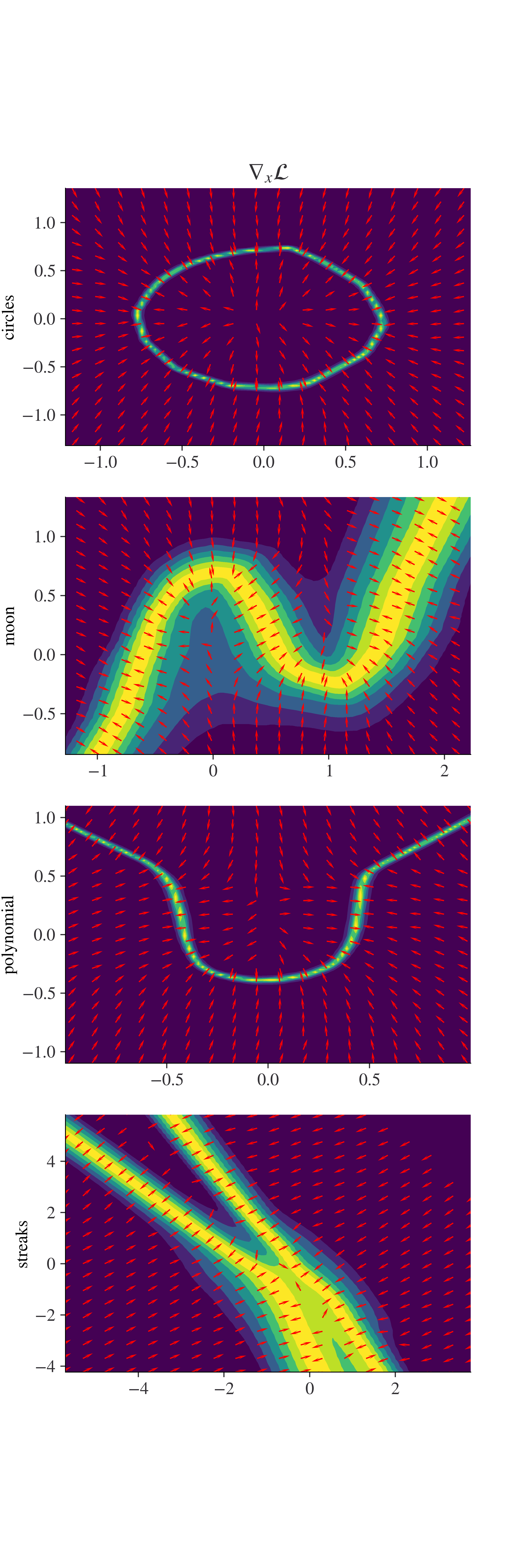}
         \caption{Clean Training}
         \label{fig:clean_training}
     \end{subfigure}
     \hfill
     \begin{subfigure}[b]{0.35\textwidth}
         \centering
         \includegraphics[trim={1.5cm 3cm 2cm 3cm},clip,width=7cm,height=12cm]{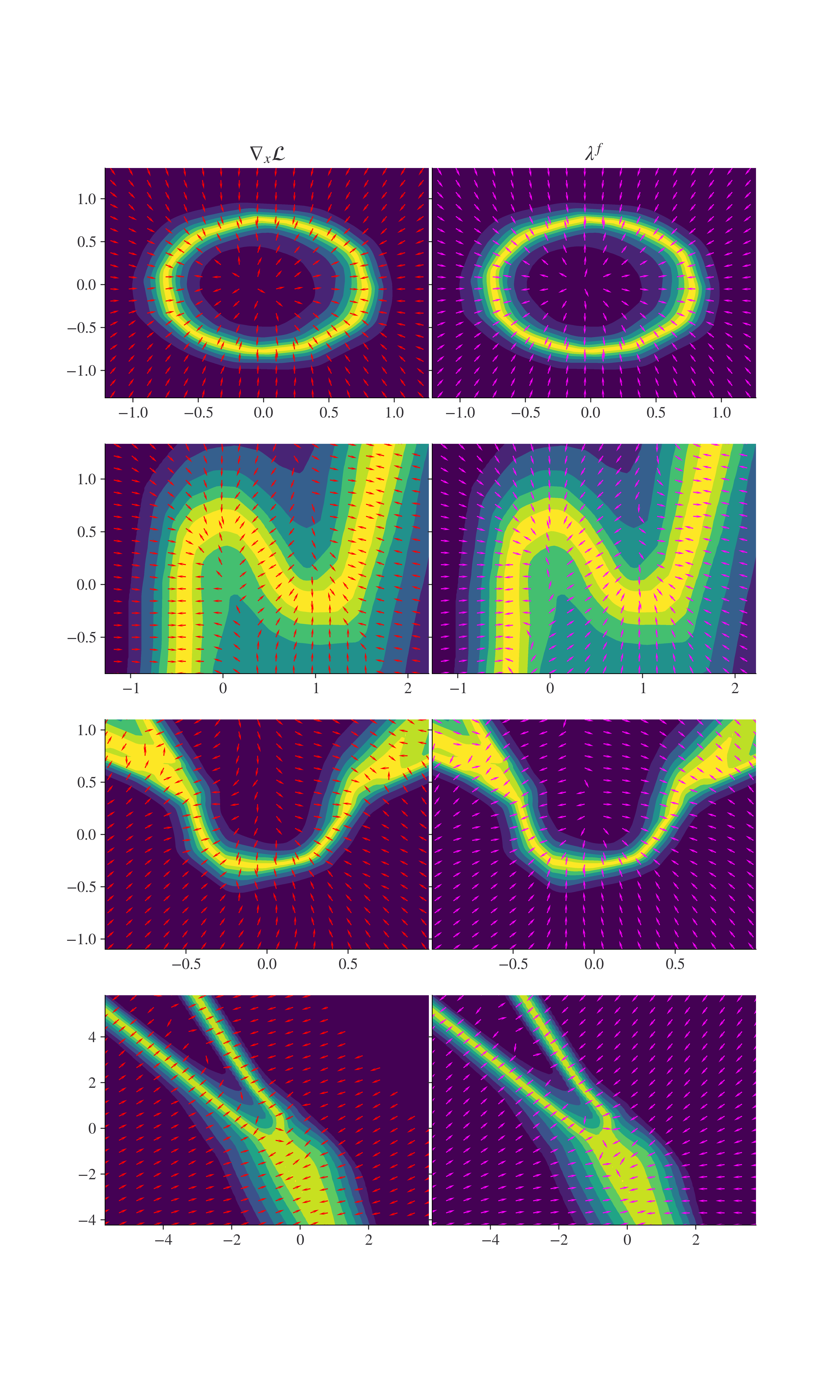}
         \caption{$\ell_{2}(0.2)$}
         \label{fig:l2attack}
     \end{subfigure}
     \hfill
     \begin{subfigure}[b]{0.35\textwidth}
         \centering
         \includegraphics[trim={1.5cm 3cm 1cm 3cm},clip,width=7cm,height=12cm]{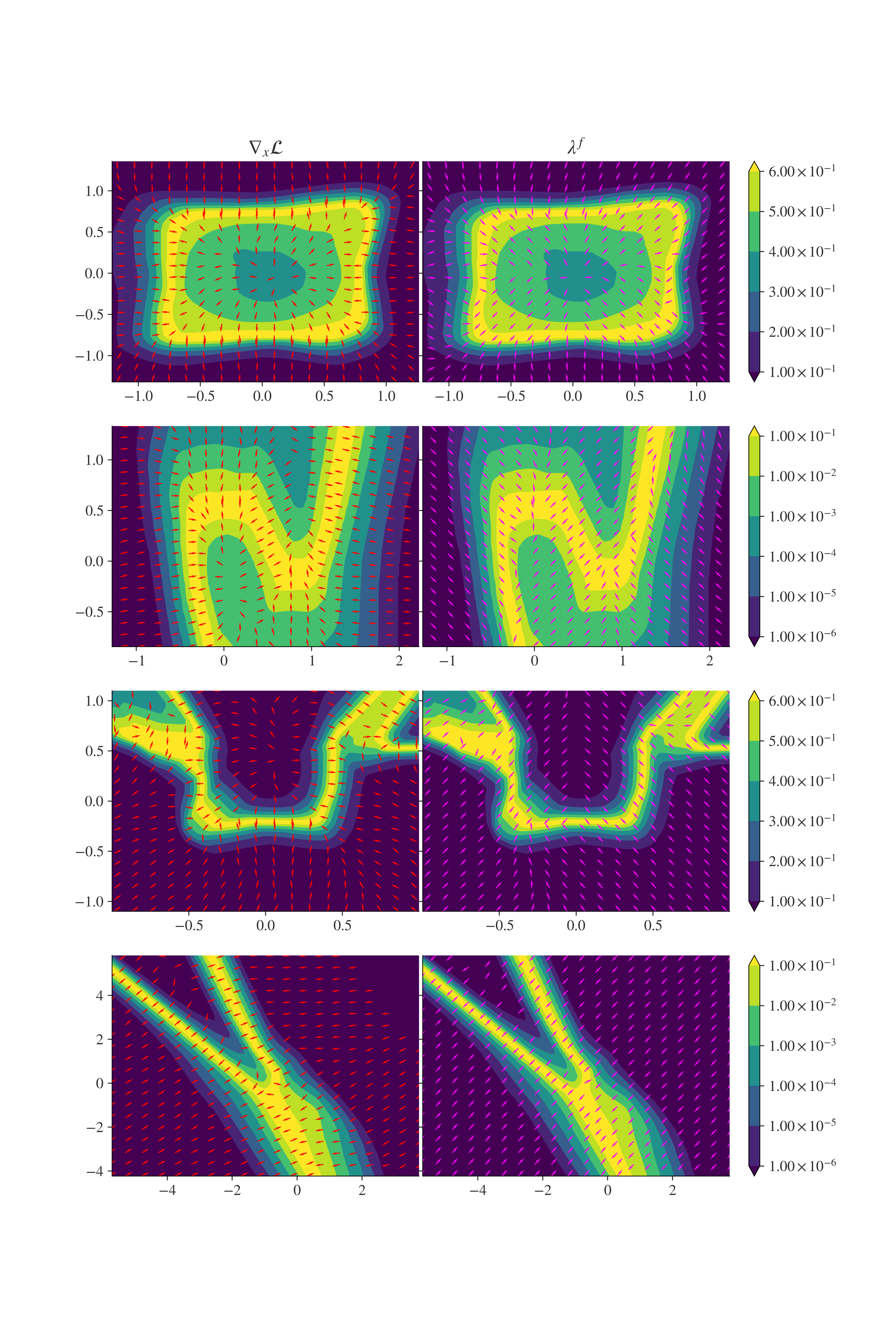}
         \caption{$\ell_{\infty}(0.2)$}
         \label{fig:linfinityattack}
     \end{subfigure}
        \caption{\textbf{(a)} Clean trained model loss, the red arrows represent the gradient direction of the loss at each point. \textbf{(b)} Adversarially trained model based on algorithm \ref{alg:adv_training} with $\norm{\lambda^f}_{2}\leq 0.2$. \textbf{(c)} Adversarially trained model based on algorithm \ref{alg:adv_training} with $\norm{\lambda^f}_{\infty}\leq 0.2$. For figures (b) and (c); [Left]: Red arrows represent the gradient of the adversarially trained model loss. [Right]: Purple arrows represent the attack generated by $\lambda^f$. The contour colors represent the value of the corresponding loss at each point.
        }
        \label{fig:gradientvsNN}
\end{sidewaysfigure}
\subsubsection*{Gradient flow and Neural Network attack direction}
Figure \ref{fig:clean_training}, shows the loss surface and gradient direction $\nabla_x \cl{L}(f_{clean}(x),y))/\|\nabla_x \cl{L}(f_{clean}(x),y))\|$ corresponding to $x$'s on a grid, where the response label $y$ is imputed by
model $f_{clean}$, which achieves almost 100\% accuracy and is viewed as an oracle. 
Here $f_{clean}$ is the clean model trained to minimize the standard loss ($f_{clean}=\arg\min_{f\in \cl{C}_n} \bb{E}_n\cl{L}(f(x),y)$) without any adversarial perturbation. The higher loss regions correspond to the \textit{estimated decision boundary} (EDB) of the dataset; this is consistent as the loss is locally maximized when any pair of classes have approximately equal probability.

Figure \ref{fig:l2attack} shows the loss surface, gradient direction, and learned attack direction of $\lambda^f$ for the adversarially trained $\ell_2(0.2)$ robust model $f$ based on Algorithm \ref{alg:adv_training}. As discussed in Section \ref{subsubsec:pgf}, \eqref{gradient_flow_attack} is the best theoretical attack. Ignoring the deflection term for saddle points, the underlying ODE is dictated by a projected gradient flow (\ref{progected_gradient_flow}, \ref{eqn:lp_notinf_ODE}).
For $p=2$, the projected gradient flow projects the original gradient flow onto local spheres. Due to the smoothness of spheres in every direction, we expect a flow visually similar to the original gradient flow.
This is what we see in figure \ref{fig:l2attack}, where $\nabla_x \cl{L}(f(x),y)$ (left) and $\lambda^f$ (right) are essentially the same. This also corroborates the approximation capability of neural network for piece-wise H\"older function space (Theorem \ref{thm:NNapprox}) where the theoretical best attack resides. 

Figure \ref{fig:linfinityattack} is the $\ell_{\infty}(0.2)$ case of the prior discussion. Unlike the $\ell_2$ case, for $\ell_{\infty}$, the projected gradient flow projects the gradient trajectory onto local cubes, which are not smooth in all directions. The gradient flow dynamic continues after reaching the boundary and gets projected onto the sides/diagonals of a cube \eqref{eqn:l_inf_ODE}. This is consistent with Figure \ref{fig:linfinityattack} where $\nabla_x \cl{L}(f(x),y)$ (left) is more smooth, but $\lambda^f$ (right) is restricted only to diagonal directions.

Subsequently, comparing the adversarially trained EDB with clean trained EDB we see that the EDB is wider after adversarial training which essentially induces the \textit{robustness} in the model. Also, due to the prior projection arguments $\ell_2$ adversarial training preserves the geometry of the EDB while $\ell_{\infty}$ projects the original EDB onto local cubes, making it more blocky; this phenomenon is most prominent for the circles dataset where EDB transforms from a circle ($\ell_2$) to a square ($\ell_{\infty}$) as expected.
\begin{figure}
    \centering
    \includegraphics[scale=0.6]{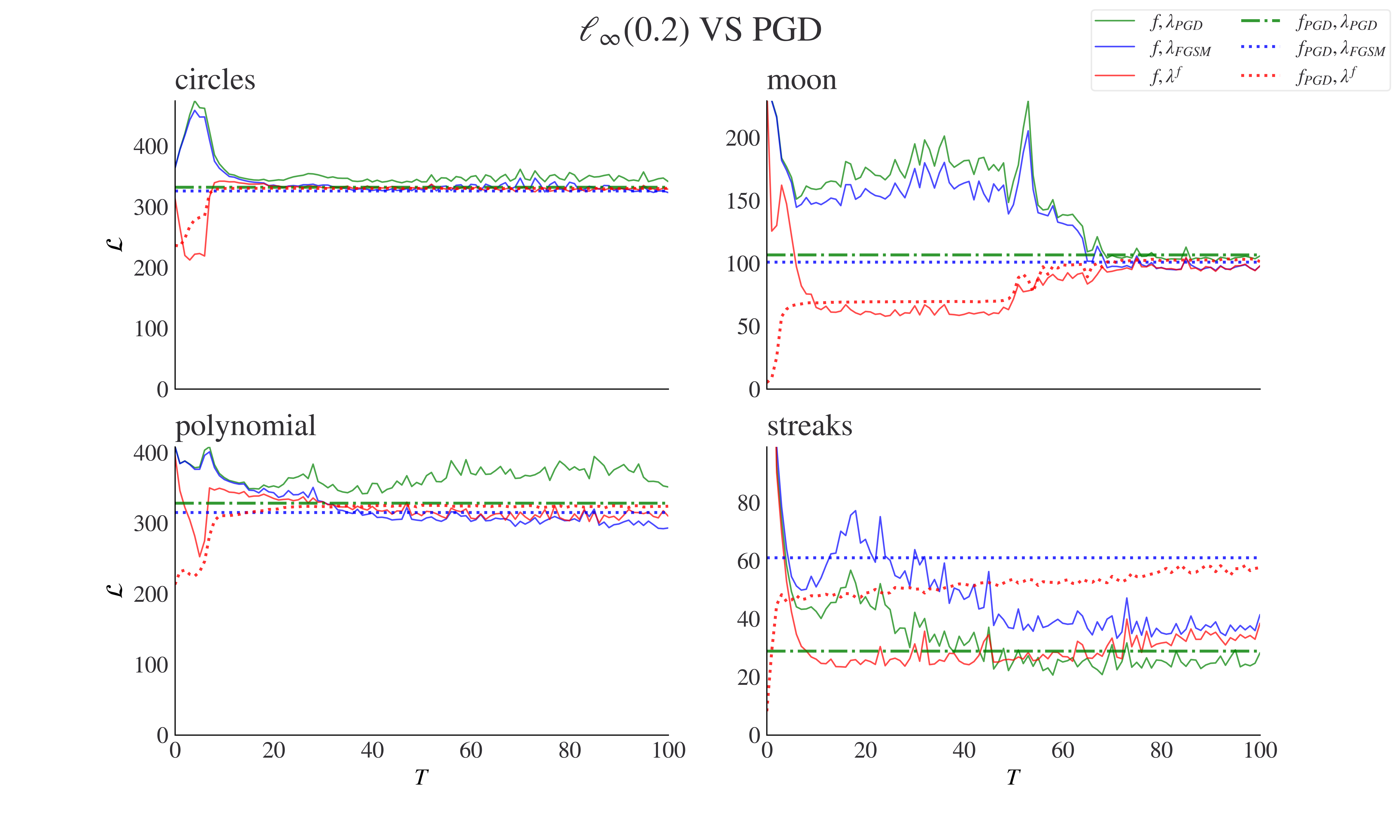}
    \caption{\textbf{(Classification setting)} Test Loss vs training epochs for adversarial training based on Algorithm \ref{alg:adv_training}. Where $f$ is our adversarially trained classifier and $f_{PGD}$ PGD trained robust classifier. For the attacks: $\lambda^f$ represents adversarially trained neural network attack based on $f$ \textit{(semi-white box)}; $\lambda_{PGD}$ represents \textit{(white-box)} PGD attack; $\lambda_{FGSM}$ represents \textit{(white-box)} FGSM attack.\\ In the legends the combination $(f,\lambda_{PGD})$ represents the empirical loss $\sum_{i=1}^{n_{test}}\cl{L}(f,\lambda_{PGD},(x_i,y_i))$ when the PGD attack $\lambda_{PGD}$ is performed on the adversarially trained defense $f$, similarly $(f_{PGD},\lambda_{PGD})$ represents the empirical loss $\sum_{i=1}^{n_{test}}(f_{PGD},\lambda_{PGD},(x_i,y_i))$ when $\lambda_{PGD}$ attack is performed on $f_{PGD}$; and so on.}
    \label{fig:loss_plot}
\end{figure}
\begin{figure}
    \centering
    \includegraphics[scale=0.6]{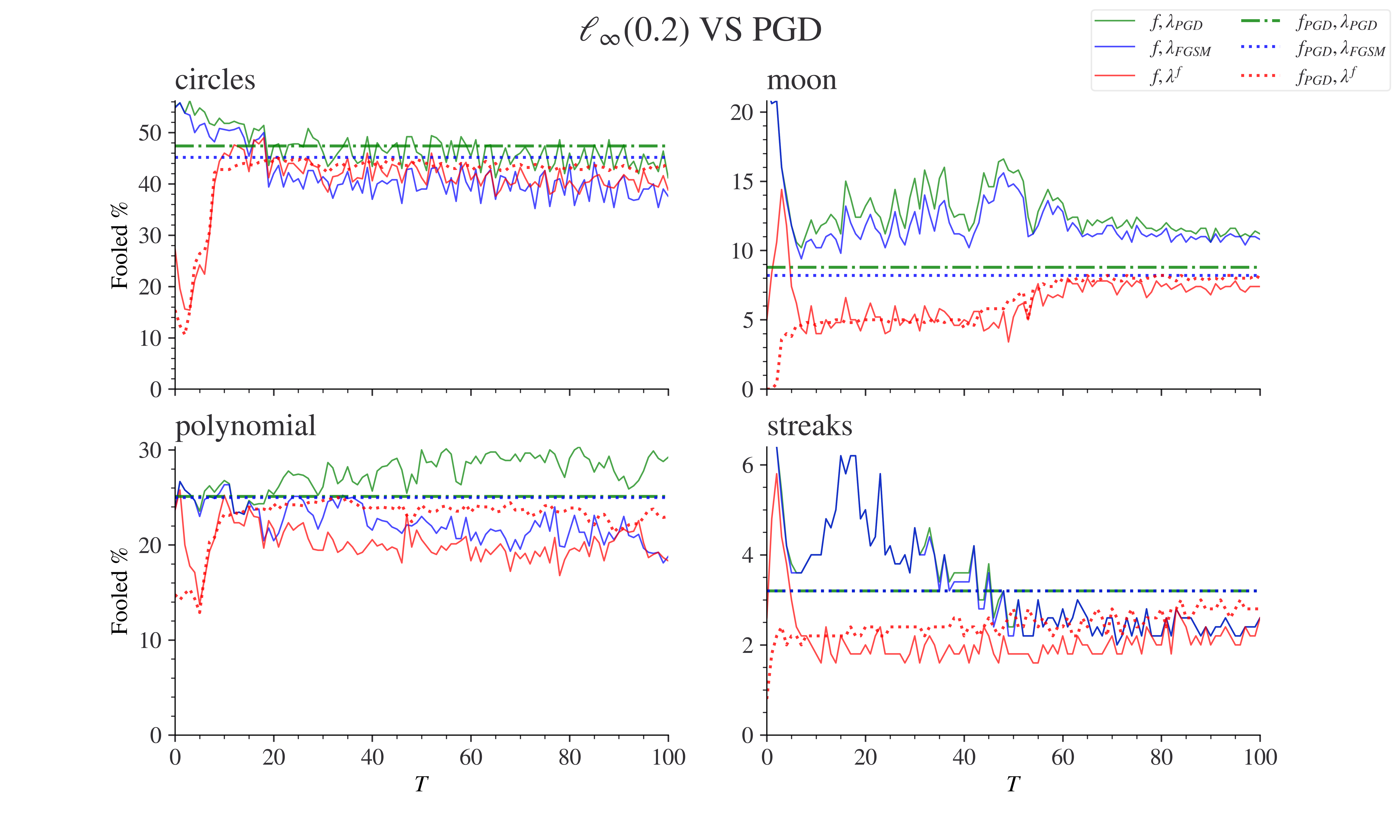}
    \caption{Percentage of mis-classified test labels vs training epochs for adversarial training based on Algorithm \ref{alg:adv_training}. 
    The notation and legend in the plots follow Figure \ref{fig:loss_plot}.
    }
    \label{fig:performance_plot}
\end{figure}

\subsubsection*{Comparison with PGD attacks and PGD trained robust classifer}
In section \ref{subsec:discrete_best_atk} we discussed how PGD attack \eqref{pgd_attack} \cite{Madry2017TowardsAttacks} is just a discretization of the theoretical best possible attack \eqref{gradient_flow_attack}. With small enough step-size $\gamma$ we can treat PGD attack as a baseline (i.e., discrete approximation of best attack), for notational convenience we will denote $\lambda_{PGD}$ as the PGD attack.  Similarly, $\lambda_{FGSM}$ denotes the FGSM attack mentioned in \cite{Goodfellow2014ExplainingExamples} which is just PGD attack with $\gamma=\delta$ and a single iteration step. 
We train $f_{PGD}$ which aims to minimize the PGD-based adversarial loss $\bb{E}_n\cl{L}(f(x+\lambda_{PGD}),y)$, this is the traditional way of adversarial training mentioned in \cite{Madry2017TowardsAttacks}. In our framework, the attack generation itself is approximated by a neural network $\lambda_f$ rather than a gradient-based method, and the adversarial training becomes a mathematical game between $f$ and $\lambda_f$. The significance of $f_{PGD}$ is that with small step-size $\gamma$ for PGD attack generation, it will act as a baseline for our version of adversarial training.

Figures \ref{fig:loss_plot} and \ref{fig:performance_plot} show the evaluated adversarial loss and misclassification rate for all attack-defense combinations on the \textit{test data} as we progress through adversarial training. For all the datasets we see that the adversarial loss for our robust network $f$ with attacks $\lambda^f,\lambda_{PGD},\lambda_{FGSM}$ converge to their respective $f_{PGD}$ counter-parts. The loss that our framework deals with is corresponding to the $(f,\lambda^f)$ pair (solid red line) and the baseline would be $(f_{PGD},\lambda_{PGD})$ (dashed green line). For the \texttt{streaks} dataset there seems to be a gap between PGD defense $f_{PGD}$ against FGSM attack $\lambda_{FGSM}$ and PGD attack $\lambda_{PGD}$; we see that our adversarially trained attack $\lambda^f$ tries to match $\lambda_{FGSM}$ attack on $f_{PGD}$. On the other hand our defense $f$ tries to match the loss levels of $f_{PGD}$ defense against its own $\lambda_{PGD}$ attack. Comparing the solid curves (defense model $f$) with non-solid curves (defense model $f_{PGD}$), our defense model has a lower loss under the same attack, indicating its higher adversarial robustness.

For the misclassification rate, we see that,  for \texttt{circles,streaks} datasets, $f$ shows stronger defense for all types of attacks compared to $f_{PGD}$ but the attack strength of $\lambda^f$ is weaker compared to $\lambda_{PGD}$. For \texttt{moon} the opposite happens where the $\lambda^f$ shows misclassifications comparable to $\lambda_{PGD}$ but the defense $f$ is a bit weaker than $f_{PGD}$ but still competitive. For \texttt{polynomials}, our defense $f$ performs better against $\lambda_{FGSM}$ compared to $f_{PGD}$, but $f_{PGD}$ is a bit better to defend against $\lambda_{PGD}$. This is not that shocking as $f_{PGD}$ should perform well against $\lambda_{PGD}$ as it is adversarially trained over it.
To comment on the differences between PGD and our method, we note that our attack/defense model is trained solely based on loss values, while the implemented PGD algorithm in our simulation combines the loss value and misclassification. \footnote{PGD, for classification, follows the standard PGD iterations on each sample until it is misclassified. If a sample is misclassified, we do not implement further PGD iterations. Early stopping until misclassification maximizes the misclassification rate. \tiny{\url{https://github.com/rhaldarpurdue/Adversarial-NN-simulations/blob/main/utils.py}}}
Furthermore, the PGD attack is a white box attack, while our $\lambda_f$ is a \textit{semi-white box} attack.


\subsubsection*{White box vs Semi-white box attacks:}
Note that technically the neural network-based attack $\lambda^f$ is at a disadvantage compared to the gradient-based attacks $\lambda_{PGD},\lambda_{FGSM}$. $\lambda^f$ learns the functional form of the attack and doesn't have access to the original model while generating new attacks on the test dataset, this is known as a \textit{semi-white} box setting, i.e., once the model has been trained no extra gradient computations are done.

In contrast, $\lambda_{PGD},\lambda_{FGSM}$ need to be computed from scratch for each new sample in the test dataset and also need to have access to the original classifier models $f,f_{PGD}$. These attacks hence fall into the \textit{white-box} setting.

The competitive performance based on test loss and misclassification rate against various defense/attack combinations compared to the gradient-based method is consistent with the theory. In real life, one saves computation costs while using our framework, as $\lambda^f$ approximates \eqref{gradient_flow_attack} using neural networks. Also, the defense $f$ learned as a consequence provides competitive performance or sometimes better robustness based on these experiments.

\section*{Regression}
For the \texttt{Boston,Diabetes} datasets, we implement Algorithm \ref{alg:adv_training} with $T=400,H=1,P=1, \tilde{\gamma}=10^{-3},\Bar\gamma=2\times10^{-4}$. We use $\ell_2$ attacks to compare the adversarial loss between $f_{PGD}$ and $f$ under $\lambda_{FGSM},\lambda_{PGD},\lambda^f$ attacks on the test data. As $\ell_2$ attacks preserve geometry, that perturbation strength in any direction is the same; we deemed it to be an appropriate choice for the regression setting. For the loss, we use the Gaussian negative log-likelihood or the mean-square error loss, $\cl{L}(f(x),y)=\norm{f(x)-y}_2^2$. The corresponding adversarial loss being $\cl{L}(f,\lambda^f,(x,y))=\cl{L}(f(x+\lambda^f(x,y)),y)=\norm{f(x+\lambda^f(x))-y}_2^2$ 

Figure \ref{fig:loss_plot_reg_0.2} shows the convergence of the adversarial loss for  our robust defense $f$ based on algorithm \ref{alg:adv_training}, comparing against the baseline $f_{PGD}$. In Appendix \ref{A:exp_details}, it can be seen that for higher $\delta$, $\lambda^f$ is stronger than $\lambda_{PGD}$.
\begin{figure}
    \centering
    \includegraphics[scale=0.6]{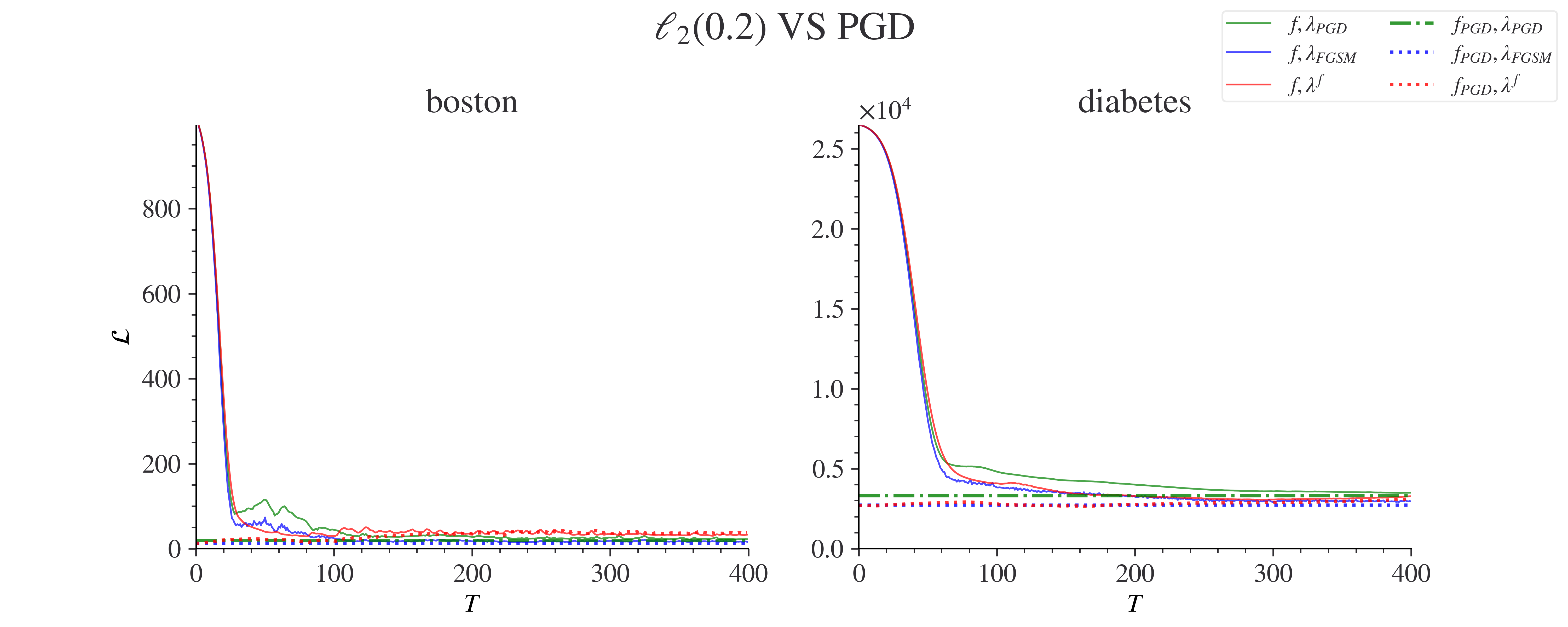}
    \caption{\textbf{(Regression setting)} Test Loss vs training epochs for adversarial training based on Algorithm \ref{alg:adv_training}. Where $f$ is our adversarially trained classifier and $f_{PGD}$ PGD trained robust classifier. For the attacks: $\lambda^f$ represents adversarially trained neural network attack based on $f$ \textit{(semi-white box)}; $\lambda_{PGD}$ represents \textit{(white-box)} PGD attack; $\lambda_{FGSM}$ represents \textit{(white-box)} FGSM attack.\\ In the legends the combination $(f,\lambda_{PGD})$ represents the empirical loss $\sum_{i=1}^{n_{test}}\cl{L}(f,\lambda_{PGD},(x_i,y_i))$ when the PGD attack $\lambda_{PGD}$ is performed on the adversarially trained defense $f$, similarly $(f_{PGD},\lambda_{PGD})$ represents the empirical loss $\sum_{i=1}^{n_{test}}(f_{PGD},\lambda_{PGD},(x_i,y_i))$ when $\lambda_{PGD}$ is performed on $f_{PGD}$; and so on.}
    \label{fig:loss_plot_reg_0.2}
\end{figure}

\section{Discussion and conclusion}\label{sec:dis}
\subsubsection*{Robust Overfitting}
When $\delta$ is comparatively large, adversarial trained robust model minimizing the objective \eqref{adversarial_training} tends to prioritize minimizing the loss towards generated adversarial examples and forget the original geometry of data. This could lead to distortion of the loss surface compared to the clean model and poor generalization performance of the robust model. This issue is known as \textit{Robust Overfitting} in the literature. \cite{Xiao2022StabilityTraining} provides theoretical rates for reduction of generalisation error w.r.t. $\delta$ attack strength. \cite{Yu2022UnderstandingBeyond} suggests a loss-constraining method to prevent data overfitting.

From our experiments, we see that robust overfitting arises as a distortion of the loss surface. Figure \ref{fig:alpha_compare} (left) shows the loss surface of adversarially trained robust $\ell_{\infty}(0.2)$ robust model. The robust classifier tends to defend against the perturbed data samples and as a result, forgets about the original geometry of the data points which results in distortion of the EDB. There is a trade-off between robustness and standard accuracy \cite{Tsipras2018RobustnessAccuracy}. As $\delta$ increases the distortion will get worse, as the perturbed data is farther from the original data. We can resolve this issue by retaining some original data information while conducting adversarial training; one can modify the standard adversarial loss as follows:
\begin{equation}
    \cl{L}(f,\lambda^f,\cl{Z})=\bb{E}\cl{L}(f(x+\lambda^f(x,y)),y)\to \bb{E}(1-\alpha)\cl{L}(f(x+\lambda^f(x,y)),y)+\alpha\cl{L}(f(x),y)
    \label{alpha_tradeoff}
\end{equation}
The above equation doesn't change any of our theory, all the convergence results and arguments still hold as they are true for a general $\cl{L}(f,\lambda^f,\cl{Z})$. Also, the non-adversarial counterpart of the loss function is still our original vanilla loss used for clean training. This is a canonical modification that assigns $\alpha$ weightage to the original data. For the modified loss, the performance plots analogous to Figures \ref{fig:performance_plot} and \ref{fig:loss_plot}  are almost identical, and we omit them for brevity. The advantage of this approach is that we eliminate the possible distortions in the loss surface (Figure \ref{fig:alpha_compare} Right). 
This gets rid of the robust overfitting issue and even for high $\delta$ the robust model can't stray away completely from the original data (Figure \ref{fig:alpha_for_various_delta}).

\cite{Gowal2021ImprovingData} proposes a similar weighted loss but instead of using original data, they use a generative model to produce synthetic data which are perturbed. The idea being that synthetic data are purer in some sense and retain important features even after perturbation.

Another popular method to deal with robust overfitting is \textit{trade adversarial robustness off against accuracy} (TRADES) proposed by \cite{Zhang2019TheoreticallyAccuracy}. TRADES splits the adversarial loss in a similar fashion as \eqref{alpha_tradeoff}; but uses the original classifier scores instead of the true labels for the adversarial part. In our framework this will correspond to the following adversarial loss: \begin{equation}
    \cl{L}(f,\lambda^f,\cl{Z})= \bb{E}(1-\alpha)\cl{L}(f(x+\lambda^f(x,y)),f(x))+\alpha\cl{L}(f(x),y).
\end{equation}
One can extend these methods to our framework by changing the adversarial loss $\cl{L}(f,\lambda^f,\cl{Z})$ suitably.
\begin{figure}
    \centering
    \includegraphics[scale=0.5]{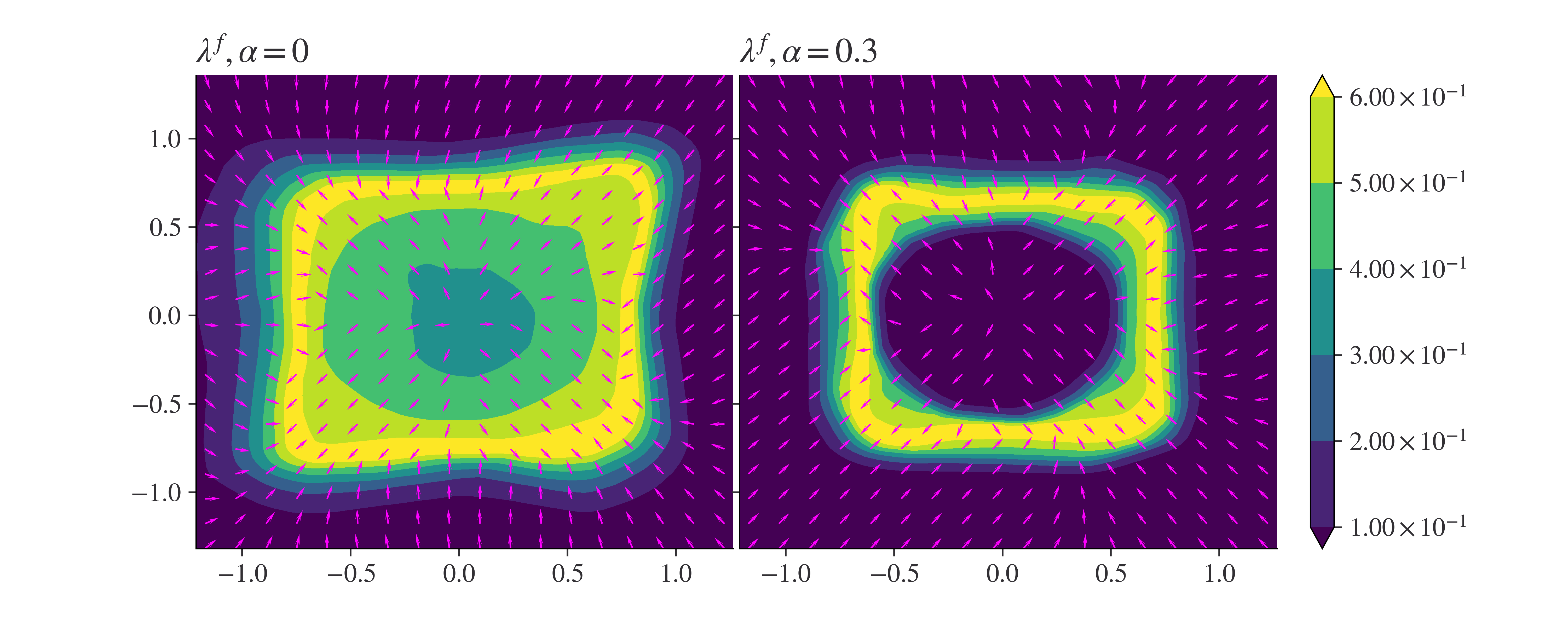}
    \caption{$\ell_{\infty}(0.2)$, Loss contour plot with learnt attack direction $\lambda^f$ compared for $\alpha=0$ (no-original data) vs $\alpha=0.3$ ($30\%$ weightage to original data). Dataset:\texttt{ circles}}
    \label{fig:alpha_compare}
\end{figure}

\subsubsection*{AdvGAN}
AdvGAN proposed by \cite{Xiao2018GeneratingNetworks} uses general adversarial networks (GANs) to generate adversarial examples. AdvGAN is the best performing black-box attack generation method according to the Madry MNIST challenge. The theoretical justifications for AdvGAN can be addressed by our framework.

For generating an attack, a GAN is trained with a generator $G$ and a discriminator $D$ optimizing the objective $\min_{G}\max_{D}\bb{E}\cl{L}_{adv}+c_1\cl{L}_{GAN}+c_2\cl{L}_{hinge}$ for some chosen constants $c_1,c_2$ determining weightage of the losses.
$\cl{L}_{GAN}=\log(1-D(G(x)))+\log(D(x))$ is the standard GAN loss (Bernoulli likelihood) for distinguishing between real and adversarially perturbed data; $\cl{L}_{adv}=-L(f(x+G(x)),y)$ is the adversarial loss maximizing the model/classifier $f$'s original loss for some $L$, e.g., MSE, cross-entropy, CW loss, etc; $\cl{L}_{hinge}=\max (0,\norm{G(x)}_2-\delta)$ is the hinge loss which softly bounds the $\ell_2$ norm of the generated attacks by $\delta$.

As $D$ appears only in $\cl{L}_{GAN}$, we can train $G$ via solving the objective $\max_{G}\bb{E}-\cl{L}_{adv}-c_1\cl{L}_{GAN}(D^*)-c_2\cl{L}_{hinge}$, where $D_G^*=\arg\max_{D}\bb{E}\cl{L}_{GAN}$.

To fit the advGAN approach into our framework, the generator $G$ corresponds to the attack function $\lambda^f$ and we choose the loss function as $\cl{L}(f(x),y)=L(f(x),y)-c_1\cl{L}_{GAN}(D_G^*)$, then the corresponding adversarial loss would be: $$\cl{L}(f,\lambda^f,\cl{Z})=\cl{L}(f(x+\lambda^f(x,y)),y)=L(f(x+\lambda^f),y)-c_1\cl{L}_{GAN}(D_G^*).$$ 
Consider the $\ell_2(\delta)$ best function attack \eqref{best_function_attack} for the above loss: 
$$\lambda^f=\arg\max_{\norm{\lambda}_2\leq \delta}\cl{L}(f(x+\lambda(x,y)),y)$$
If we consider the Lagrangian of the above objective with Lagrangian multiplier $c'_2$, we get:
$$\lambda^f=\arg\max_{{\lambda}}\cl{L}(f(x+\lambda(x,y)),y)+c'_2 (\norm{\lambda}_2-\delta).$$ 
When $\norm{\lambda_2}\leq \delta$, the optimal solution can be achieved for $c'_2=0$; when $\norm{\lambda_2}> \delta$, $c'_2$ has to be $-\infty$. So, in practice, the constrained optimization can be emulated for $\lambda^f$ by solving:
$$\lambda^f=\arg\max_{\lambda}\cl{L}(f(x+\lambda(x,y)),y)-c_2\max(0,\norm{\lambda}_2-\delta)=-\cl{L}_{adv}-c_1\cl{L}_{GAN}(D_{\lambda}^*)-c_2\cl{L}_{hinge},$$
where $c_2>0$ is a large coefficient.  This is the same loss that the generator in AdvGAN maximizes. Hence, AdvGAN can be thought of as a special case of our framework which aims to generate the best $\ell_2(\delta)$ attack. Consequently, the dynamic distillation method proposed in \cite{Xiao2018GeneratingNetworks} is equivalent to the mathematical game between $G\equiv\lambda^f$ and $f$. We can link the success of AdvGAN to the theoretical justifications provided in this work.

\subsubsection*{Conclusion}
Through this work, we were able to define a generalized framework for adversarial training, in the sense that the \textit{best}  adversarial attacks/examples \eqref{best_function_attack} can take a functional form. We showed that this functional form is essentially captured by a continuous dynamic system closely related to the gradient flow of the adversarial loss \eqref{gradient_flow_attack}. The traditional gradient-based attacks in the literature can be thought of as a discrete approximation of the same. Subsequently, the ideal attack resides in a piece-wise H\"older space and we can render neural networks to approximate this function up to arbitrary accuracy (Theorem \ref{thm:NNapprox}). This is also empirically verified through our experiments (Figure \ref{fig:gradientvsNN}). Furthermore, using neural networks to generate attacks and approximating the model with another neural network results in the adversarial training process being a mathematical game between two neural networks. We provided a convergence rate for such an adversarial game/training based w.r.t. the sample size of the data (Theorem \ref{thm:convergence}). We also compare the performance between PGD and a function-based attack generator based on neural networks; which further bolsters the argument of approximation capability of neural nets for the ideal attack. As a consequence of training a functional form, we are also able to generate attacks in a semi-white box setting, whose performance is comparable to the PGD attack. Our work provides a theoretical justification for a recent trend to use neural networks to generate attacks; additionally, we can justify the use of such networks in the context of adversarial training of a robust model. 

\bibliographystyle{siamplain}  
\bibliography{references,dataset}  

\appendix
\section{Additional definitions}
\subsection{CDS}
\begin{definition}[Continuous dynamic system]
Let $\cl{X}$ be a metric space. A mapping $T:\bb{R}^+\times \cl{X}\to \cl{X}$, $(t; x) \to T(t)x$ is a continuous ($C^0$) dynamical system if it satisfies the following:
\begin{itemize}
    \item $T(0) = I$,
\item $T(t + s) = T(t)T(s); t; s\in \bb{R}^+$
\item $T(t) \in C^0(\cl{X};\cl{X})$ for $t \geq 0$
\item $T(t)x$ is continuous in $t; x$ for $(t; x) \in [0;\infty) \times \cl{X}$
\end{itemize}
\label{cds}
\end{definition}

\begin{definition}[Attraction]
A set $A \subset \cl{X}$ attracts a set $B \subset \cl{X}$ under the dynamical system $T$ if $$\lim_{t\to \infty}dist(T(t)B,A)=0$$
\label{attraction}
\end{definition}
\begin{definition}[Point Dissipative]
A dynamical system $T$ is point dissipative if there is a bounded set $B \subset \cl{X}$ which attracts
each point of $\cl{X}$.
\label{point_dissipative}
\end{definition}
\begin{definition}[Invariant set]
A set $J\in\cl{X}$ is an invariant set of the dynamical system $T$ if $T(t)J=J\,  \forall t \in \bb{R}^+$
\label{invarient set}
\end{definition}
\begin{definition}
For a dynamic system $T$:
\begin{equation}
   \mathcal{A}(T) =\{x\in\cl{X} :T(t)x\text{ is defined and bounded }\forall t\in \bb{R}^+ \}
\end{equation}
The set $\mathcal{A}(T)$ is the compact global attractor of $T$ if it is compact, invariant
and attracts each bounded set of $\cl{X}$.
\label{globalattractor}
\end{definition}

\begin{definition}[stable and unstable sets] Given a stationary point $x_0\in \cl{X}$. The stable and unstable sets of $x_0$ are defined respectively:
\begin{align*}
    W_s(x_0)&=\{x\in\cl{X}: \lim_{t\to\infty}dist(T(t)x,x_0)=0\}\\
    W_u(x_0)&=\{x\in\cl{X}: \lim_{t\to-\infty}dist(T(t)x,x_0)=0\}
\end{align*}
\label{stable/unstable_sets}
\end{definition}

\subsection{Function classes}
Some set of definitions borrowed from \cite{Metzger2022AdversarialEstimators}.
\begin{definition}[Hölder Space]\label{def:holder}\ \\
For a function $h: \mathbb{R}^{D} \rightarrow \mathbb{R}, \partial_{d} h(x)$ is a partial derivative with respect to a $d$-th component, and $\partial^{\alpha} h:=\partial_{1}^{\alpha_{1}} \cdots \partial_{D}^{\alpha_{D}} h$ using multi-index $\alpha=\left(\alpha_{1}, \ldots, \alpha_{D}\right) .$ For $z \in \mathbb{R}$.
$\lfloor z\rfloor$ denotes the largest integer that is less than $z$. Let $\omega>0$ be a degree of smoothness. For $h:[0,1]^{D} \rightarrow \mathbb{R},$ the Hölder norm is defined as
$$
\|h\|_{\mathcal{H}\left( \omega ,[0,1]^{D}\right)}:=\max _{\alpha:\|\alpha\|_{1}<\lfloor \omega \rfloor} \sup _{x \in[0,1]^{D}}\left|\partial^{\alpha} h(x)\right|+\max _{\alpha:\|\alpha\|_{1}=\lfloor \omega \rfloor x, x^{\prime} \in[0,1]^{D}, x \neq x^{\prime}} \frac{\left|\partial^{\alpha} h(x)-\partial^{\alpha} h\left(x^{\prime}\right)\right|}{\left\|x-x^{\prime}\right\|_{\infty}^{ \omega -\lfloor \omega \rfloor}}
$$
Then, the Hölder space on $[0,1]^{D}$ is defined as
$$
\mathcal{H}\left( p ,[0,1]^{D}\right)=\left\{h \in C^{\lfloor p \rfloor}\left([0,1]^{D}\right) \mid\|h\|_{\mathcal{H}\left( p ,[0,1]^{D}\right)}<\infty\right\}
$$
Also, $\mathcal{H}\left( \omega ,[0,1]^{D}, M\right)=\left\{h \in \mathcal{H}\left( \omega ,[0,1]^{D}\right) \mid\|h\|_{\mathcal{H}\left( \omega ,[0,1]^{D}\right)} \leq M\right\}$ denotes the $M$-radius closed
ball in $\mathcal{H}\left( \omega ,[0,1]^{D}\right)$.
\end{definition}

\begin{definition}[Covering Number]\label{def:coveringnumber}\ \\ 
For some norm $\|\cdot\|$ over some metric space $\cl{X}$, the covering number $\mathcal{N}(\delta, \cl{X}, \|\cdot\|)$ is defined as the cardinality of the smallest set $C\subset\Lambda$ such that $\sup_{x\in \cl{X}}\inf_{c\in C} \|x-c\|\leq\delta$. The quantity $
\log\mathcal{N}(\delta, x, \|\cdot\|)$ is also called metric entropy.
\end{definition}

\begin{definition}[Minkowski Dimension]\label{def:minkowski}\ \\
The (upper) Minkowski dimension of a set $S \subset[0,1]^{D}$ is defined as
\[\operatorname{dim}_{M} {S}:=\inf \left\{d^{*} \geq 0 \mid \limsup _{\varepsilon \downarrow 0} \mathcal{N}(\varepsilon, {S}, \|\cdot\|_\infty) \varepsilon^{d^{*}}=0\right\}\]
where $\mathcal{N}(\varepsilon, {S}, \|\cdot\|_\infty)$ is given by Definition \ref{def:coveringnumber}.
\end{definition}

\begin{definition}[Minimum $\gamma$ covering]
\label{def:minimum_covering}
Let $S\subset \cl{X}$ then the minimum $\gamma$ covering of $S$ is define as the set of $\gamma$ radius balls $\cl{I}$ whose centres form the set $C$ such that $\card(C)=\mathcal{N}(\gamma, S, \|\cdot\|)($\ref{def:coveringnumber}). For $\ell_{\infty}$ norm, $\cup_{i=1}^{|\cl{I}|}I_i=\cl{I}$ are the set of hyper cubes of length $\gamma$ which cover $S$. If the norm is not specified, assume it to be $\ell_{\infty}$.
\end{definition}
\section{supporting results}
\begin{lemma}[Lemma A.8 \cite{Petersen2017OptimalNetworks}]
 Fix any $h \in \cl{H}(\omega, [0,1,]^D, M)$ (\ref{def:holder}) and $\bar{x} \in [0, 1]^D$.
  Let $\bar{h}(x)$ be the Taylor polynomial of degree $\lfloor{\omega}\rfloor$ of $h$ around $\bar{x}$, namely,
  \begin{equation*}
    \bar{h}(x) := \sum_{\abs{\alpha} \leq \lfloor{p}\rfloor} \frac{\partial^\alpha h(\bar{x})}{\alpha!} (x - \bar{x})^\alpha.
  \end{equation*}
  Then, $\abs{h(x) - \bar{h}(x)} \leq D^\omega M \norm{x - \bar{x}}^\omega$ holds for any $x \in [0, 1]^D$. Note that there exists some constant $\overline{C} = \overline{C}(\omega, D, M)$ such that $\sup_{\abs{\alpha} \leq \lfloor{\omega}\rfloor} \abs{\partial^\alpha h(\bar{x})/\alpha!} \leq \overline{C}M$ for $f \in \cl{H}(\omega, [0, 1]^D, M)$.
  \label{lem:taylor_approx}
\end{lemma}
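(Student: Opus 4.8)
Write $n:=\lfloor\omega\rfloor$. Under the convention of Definition~\ref{def:holder} (the largest integer strictly below $\omega$) this means $h\in C^n([0,1]^D)$ and, for every multi-index $\alpha$ with $\abs{\alpha}=n$, the partial derivative $\partial^\alpha h$ is H\"older of exponent $\omega-n\in(0,1]$ with seminorm at most $\|h\|_{\cl{H}(\omega,[0,1]^D)}\le M$. The plan is to reduce the $D$-dimensional estimate to the one-dimensional Taylor theorem along the segment from $\bar x$ to $x$. So I would fix $x\in[0,1]^D$, set $v:=x-\bar x$, and define $g(t):=h(\bar x+tv)$ for $t\in[0,1]$; the segment stays in the convex set $[0,1]^D$, so $g\in C^n([0,1])$. (If $n=0$ the polynomial $\bar h$ is the constant $h(\bar x)$ and the bound is simply the H\"older condition, so assume $n\ge1$.)

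First I would apply Taylor's theorem with integral remainder to $g$ at $t=1$ about $t=0$,
\[ g(1)=\sum_{k=0}^{n-1}\frac{g^{(k)}(0)}{k!}+\frac{1}{(n-1)!}\int_0^1(1-t)^{n-1}g^{(n)}(t)\,dt, \]
and expand each derivative by the multinomial chain rule, $g^{(k)}(t)=\sum_{\abs{\alpha}=k}\frac{k!}{\alpha!}\,\partial^\alpha h(\bar x+tv)\,v^\alpha$. This identifies $\sum_{k=0}^{n}g^{(k)}(0)/k!$ with $\bar h(x)$, so that, since $g(1)=h(x)$, subtracting $\bar h(x)$ leaves exactly the integral remainder minus the top-order term $g^{(n)}(0)/n!$. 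Using $\frac{1}{(n-1)!}\int_0^1(1-t)^{n-1}\,dt=\frac1{n!}$ to rewrite that term as $\frac{1}{(n-1)!}\int_0^1(1-t)^{n-1}g^{(n)}(0)\,dt$, I obtain the key representation
\[ h(x)-\bar h(x)=n\sum_{\abs{\alpha}=n}\frac{v^\alpha}{\alpha!}\int_0^1(1-t)^{n-1}\bigl[\partial^\alpha h(\bar x+tv)-\partial^\alpha h(\bar x)\bigr]\,dt. \]

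Then I would estimate this termwise. For $\abs{\alpha}=n$ the H\"older condition gives $\abs{\partial^\alpha h(\bar x+tv)-\partial^\alpha h(\bar x)}\le M\norm{tv}_\infty^{\omega-n}=M\,t^{\omega-n}\norm{v}_\infty^{\omega-n}$ (since $t\in[0,1]$); combined with $\abs{v^\alpha}\le\norm{v}_\infty^{n}$, $\int_0^1(1-t)^{n-1}t^{\omega-n}\,dt\le\int_0^1(1-t)^{n-1}\,dt=1/n$ (as $t^{\omega-n}\le1$), and $\sum_{\abs{\alpha}=n}1/\alpha!=D^n/n!$ (multinomial theorem), this yields $\abs{h(x)-\bar h(x)}\le(D^n/n!)\,M\,\norm{v}_\infty^{\omega}\le D^\omega M\norm{x-\bar x}_\infty^{\omega}$, using $D\ge1$, $n\le\omega$, $n!\ge1$; as $\norm{v}_\infty\le\norm{v}$ in any of the usual $\ell_p$ norms, the inequality as stated follows a fortiori. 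For the auxiliary coefficient bound: when $\abs{\alpha}<n$ one reads $\sup_x\abs{\partial^\alpha h(x)}\le M$ off the first part of the H\"older norm, and when $\abs{\alpha}=n$ a one-dimensional mean-value argument along a coordinate segment gives a point where $\abs{\partial^\alpha h}\le2M$, which the H\"older estimate propagates to a uniform bound of order $M$; dividing by $\alpha!\ge1$ gives the claimed $\overline C M$.

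I do not expect a genuine obstacle: this is a classical computation and the statement is quoted verbatim from \cite{Petersen2017OptimalNetworks}. The steps needing care are the bookkeeping of multinomial coefficients and, above all, the cancellation in the middle step --- writing the top-order Taylor coefficient as $n\int_0^1(1-t)^{n-1}\partial^\alpha h(\bar x)\,dt$ so that it subtracts exactly against the integral remainder, which is precisely what turns the remainder into the \emph{difference} $\partial^\alpha h(\bar x+tv)-\partial^\alpha h(\bar x)$ controlled by the H\"older seminorm. One must also consistently use the exponent $\omega-n$ (not $\omega$), in line with the paper's floor convention.
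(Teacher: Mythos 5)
Your proof is correct. Note that the paper itself offers no proof of this lemma---it is imported verbatim as Lemma A.8 of the cited Petersen--Voigtlaender paper and used as a black box---so there is nothing internal to compare against; your argument (reduce to the one-dimensional Taylor theorem with integral remainder along the segment, cancel the top-order coefficient against the remainder so that only the H\"older-controlled difference $\partial^\alpha h(\bar x+tv)-\partial^\alpha h(\bar x)$ survives, then sum the multinomial coefficients) is the standard derivation and in fact yields the slightly sharper constant $D^{\lfloor\omega\rfloor}/\lfloor\omega\rfloor!\le D^\omega$. Your separate treatment of the $\abs{\alpha}=\lfloor\omega\rfloor$ coefficient bound is also the right thing to do given that the paper's Definition of the H\"older norm only bounds $\sup_x\abs{\partial^\alpha h}$ for $\abs{\alpha}<\lfloor\omega\rfloor$.
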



\begin{lemma}[Lemma 20 \cite{Nakada2019AdaptiveDimensionality}]\label{lem:covering_division}
    Let $\cl{I}$ be a minimum $\gamma$-covering of $\cl{X}$. Then, there exists a disjoint partition $\qty{\cl{I}_i}_{i=1}^{5^D}$ of $\cl{I}$ such that $\cl{I} = \bigcup_{i=1}^{5^D} \cl{I}_i$ and $d(I_j, I_k) \geq \gamma$ hold for any $I_j \neq I_k \in \cl{I}_i$ if $\card(\cl{I}_i) \geq 2$.
\end{lemma}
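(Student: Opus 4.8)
The plan is to recast the claim as a bounded-degree graph-colouring problem. I define the \emph{conflict graph} $G$ whose vertices are the cubes of the minimum $\gamma$-covering $\cl{I}$, placing an edge between $I_j$ and $I_k$ exactly when $d(I_j,I_k)<\gamma$. A proper vertex colouring of $G$ with $m$ colours is precisely a partition $\{\cl{I}_i\}_{i=1}^{m}$ of $\cl{I}$ in which any two distinct cubes sharing a colour satisfy $d(I_j,I_k)\geq\gamma$, which is the assertion of the lemma. Hence it suffices to show that $G$ admits a proper colouring with at most $5^D$ colours.

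To colour $G$ it is enough to bound its maximum degree by $5^D-1$: a standard greedy argument (order the vertices arbitrarily and give each the least colour avoided by its already-coloured neighbours) then yields a proper colouring with at most $(\text{max degree})+1\leq 5^D$ colours. Thus the whole content of the lemma is the \textbf{degree bound}, namely that every cube $I_0\in\cl{I}$ has at most $5^D-1$ neighbours in $G$. For this, fix $I_0$ with centre $c_0$. Since each cube has side length $\gamma$, the $\ell_\infty$ distance between cubes is $d(I_j,I_k)=\max\{0,\,\|c_j-c_k\|_\infty-\gamma\}$, so every neighbour $I_k$ of $I_0$ has its centre in the box $B=\{c:\|c-c_0\|_\infty<2\gamma\}$ of side $4\gamma$. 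It therefore remains to prove the local packing estimate that $B$ contains at most $5^D$ centres of cubes of $\cl{I}$ (including $c_0$); granting this, $I_0$ has at most $5^D-1$ neighbours and the proof concludes.

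The clean case of the packing estimate is when the covering is grid-aligned (for instance dyadic cubes at a single scale): distinct centres then differ by a non-zero multiple of $\gamma$ in some coordinate, so they are $\gamma$-separated, and a $\gamma$-separated set inside a closed box of side $4\gamma$ has at most $5$ points per coordinate and hence $5^D$ in total. The difficulty I expect is that a \emph{general} minimum covering need not have $\gamma$-separated centres, since two covering cubes may overlap heavily; $\gamma$-separation must instead be extracted from minimality. Here I would exploit that in a minimum covering each cube $I_k$ owns a \emph{private point} $p_k\in I_k$ covered by no other cube: if too many centres accumulated inside $B$, one of the associated cubes would be contained in the union of the others — exactly as the middle of three nearly coincident intervals is redundant in dimension one — contradicting minimality of $\cl{I}$.

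The main obstacle is turning this redundancy heuristic into the sharp constant $5^D$. Two points need care: controlling the boundary and tie cases (the box $B$ is open, and whether one counts $5$ or $4$ admissible positions per coordinate hinges on the open/closed and $<$-versus-$\leq$ conventions), and passing from the one-dimensional ordering argument, where the redundant cube is literally the one whose interval is sandwiched, to a statement valid simultaneously in all $D$ coordinates. I regard the graph-colouring reduction, the greedy-colouring step, and the localisation of neighbours to the box $B$ as routine; the quantitative packing bound from minimality is the technical heart of the argument.
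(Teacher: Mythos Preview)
The paper does not give its own proof of this lemma; it is quoted verbatim as Lemma~20 of \cite{Nakada2019AdaptiveDimensionality} and used as a black box. So there is no in-paper argument to compare against, and your task is really to produce a self-contained proof.

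Your reduction to graph colouring and the greedy step are correct and standard, and your localisation of the neighbours of $I_0$ to the open box $B=\{c:\|c-c_0\|_\infty<2\gamma\}$ is the right move. The gap you flag---getting the sharp $5^D$ from minimality---is genuine, but the private-point heuristic you sketch is not the efficient way to close it. Private points only give you that each $p_k$ lies outside every other cube $I_j$, which yields $\|p_k-c_j\|_\infty>\gamma/2$; this asymmetry does not produce a uniform separation between the $p_k$'s, and the ``sandwiched interval'' picture does not lift cleanly to $D$ coordinates.

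The clean argument is a \emph{tiling replacement}. If $I_0$ together with its neighbours gives $m+1$ cubes with centres in $B$, then (since each cube has half-side $\gamma/2$) all $m+1$ cubes are contained in the closed box $B'=\{x:\|x-c_0\|_\infty\le 5\gamma/2\}$ of side $5\gamma$. This $B'$ is tiled by exactly $5^D$ axis-aligned cubes of side $\gamma$. Replace the $m+1$ cubes by these $5^D$ tiles: since the union of the original $m+1$ cubes lies in $B'$, the new family still covers $\cl{X}$, and it has cardinality $|\cl{I}|-(m+1)+5^D$. Minimality of $\cl{I}$ forces $m+1\le 5^D$, i.e.\ the degree of $I_0$ is at most $5^D-1$, and your greedy colouring finishes the proof. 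This bypasses both the open/closed boundary bookkeeping and the coordinate-by-coordinate redundancy argument you were worried about.
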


\begin{lemma}[Volume of boundary cubes]
$\partial \bs{A}=\{x:x\in \bar{A_i}\bigcap\bar{A_i^c} \text{ for some } A_i\in \bs{A}\}$ be the boundary set of the pieces of any $f\in \bar{\cl{H}}(\omega,\mathcal{X})$. By definition the boundary set can be at most $D-1$ dimensions.\\
Consider the set of minimum $\gamma$ covering interacting with the boundary of the pieces,  $\mathcal{I}_{\partial}=\{I:I\in \mathcal{I}\ s.t. \, I\bigcap \partial \bs{A}\neq \phi \}$.\\
Then $\mathcal{N}(\gamma,\partial \bs{A},\norm{}_{\infty})\prec\card(\mathcal{I}_{\partial})\prec\mathcal{N}(\frac{\gamma}{2},\partial \bs{A},\norm{}_{\infty})$ and $\mu(\cl{I}_{\partial})=\cl{O}(\gamma)$.
\label{lem:volume_bndry}
\end{lemma}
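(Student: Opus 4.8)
I would treat the sandwich bound on $\card(\mathcal{I}_\partial)$ and the volume estimate separately, since they rely on different facts: the former is a purely metric comparison between a grid-type cover of $\mathcal X=[0,1]^D$ and intrinsic covers/packings of the codimension-one set $\partial\bs A$, while the latter is a straightforward volume count afterwards. Throughout, recall that the members of a minimum $\gamma$-covering $\mathcal I$ are $\ell_\infty$-balls of radius $\asymp\gamma$ whose union is $\mathcal X$, so each has Lebesgue volume $\Theta(\gamma^D)$.

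\emph{Lower bound.} Every $x\in\partial\bs A$ lies in $\mathcal X$, hence in some $I\in\mathcal I$; that $I$ then meets $\partial\bs A$, so $I\in\mathcal I_\partial$. Thus $\mathcal I_\partial$ is itself a cover of $\partial\bs A$ by $\card(\mathcal I_\partial)$ balls of radius $\asymp\gamma$, which gives $\mathcal N(\gamma,\partial\bs A,\norm{}_\infty)\prec\card(\mathcal I_\partial)$ at once (and, if one wants the exact radius, by the elementary fact that an $\ell_\infty$-ball of radius $2\gamma$ in $\bb R^D$ splits into $2^D$ balls of radius $\gamma$).

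\emph{Upper bound.} The obstruction is that a \emph{minimal} $\gamma$-cover may contain heavily overlapping cubes, so one cannot inject $\mathcal I_\partial$ directly into an economical cover of $\partial\bs A$. I would invoke Lemma~\ref{lem:covering_division} to write $\mathcal I=\bigcup_{i=1}^{5^D}\mathcal I_i$ with distinct cubes inside each $\mathcal I_i$ being $\gamma$-separated. Fix $i$; for every boundary cube $I\in\mathcal I_\partial\cap\mathcal I_i$ pick $p_I\in I\cap\partial\bs A$. Since $d(I,I')\ge\gamma$ forces $\norm{p_I-p_{I'}}_\infty\ge\gamma$, the $p_I$'s form a $\gamma$-separated (hence distinct) subset of $\partial\bs A$, so their number is at most the packing number of $\partial\bs A$ at scale $\gamma$, which is bounded by $\mathcal N(\gamma/2,\partial\bs A,\norm{}_\infty)$ (two points at $\ell_\infty$-distance $\ge\gamma$ cannot lie in a common ball of radius $<\gamma/2$). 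Summing over $i$ gives $\card(\mathcal I_\partial)\le 5^D\,\mathcal N(\gamma/2,\partial\bs A,\norm{}_\infty)$.

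\emph{Volume bound.} Writing $\mathcal I_\partial$ also for the union $\bigcup_{I\in\mathcal I_\partial}I$ and $\mathrm{vol}$ for Lebesgue measure, $\mathrm{vol}(\mathcal I_\partial)\prec\card(\mathcal I_\partial)\,\gamma^D\prec\mathcal N(\gamma/2,\partial\bs A,\norm{}_\infty)\,\gamma^D$. Because $\partial\bs A$ is the boundary of a finite partition and therefore sits inside a set of upper Minkowski dimension at most $D-1$ (locally a union of $(D-1)$-dimensional pieces), $\mathcal N(\gamma/2,\partial\bs A,\norm{}_\infty)\prec\gamma^{-(D-1)}$, whence $\mathrm{vol}(\mathcal I_\partial)\prec\gamma$; and since $\mu$ is dominated by Lebesgue measure on $\mathcal X$ with bounded density, $\mu(\mathcal I_\partial)\prec\mathrm{vol}(\mathcal I_\partial)\prec\gamma$.

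\textbf{Main obstacle.} The delicate step is the upper bound: turning a minimal $\gamma$-cover of $\mathcal X$ into a genuine $\gamma$-packing of $\partial\bs A$, for which the $5^D$-splitting of Lemma~\ref{lem:covering_division} is essential, while tracking the factor of two so the estimate lands on $\mathcal N(\gamma/2,\cdot)$ rather than $\mathcal N(\gamma,\cdot)$. A secondary subtlety is the codimension-one scaling $\mathcal N(\gamma/2,\partial\bs A)\prec\gamma^{-(D-1)}$: $\operatorname{dim}_M\partial\bs A\le D-1$ only yields $o(\gamma^{-(D-1)-\epsilon})$ in general, so obtaining a clean $\cl O(\gamma)$ uses the piecewise-smooth structure of the partition boundaries discussed in Section~\ref{subsec:motivation_piece} (or one absorbs a harmless $\epsilon$ into the rate elsewhere).
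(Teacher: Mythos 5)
Your proposal is correct, and while the lower bound and the final volume count coincide with the paper's, your upper bound takes a genuinely different and cleaner route. The paper constructs an auxiliary lattice-type covering $\hat{\mathcal{I}}^{\gamma}$ whose centres are exactly $\gamma$ apart, compares coverage sets to show $\card(\mathcal{I}_{\partial})\leq 2^{D}\card(\hat{\mathcal{I}}^{\gamma}_{\partial})$, and then bounds $\card(\hat{\mathcal{I}}^{\gamma}_{\partial})\leq 4^{D}\mathcal{N}(\gamma/2,\partial\bs{A},\norm{\cdot}_{\infty})$ by counting how many lattice cubes a single $\gamma/2$-ball can meet. You instead invoke Lemma~\ref{lem:covering_division} directly on the minimal cover, extract one boundary point per cube in each of the $5^{D}$ separated families, and appeal to the standard packing--covering inequality; this avoids the coverage-set bookkeeping entirely and is, in my view, the tidier argument (the paper already imports Lemma~\ref{lem:covering_division} for Theorem~\ref{thm:NNapprox}, so no new machinery is needed). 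One small caveat in your packing step: with closed $\ell_\infty$-balls of radius exactly $\gamma/2$, two points at distance exactly $\gamma$ can share a covering ball, so each ball may contain up to $2^{D}$ of your $\gamma$-separated points; this only changes the constant and is harmless for a $\prec$ statement, but the parenthetical ``radius $<\gamma/2$'' should be adjusted accordingly. Finally, you are right to flag that $\operatorname{dim}_M\partial\bs{A}\leq D-1$ alone gives only $\mathcal{N}(\gamma/2,\partial\bs{A},\norm{\cdot}_{\infty})=\mathcal{O}(\gamma^{-(D-1)-\epsilon})$, so the clean $\mathcal{O}(\gamma)$ volume bound needs the piecewise-smooth (finite union of $C^k$ manifold boundaries) structure of the partition; the paper asserts $\mathcal{N}\asymp\gamma^{-(D-1)}$ without comment, so your version makes explicit an assumption the paper leaves implicit, as does your remark that $\mu$ must have bounded density with respect to Lebesgue measure.
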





\section{Proofs}
\begin{proof}[Proof of lemma \ref{lem:volume_bndry}]
For any covering $\mathcal{I}_0$ define the \textit{coverage set} as $cv(\mathcal{I}_0)=\{\bigcup I_i: I_i\in\mathcal{I}_0\}$ .
We want to show that the number of cubes $I\in \mathcal{I}$ intersecting with the boundary is controlled by the covering number of the boundary. \\
We will prove the first inequality by contradiction; assume $card(\mathcal{I}_{\partial})<\mathcal{N}(\partial \mathcal{A},\norm{}_{\infty},\gamma)$ instead. This isn't possible as $\mathcal{I}_{\partial}$ is also a covering for $\partial\mathcal{A}$ but that implies the existence of a covering smaller than the minimum covering-number of $\partial\bs{A}$ which is a contradiction. Hence, $\mathcal{N}(\partial \mathcal{A},\norm{}_{\infty},\gamma)\leq card(\mathcal{I}_{\partial})$.\\\\
For the second inequality, consider a \textit{naive} $\gamma$-covering $\mathcal{\hat{I}^\gamma}$ such that the centres of all the cubes involved in the covering are at least $\gamma$ distance apart. Let $C=\{C_{i}\}$ be the set of centers corresponding to the cubes of $\mathcal{\hat{I}}$, and they satisfy that $\min_{i\neq j} \norm{C_i-C_j}_{\infty}=\gamma$.  One can visualise it as a space-filling covering by stacking cubes adjacent to each other, such that all the cube edges pass through the centre of neighbouring cubes. By construction $card(\mathcal{I})\leq card(\mathcal{\hat{I}^\gamma})$, as $card(\mathcal{I})$ is the minimum $\gamma$-covering. \\\\
Also, let $\mathcal{\hat{I}^\gamma}_{\partial}=\{I:I\in \mathcal{\hat{I}^{\gamma}}\ s.t. \, I\bigcap \partial \bs{A}\neq \phi \}$. Notice, that $cv(\mathcal{I}_{\partial})\subseteq\{x:d(x,\partial\mathcal{A})\leq2\gamma\}$ and $cv(\mathcal{\hat{I}^{\gamma}}_{\partial})\supseteq \{x:d(x,\partial\mathcal{A})\leq\gamma\}$. This is enforced by the constraint that cubes intersecting with the boundary can have points inside it at most $2\gamma$ distance away from the boundary. Furthermore, as the radius of each cube is $\gamma$ it can at the least cover $\gamma$ distance around the boundary.
Similarly, $cv(\mathcal{\hat{I}}^{2\gamma}_{\partial})\supseteq \{x:d(x,\partial\mathcal{A})\leq2\gamma\}$.\\
Furthermore, noting that each cube of radius $2\gamma$ can be decomposed into $2^D$ smaller cubes of radius $\gamma$, we can always define some $\mathcal{I}'$ which is a set of $\gamma$-radius cubes, such that $cv(\mathcal{I}')=cv(\mathcal{\hat{I}}^{2\gamma}_{\partial})$ and $card(\mathcal{I}') =2^D card(\mathcal{\hat{I}}^{2\gamma}_{\partial})$. 
Therefore, $cv(\mathcal{I}_{\partial})\subseteq cv(\mathcal{\hat{I}}^{2\gamma}_{\partial})= cv(\mathcal{I}')$ and $\mathcal{\hat{I}}$ is the minimal-$\gamma$ covering imply that $card(\mathcal{I}_{\partial})< card(\mathcal{I}') =2^Dcard(\mathcal{\hat{I}^\gamma}_{\partial})$. (Otherwise $\mathcal{\hat{I}}\backslash\mathcal{I}_{\partial} \cup \mathcal{I}'$ is a better $\gamma$-covering than $\mathcal{\hat{I}}$).


 
Let $V$ be the minimum $\gamma/2$-covering of $\partial{A}$. 
Notice that each cube in the minimum $\gamma/2$ covering can intersect \textit{at most} $4^D$ cubes in $\mathcal{\hat{I}}^{\gamma}_{\partial}$. This is a consequence of the lattice structure induced by $\mathcal{\hat{I}}^{\gamma}_{\partial}$. (Given a $\gamma$ spacing lattice and any arbitrary point, there are at most $4^D$ lattice grid points that are within 3/2-$\gamma$ distance to the specific point.)
As, all cubes in $\mathcal{\hat{I}}^{\gamma}_{\partial}$ must intersect with some $I\in V$ by definition. Subsequently, each $I\in V$ has at most $4^D$ cubes in $\mathcal{\hat{I}}^{\gamma}_{\partial}$ that intersect with it; hence a conservative estimate for the total number of cubes in $\mathcal{\hat{I}}^{\gamma}_{\partial}$ can be bounded by $4^D\card(V)$. This implies that $card(\mathcal{\hat{I}}^{\gamma}_{\partial})\leq 4^D \mathcal{N}(\partial \mathcal{A},\norm{}_{\infty},\gamma/2)$.

Combining all the previous arguments, \begin{equation*}
\mathcal{N}(\partial \mathcal{A},\norm{}_{\infty},\gamma) \leq card(\mathcal{I}_{\partial})\leq 2^Dcard(\mathcal{\hat{I}}^{\gamma}_{\partial})\leq 2^D\cdot4^D\mathcal{N}(\partial \mathcal{A},\norm{}_{\infty},\gamma/2).
\end{equation*} 
Using the properties of covering numbers we have, $\mathcal{N}(\partial\mathcal{A},\norm{}_{\infty},\gamma),\mathcal{N}(\partial \mathcal{A},\norm{}_{\infty},\gamma/2)\asymp \frac{1}{\gamma}^{D-1}$. Consequently, $card(\mathcal{I}_{\partial})\asymp \frac{1}{\gamma}^{D-1}$.
Subsequently, $\mu(\mathcal{I}_{\partial})=\mathcal{O}(\gamma^D\times \frac{1}{\gamma}^{D-1})=\mathcal{O}(\gamma)$.
\end{proof}

\begin{proof}[Proof of Theorem \ref{thm:bestattack}]
Considering \eqref{contraint_optimisation}, let the candidate solution be $x^*=\adv=\lambda^f(x_s,y)+x_s$ where $\lambda^f$ is defined in \eqref{gradient_flow_attack}. Note that $x^*$ can't be a saddle point of $F(x)$ ( As $\dot{x}>0$ at saddle points \eqref{projected_gf_excluding_saddle_pts}) and can only be a local maxima with $\nabla{F}(x^*)=0$ or a point at the boundary of the $B_p(x_s,\delta)$ ball which is orthogonal to the surface of $B_p(x_s,\delta)$. \\
\underline{\textbf{$\ell_{\infty}$ constraint:}}

For each $i$ the gradient of the constrained function is $\nabla g_i(x)=(0,\dots,s(x-x_s)_i,\dots,0)^T$, where $s(x)$ is the sub-differential of the absolute function; $s(x)=1,-1$ for $x>0,<0$ resp. and $s(x)=[-1,1]$ for $x=0$.

It easy to see that $\nabla g_i(x)$ is just a basis for the Euclidean space. Hence, it is linearly independent and this non-linear optimisation problem falls in the LICQ class of conditions (Linear independence constraint qualification). The KKT conditions for LICQ class is \textit{necessary} for a solution.

\textbf{Stationarity:} $\nabla F(x^*)=\sum_i \mu_i^*\nabla g_i (x^*)=(\mu_1^*s (x^*-x_s)_1,\dots,\mu_i^* s(x^*-x_s)_i,\dots, \mu_D^* s(x^*-x_s)_D)^T$ where $\mu_i\geq 0$. When $x^*$ is in the interior of the $B_\infty(x_s,\delta)$ then it means the dynamics reaches equilibrium $\dot{x^*}=\nabla F(x^*)=0$, hence stationarity is satisfied for $\mu_i=0$.

When $x^*\in \partial B_\infty(x_s,\delta)$ then some of the inequality constraints are active (represented by $\bs{J}(x^*)$) and the dynamic system reaches equllibrium if and only if $\dot{x^*}=\nabla_x F(x^*)\odot(\bs{1}-\bs{J}(x^*))=0$; so the gradient will be along the diagonals or the sides of the cube determined by the active constraint. Stationary condition is again satisfied, with $\mu_i=0$ if $g_i(x^*)<0$.

\textbf{Complimentary Slackness:}
This also implies $\mu_ig_i(x)=0$ for all $i$ (for active constraint $g_i(x)=0$, for inactive constraints $\mu_i=0$). 

\textbf{Duality:}
Also note that by definition of $\pgfd{x_s}$ and continuity of the gradient; the gradient sign is consistent to the $\sign(x^*-x_s)$ enforcing duality $\mu_i\geq0$ for all $i$.
\textbf{Primal feasibility} is trivially true as our CDS is within $B_\infty(x_s,\delta)$.

\textbf{SOSC}: Given the active constraints $\bs{C}(x^*)=[(g_i(x^*))_{i\in I}]$ at $x^*$; the set of feasible directions is  $$FD=\{d\in \cl{X}:(D\bs{C}(x^*)d)_i\leq 0 \, \forall i\in I\}$$
The feasible directions in our case must be along $\bs{P}(x^*)$.
When $x^*$ is at the boundary away from saddle points, it follows a non-decreasing trajectory essentially along the feasible directions \eqref{progected_gradient_flow}. Hence, hessian along the feasible direction ($d^T\nabla_{xx}F(x)d<0$) must be negative , as the gradient along the trajectory goes from positive to zero at equilibrium ($\dot{F}(x(t))>0\to \dot{F}(x^*)=0$). The second order sufficient condition requires us to show $d^T\nabla_{xx}L(x,\mu)d>0$, where $L(x,\mu)$ is the Lagrangian, but for $\ell_{\infty}$ constraints the second derivatives corresponding to the constraints are zero hence $-d^T\nabla_{xx}L(x,\mu)d=d^T\nabla_{xx}F(x)d<0$. For $x^*$ inside the ball, all constraints are inactive and by our construction $x^*$ can't be a saddle point but it is equilibrium to an increasing trajectory hence $\nabla_{xx}F(x)$ is just a negative definite matrix.
\vspace{0.2cm}

\underline{\textbf{$\ell_{p}; 1\leq p<\infty$ constraint:}}

This is a much simpler case with only one inequality constraint $g_1(x)=\norm{x-x_s}^p_p-\delta^p$. We can again verify the KKT and SOSC to prove optimal solution. 

\textbf{Stationarity:} In this case stationarity is achieved if the gradient $\nabla F(x^*)=\mu\nabla g_1(x^*)$, essentially the gradient must be proportional to the normal vector of the $\ell_p$ ball. This is naturally satisfied as the ODE of our CDS attains equillibrium iff $\dot{x^*}=\bs{P}\nabla F(x^*)=0$; which implies that $\nabla F(x^*)$ must be orthogonal to the surface (as $\bs{P}$ is the projection matrix to $\partial B_p(x_s,\delta)$) or parallel to the normal vector $\nabla{g_1}(x^*)$.

 If $x^*$ is in the interior of $B_p(x_s,\delta)$, then the ODE at equillibria $\dot{x^*}=\bs{P}\nabla F(x^*)=\bs{I}\nabla F(x^*)=0$ implies the $\nabla{F}(x^*)=0$; hence stationarity is satisfied for $\mu=0$.
 \textbf{Complimentary slackness, duality, primal feasibility} can be verified by the same arguments used for $\ell_{\infty}$ case.

\textbf{SOSC:}
When $x^*$ is in the interior of the ball the argument is the same as $\ell_{\infty}$ constraint case. At boundary again using similar arguments we have $d^T\nabla_{xx}F(x)d<0; d\in FD$. Also, note that $\nabla_{xx}g_1(x)$ is positive definite (hessian of the norm) almost surely, so the Lagrangian $L(x,\mu)=-F(x)+\mu g_1(x)$ is also enforced to have $d^T\nabla_{xx}L(x,\mu)d>0; d\in FD$ hence the second order sufficient condition is satisfied.
With both KKT and SOSC conditions satisfied $x^*$ is the optimal solution.

The optimality of $\lambda^f(x,y)$ as a $\ell_p(\delta)$ function attack follows as $$\max\limits_{\lambda \in \cl{F}^p_{\delta}(\bpfc)}\mathbbm{E}\cl{L}(f(x+\lambda(x,y)),y)\leq \mathbbm{E}\max\limits_{\lambda \in \cl{F}^p_{\delta}(\bpfc)}\cl{L}(f(x+\lambda(x,y)),y).$$
\end{proof}

\begin{proof}[Proof of Theorem \ref{thm:NNapprox}]
We will first get an $\epsilon$ error bound for any $f\in \onephol$. Then we can extend the result to the product space for any $\vec{f}=(f^{(1)},\dots,f^{(D)})\in\phol$.

\textbf{$\epsilon$ bound for $f$}:
Consider the minimum-$\gamma$ covering $\cl{I}$ of $\cl{X}$. Using lemma \ref{lem:covering_division}, $\cl{I}=\cup_{i=1}^{5^D}\cl{I}_i$.
Let $\Xi:\cl{I}\to 2^{card (\mathcal{I})}$, $\Xi(I)=\{I'\in \mathcal{I}: I\oplus3\gamma/2\cap I'\neq \phi\}$ be a set function finding neighbouring (radius $3\gamma/2$) cubes of $I$. Consider $\tilde{\Xi}(I)=\Xi(I)\setminus\cl{I}_{\partial}$ defined for all $I\in \cl{I}\setminus{I}_{\partial}$. (where $\cubebd$ is defined in lemma 
\ref{lem:volume_bndry})
\\
Using Remark 19 of \cite{Nakada2019AdaptiveDimensionality}, one can construct a Neural network $R(\phi^{simul}_{\epsilon/2}):\bb{R}^D\to\bb{R}^{\card(\cl{I})}$, and each of its output dimension represents a neural network approximation of the Taylor-polynomial associated with $f>0$. Let $f_I$ be the Taylor-polynomial approximation of $f$ centred at $\bar{x}_I$, where $\bar{x}_{I}$ is the center of the hypercube $I$. Denote a bijective mapping $\psi:\cl{I}\to \{1,2,\dots,\card(\cl{I})\}$ which maps any hypercube $I\in\cl{I}$ to the associated output dimension of $R(\phi^{simul}_{\epsilon/2})$. Then $R(\phi^{simul}_{\epsilon/2})_{\psi(I)}$ is an $\epsilon/2$ approximation of $f_{I}$. 

For any general $f$, consider a shifted version of the function which is always positive  $f_1=f+M+1$ where $M$ is the upper bound  of $|f|$ based on H\"older norm (as $R(\phi^{simul}_{\epsilon/2})$ is valid for approximation of non-negative functions).
For a cube $I\in\cubebd$,  we define a constant zero neural network $R_0(x)$ with all weights and bias being zeros.
For cubes $I\in \cl{I}\setminus\cubebd$, we define the neural network approximation to the Taylor polynomial around the center of the cube as $R(\phi^{simul}_{\epsilon/2})_{\psi(I)}$. The piece-wise H\"older function $f\in\onephol$ restricted to any of the hypercubes $I\in \cl{I}\setminus\cubebd$, will belong to $\cl{H}(\omega,I)$ (\ref{def:holder}). In terms of architecture requirements from Remark 19 of \cite{Nakada2019AdaptiveDimensionality}, we have $W=C_1\epsilon^{-d/\omega},w=C_2\epsilon^{-d/\omega},\kappa=\cl{O}(\epsilon^{-s}),L=C_3$ for $R(\phi^{simul}_{\epsilon/2})$. Note that these neural networks approximate the Taylor polynomial of $f_1$ at the center of the cube, for all $x\in I$ with $\epsilon/2$ accuracy. And outside the cube it is strictly less than the polynomial it is approximating and 0 after a lag of $\gamma/2$.

As the sum and max functions can be replicated by the Relu neural networks, we can then mimic the proof of Theorem 5  for the piece-wise H\"older case \cite{Nakada2019AdaptiveDimensionality}, as follows.
Consider the neural network realisation of $$R(\phi_{\epsilon}^{f_1})=\max_{i\in 5^D }\{\sum_{I\in \mathcal{I}_i\setminus\cubebd}R(\phi^{simul}_{\epsilon/2})_{\psi(I)}+R_0(x)\},$$
hence for any $x\in I$ for some $I\in\mathcal{I}\setminus\cubebd$ we have 
\begin{align*}
    \abs{R(\phi_{\epsilon}^{f_1})-f_1}&=\max\{\max_{I'\in \tilde{\Xi}(I)} R(\phi^{simul}_{\epsilon/2})_{\psi(I')}-f_1(x),f_1-\max_{I'\in \tilde{\Xi}(I)} R(\phi^{simul}_{\epsilon/2})_{\psi(I')}\}\\
    &\leq \max_{I'\in \tilde{\Xi}{I}} \abs{R(\phi^{simul}_{\epsilon/2})_{\psi(I')}-f_1(x)}\\
     &\leq \max_{I'\in \tilde{\Xi}{I}} \abs{R(\phi^{simul}_{\epsilon/2})_{\psi(I')}-f_{I'}(x)}+\max_{I'\in \tilde{\Xi}{I}}\abs{f_{I'}(x)-f_1(x)}\\
     &\leq \epsilon/2 +D^{\omega}M(\frac{3\gamma}{2})^{\omega},
\end{align*}
where the last inequality uses Remark 19 \cite{Nakada2019AdaptiveDimensionality} and lemma \ref{lem:taylor_approx}.
For side-length of the cube $\gamma=D^{-1}(3M)^{\frac{-1}{\omega}}\epsilon^{\frac{1}{\omega}}$, we then have $\abs{R(\phi_{\epsilon}^{f_1})-f_1}\leq \epsilon$\\
The above approximation works for all cubes upto the measure of the cubes in $\cubebd$. As we are working with probability measures the $\mu(\cubebd)= \mathcal{O}(Vol(\cubebd))=\mathcal{O}(\gamma)=\mathcal{O}(\epsilon^{\frac{1}{\omega}})$ is a consequence of lemma \ref{lem:volume_bndry}.

Note that the composition of the sum and maximum function as neural network only increases $L,W,w$ by some constant depending on $\card(\cl{I})$ and $5^D$.
 The number of layers $L$, number of non-zero weights $W$ and in turn the width $w$ are additive w.r.t composition of neural networks. Also, $\kappa$ is the maximum of the $\kappa's$ of the constituent neural nets being composed. $\kappa$ for the sum and and max neural network are just some constants dependent on $M$ and independent of $\epsilon$. For small enough $\epsilon$ architecture requirement for $R(\phi_{\epsilon}^{f_1})$ is same as $R(\phi^{simul}_{\epsilon/2})$ up to constants independent of $\epsilon$; $W=C_1'\epsilon^{-d/\omega},w=C_2'\epsilon^{-d/\omega},\kappa=\cl{O}(\epsilon^{-s_1}),L=C_3'$. Similarly, one can  get a neural network realisation of $f$ by taking an affine transformation  $R(\phi_{\epsilon}^f)=R(\phi_{\epsilon}^{f_1})-M-1$ which adds constant additional layers and constant number of non-zero weights.
 
 In conclusion, $\epsilon$ bound for $f_1$ above leads to $\epsilon$ bound for $f$:
 $$\abs{R(\phi_{\epsilon}^{f})-f}=\abs{R(\phi_{\epsilon}^{f_1})-f_1}\leq \epsilon.$$
\textbf{$\epsilon$ bound for $\vec{f}$}:
Consider a concatenation of neural nets $R(\phi_{\epsilon}^{\vec{f}})=[R(\phi_{\epsilon}^{f^{(1)}}),\dots, R(\phi_{\epsilon}^{f^{(D)}})]$; where $R(\phi_{\epsilon}^{f^{(i)}})$ is an $\epsilon$- neural network approximation of $f^{(i)}\in \bar{\cl{H}}(\omega_i,\cl{X})$ proved previously. \\It is easy to see that the architecture requirements  for the $R(\phi_{\epsilon}^{\vec{f}})$ are $W=\sum C_1^{(i)}\epsilon^{-d/\omega_i}=\cl{O}(\epsilon^{-d/\omega^*}); w=\sum C_2^{(i)}\epsilon^{-d/\omega_i}=\cl{O}(\epsilon^{-d/\omega^*}); \kappa=\cl{O}(\max(\epsilon^{-s_{1}},\dots,\epsilon^{-s_{d}}))=\cl{O}(\epsilon^{-s^*}); L=\sum C_3^{(i)}=C_3^*$. Where $\omega^*=\min\{\omega_i\}, s^*=\max\{s_i\}$.\\
Corresponding to each $R(\phi_{\epsilon}^{f^{(i)}})$ there is a measurable set  $\nu_i$ with $\mu(\nu_i)=\cl{O}(\epsilon^{1/\omega_i})$ such that:
$$\sup_{x\in \cl{X}_s\setminus \nu_i}\abs{R(\phi_{\epsilon}^{f^{(i)}})(x)-f^{(i)}(x)}\leq \epsilon.$$
This implies for $\nu=\cup_i \nu_i$; $\mu(\nu)=\cl{O}(\epsilon^{1/\omega^*})$ and the suitable architecture  requirements:
\begin{align*}
    \sup_{x\in \cl{X}_s\setminus\nu}||R(\phi_{\epsilon}^{\vec{f}})(x)-\vec{f}(x)||_{\infty}=\sup_{x\in \cl{X}_s\setminus\nu}\sup_{i\in\{1,\dots,D\}} \abs{R(\phi_{\epsilon}^{f^{(i)}})(x)-f^{(i)}(x)}\leq \epsilon.
\end{align*}
\end{proof}
\begin{proof}[Proof of lemma \ref{lem:motivation piece wise holder}]
   WLOG, $x\in W_s(x_0)$ for some $x_0\in E$. Then by definition, $\lambda^{pgf}(x,y)\in W_s(x_0)$.
    $$\lambda^{pgf}(x,y) = \lim_{t\to\infty}\pgfd{x}(t)x-x=\int_0^{\infty}\underbrace{\bs{P}(X(t,x))}_{h_1(t,x)}[\underbrace{\nabla F(X(t,x))+\sum_{\eta\in S}\psi_{\epsilon}^{\eta}(X(t,x))\nu^{\eta}_{max}}_{h_2(t,x)}]\, dt-x$$
    Where, $X(t,x)=x+z(t,x)$ ; $\dot{z}(t,x)=h_1\cdot h_2(t,x)$ and $z(0)=0$. Let, $t_b(x)= \inf\{t: X(t,x)\in\partial B_p(x,\delta)\}$. \\
    Before time $t_b$ constraints are inactive and $\bs{P}(X(t,x))=\bs{I}$.
    \begin{align*}
        \lambda^{pgf}(x,y)&=\underbrace{\int_0^{t_b(x)}h_2(t,x)\, dt}_{I_0}+\underbrace{\int_{t_b(x)}^{\infty}h_1.h_2 (t,x)\,dt}_{I_1} -x\\
        &=X(t_b,x)+X(\infty,\underbrace{x_1}_{X(t_b,x)}) -x
    \end{align*}
    As, $\nabla F (X),\psi_{\epsilon}^{\eta}(X)$ are at least $C^k$ smooth in $X$, using Theorem 3.1 (Differentiation w.r.t. initial condition) of \cite{Graef2018OrdinaryProblems} inductively, we get $I_0=X(t_b,x)$ is $C^k$ smooth w.r.t. $x$.
    For $t>t_b$ and even $p<\infty$, note that $P(X)$ is $C^{\infty}$ smooth w.r.t. $X$ because $\norm{X-x}_p^p-\delta^p$ is $C^{\infty}$ w.r.t $X$. Hence, using Theorem 3.1\cite{Graef2018OrdinaryProblems} inductively we get $I_1$ is $C^k$ smooth w.r.t. $x_1=X(t_b,x)$. Using the chain rule, then we have $I_1$ is $C^k$ smooth w.r.t. $x$. For, even $p<\infty$, $\lambda^{pgf}(x,y)$ is then $C^k$ smooth w.r.t. $x$.\\
    In the case $p=\infty$ or odd, for $t>t_b$, $\bs{P}(X)$ is only smooth where $\sign(X(t,x)-x)$ is consistent. Let, $t_1,t_2,\dots, t_m$ be the times where the sign changes and $t_{m+1}=\infty$. Then,
    \begin{align*}
         \lambda^{pgf}(x,y)&=\underbrace{\int_0^{t_b}h_2(t,x)\, dt}_{I_0}+\underbrace{\int_{t_b}^{t_1}h_1.h_2 (t,x)\,dt}_{I_1} +\sum_{l=2}^{m+1} \underbrace{\int_{t_{l-1}}^{t_{l}}h_1.h_2 (t,x)\,dt}_{I_l} -x\\
         &=X(t_b,x)+X(t_1-t_b,\underbrace{x_1}_{X(t_b,x)})+\sum_{l=2}^{m+1}X(t_l-t_{l-1},\underbrace{x_l}_{X(t_{l-1},x)})-x
    \end{align*}
Using similar arguments as before we know that, each $I_l$ is $C^k$ smooth w.r.t. $x_l$. Using induction:\\
 \textbf{Base step}: $I_0=x_1$ is $C^k$ smooth w.r.t. $x$, $I_1$ is $C^k$ smooth w.r.t. $x_1$ hence using chain rule $I_1$ is $C^k$ smooth w.r.t. $x$. In turn $X(t_1,x)$ is $C^k$ smooth w.r.t. $x$\\
 \textbf{Induction Step:} If $x_l=X(t_{l-1},x)$ is $C^k$ smooth then $X(t_l,x)$ is $C^k$ smooth.\\
 $x_{l+1}=X(t_l,x)=X(t_{l-1},x)+I_l$, but $I_l$ is $C^k$ smooth w.r.t. $X(t_{l-1},x)$, hence using chain rule $I_l$ and consequently $x_{l+1}$ is $C^k$ smooth w.r.t. $x$.

\end{proof}

\begin{proof}[Proof of Theorem \ref{thm:convergence}]
The proof relies on the application of theorem 3.1 \cite{Metzger2022AdversarialEstimators} and almost identical to the proof  of their theorem 3.2 with the major difference being the function classes considered in our framework. This results in change of $\epsilon_n,\Bar\epsilon_n$ terms which correspond to the expected difference of the adversarial loss evaluated at the sieve projection and the true parameters. In our case the sieve projection is just the neural network approximation using $n$ samples. Using assumption $A3$ and corollary \ref{cor:NN-approx} we get: \begin{align*}
    \epsilon_n =\sup_{f\in\cl{C}_n}\mathbb{E}[\cl{L}(f,\lambda^f, \cl{Z})-\cl{L}(f,\pi_n\lambda^f, \cl{Z})]&\prec \sup_{f\in\cl{C}_n} \|\pi_n \lambda^f-\lambda^f\|_{\widetilde{{\mathcal{X}}}}^2 + \mathbb{P}^{2\omega^*}(x\not\in\widetilde{{\mathcal{X}}})\\
    &\prec \min(W_n,w_n,D^{-1/p}\delta)^2+\min{(W_n,w_n)}^{-2\omega^*/d^*}\\
     &\prec n^{-2\omega^*\underline r/d^*(\underline r+2)}=n^{-2/(\underline r+2)}
\end{align*}  
where $\mathbb{P}(x\not\in\widetilde{{\mathcal{X}}})$ corresponds to the small measurable set where the neural network can't approximate (the volume which covers the pieces of our piece-wise H\"older functions). Also, $D^{1/p}\delta$ is a hard bound for the function class $\pfc$, but when considering very wide networks for approximation, that bound is never exceeded. \\
Similarly we can bound $\Bar{\epsilon_n}$ using $A2$ and Corollary \ref{cor:NN-approx}:
\begin{align*}
    \Bar\epsilon_n=\mathbb{E}[\cl{L}(\pi_nf^*,\lambda^{f^*}, \cl{Z})-\sup_{f\in\cl{C}_n}\cl{L}(f^*,\lambda^{f^*}, \cl{Z})]&\prec \|\pi_n f^*-f^*\|_{\widetilde{{\mathcal{X}}}}^2\\
    &\prec \min(W_n,w_n)^2\prec  n^{-2\omega^*\underline r/d^*(\underline r+2)}=n^{-2/(\underline r+2)}
\end{align*}
The rest of the proof follows using arguments of theorem 3.2 \cite{Metzger2022AdversarialEstimators}.
\end{proof}

\section{Additional experiments and details}
\label{A:exp_details}
\begin{figure}[h]
    \centering
    \includegraphics[scale=0.7]{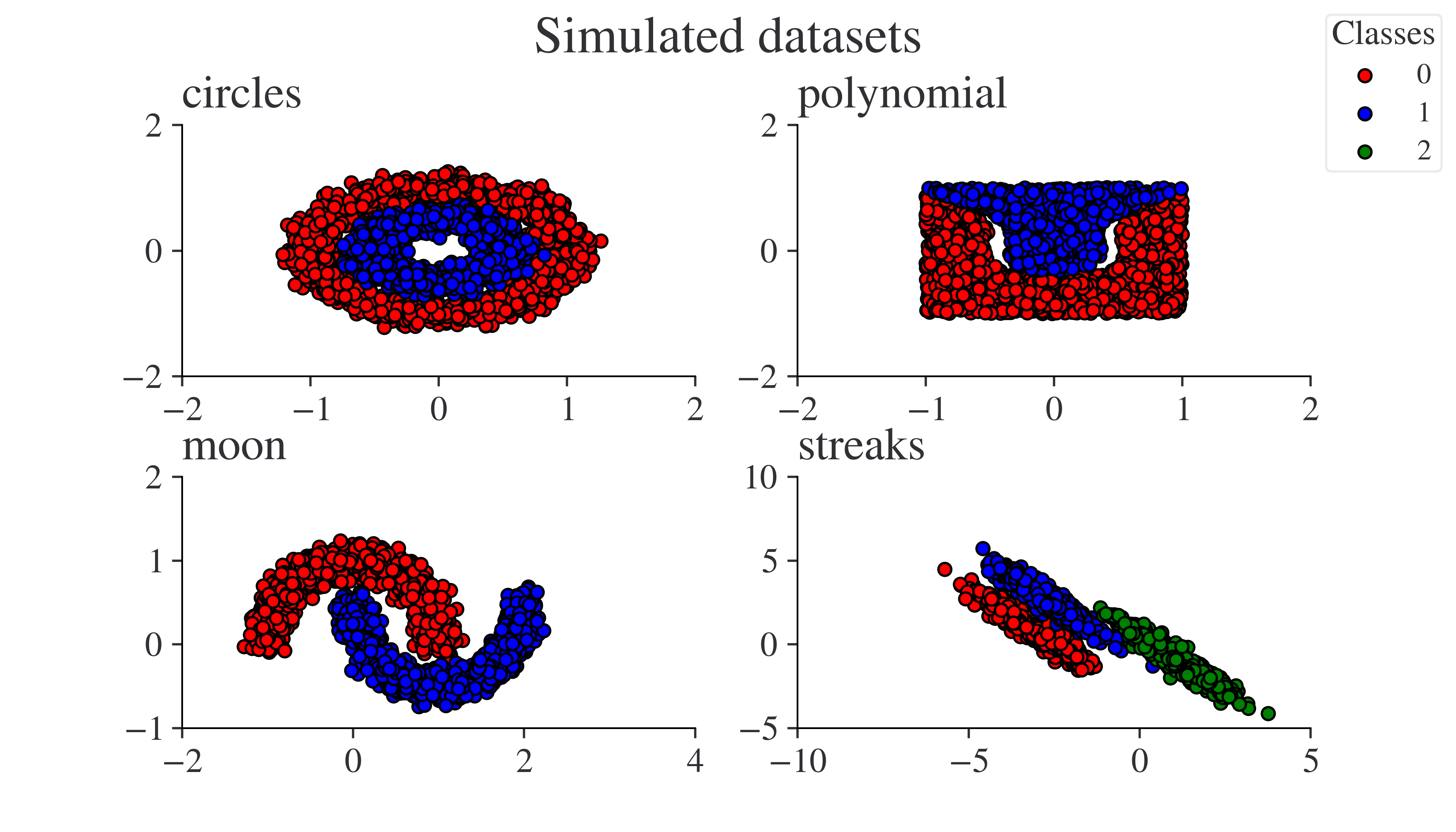}
    \caption{Simulated 2D datasets for classification setting.}
    \label{fig:dataset_classification}
\end{figure}

Code for simulated dataset:
\url{https://github.com/rhaldarpurdue/Adversarial-NN-simulations}

\subsection{Architecture details}
We use Pytorch libraries for all of our implementations. For optimization, we use the Adam method proposed by \cite{Kingma2014Adam:Optimization}.
\subsubsection*{Classification:}
$D=dim(x)$ and $C$ is the total number of classes; $c\in \{1,\dots,C\}$ is the label index for each sample. The attack generator $\lambda^f(x,y)\coloneqq S_c(x)\odot \cl{D}_c(\cl{E}(x))$ if $y=c$. \\
$\cl{E},\cl{D}_c,S_c$ are the encoder, decoder, scaler resp.. The scaler is responsible for the magnitude of the attack. (Table \ref{tab:arch1})
\begin{table}[h]
\small
\centering
\caption{Pytorch Architecture for classification setting}
\label{tab:arch1}
\begin{tabular}{|c|l|l|l|} 
\hline
\multirow{2}{*}{$f$}                                                                                                                                                                 & \multicolumn{3}{c|}{$\lambda^f$}                                                                                                                                                                                                                                                                                                                                                                          \\ 
\cline{2-4}
                                                                                                                                                                                     & $\cl{E}$                                                                                      & $\cl{D}_c$                                                                                                                                                                                                     & $S_c$                                                                                              \\ 
\hline
\multicolumn{1}{|l|}{\begin{tabular}[c]{@{}l@{}}Linear(2, 50)\\LeakyReLU(0.01),\\Linear(50,100),\\LeakyReLU(0.01),\\Linear(100,15),\\LeakyReLU(0.01),\\Linear(15, $C$)\end{tabular}} & \begin{tabular}[c]{@{}l@{}}Linear(2, 50),\\LeakyReLU(0.01),\\Linear(50,100)\end{tabular} & \begin{tabular}[c]{@{}l@{}}LeakyReLU(0.01),\\Linear(100,50),\\LeakyReLU(0.01),\\Linear(50,15),\\LeakyReLU(0.01),\\Linear(15,2),\\$\sqrt{D}\delta\cdot$L2Normalise(),\\Clamp($-\delta,\delta$) For~$\ell_{\infty}$\end{tabular} & \begin{tabular}[c]{@{}l@{}}Linear(2,20),\\LeakyReLU(0.01),\\Linear(20,1),\\Sigmoid()\end{tabular}  \\
\hline
\end{tabular}
\end{table}

\texttt{L2Normalise()} is a user defined function which projects the input into a unit circle.
\subsubsection*{Regression:}
In similar spirit, we construct the attack network for regression setting as $\lambda^f(x)\coloneqq S(x)\odot H(x)$. (Table \ref{tab:arch2})
\begin{table}[h]
\small
\centering
\caption{Pytorch Architecture for regression setting}
\label{tab:arch2}
\begin{tabular}{|c|l|l|} 
\hline
\multirow{2}{*}{$f$}                                                                                                                                & \multicolumn{2}{c|}{$\lambda^f$}                                                                                                                                                                                                                                \\ 
\cline{2-3}
                                                                                                                                                    & $H$                                                                                                                                                        & $S$                                                                                                \\ 
\hline
\multicolumn{1}{|l|}{\begin{tabular}[c]{@{}l@{}}Linear($D$,50),\\LeakyReLU(0.01),\\Linear(50,20),\\ nn.LeakyReLU(0.01),\\Linear(20,1)\end{tabular}} & \begin{tabular}[c]{@{}l@{}}Linear($D$,50),\\LeakyReLU(0.01),\\Linear(50,50),\\LeakyReLU(0.01),\\Linear(50,$D$),\\$\delta\cdot$L2Normalise()~~\end{tabular} & \begin{tabular}[c]{@{}l@{}}Linear(2,20),\\LeakyReLU(0.01),\\Linear(20,1),\\Sigmoid()\end{tabular}  \\
\hline
\end{tabular}
\end{table}

\subsubsection*{Details for $f_{PGD},\lambda_{PGD}$}
For both regression and classification settings $f_{PGD}$ uses the same architecture as $f$ mentioned in Tables \ref{tab:arch1},\ref{tab:arch2}.\\
$\lambda_{PGD}$ is generated using \eqref{pgd_attack}, with $\gamma=0.01$ with $T=50$ the total number of iterations/steps. Also, for each sample we try multiple initialisation, then we choose the best performing attack for each sample based on which one has the maximum loss. The number of such initialisation or restarts is set to $10$ in our experiments.
\subsection{Stabilising effect of $\alpha$}
\begin{figure}[h]
    \centering
    \includegraphics[scale=0.5]{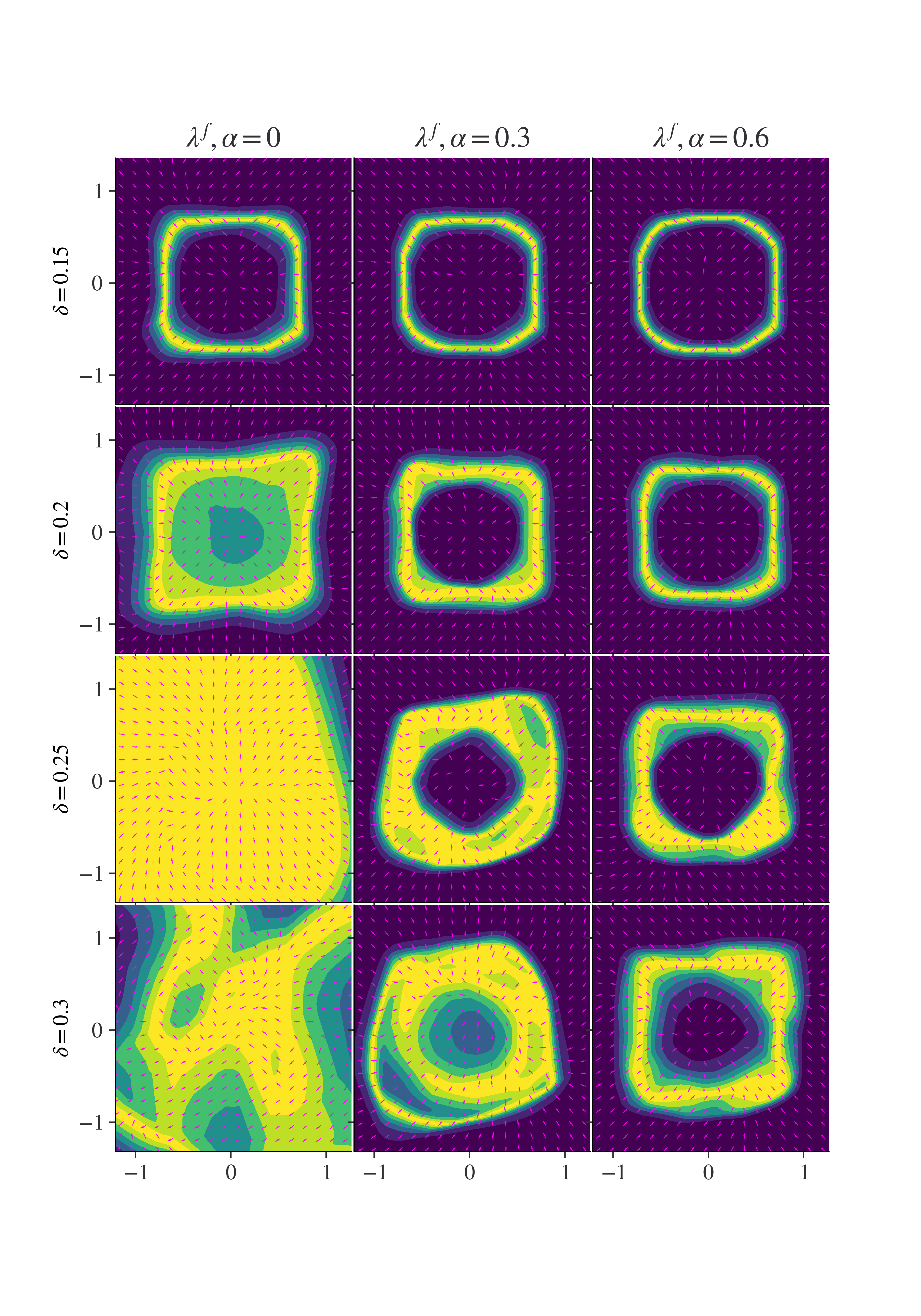}
    \caption{$\ell_{\infty}(\delta)$, Loss contour plot with learnt attack direction $\lambda^f$ compared for $\alpha=0$ (no-original data) vs $\alpha=0.3,0.6$ ($30\%,60\%$ weightage to original data). Dataset:\texttt{ circles}. Even for unfeasible $\delta\geq 0.25$ (attack strength so large that samples cross the true decision boundary after perturbation), introducing $\alpha\neq0$ helps stabilise the EDB}
    \label{fig:alpha_for_various_delta}
\end{figure}
\end{document}